\let\displaystyle\textstyle
\newtheorem{theorem}{Theorem}
\newtheorem{lemma}[theorem]{Lemma}
\theoremstyle{definition}
\title{\LARGE \bf  Kalman Filters on Differentiable Manifolds}
\author{Dongjiao He$^{1}$, Wei Xu$^{1}$, Fu Zhang$^{1}$$^{*}$

\thanks{$^{*}$Corresponding author.}
\thanks{This project is supported by Hong Kong RGC ECS under grant 27202219.}
	\thanks{$^{1}$All authors are with Department of Mechanical Engineering, University of Hong Kong. {\tt\small \{hdj65822, xuweii, fuzhang\}@hku.hk}}
}%
\begin{document}	
	\maketitle
	\begin{abstract}
Kalman filter is presumably one of the most important and extensively used filtering techniques in modern control systems. Yet, nearly all current variants of Kalman filters are formulated in the Euclidean space $\mathbb R^n$, while many real-world systems (e.g., robotic systems) are really evolving on manifolds. In this paper, we propose a method to develop Kalman filters for such on-manifold systems. Utilizing $\boxplus\backslash\boxminus$ operations and further defining a $\oplus$ operation on the respective manifold, we propose a canonical representation of the on-manifold system. Such a canonical form enables us to separate the manifold constraints from the system behaviors in each step of the Kalman filter, ultimately leading to a generic and symbolic Kalman filter framework that are naturally evolving on the manifold. Furthermore, the on-manifold Kalman filter is implemented as a toolkit in $C$++ packages which enables users to implement an on-manifold Kalman filter just like the normal one in $\mathbb R^n$: the user needs only to provide the system-specific descriptions, and then call the respective filter steps (e.g., predict, update) without dealing with any of the manifold constraints. The existing implementation supports full iterated Kalman filtering for systems on manifold $\mathcal{M} = \mathbb{R}^m \times SO(3) \times \cdots \times SO(3) \times \mathbb{S}^2 \times \cdots \times \mathbb{S}^2 $ or any of its sub-manifolds, and is extendable to other types of manifold when necessary. The proposed symbolic Kalman filter and the developed toolkit are verified by implementing a tightly-coupled lidar-inertial navigation system. Results show that the developed toolkit leads to superior filtering performances and computation efficiency comparable to hand-engineered counterparts. Finally, the toolkit is opened sourced at~\url{https://github.com/hku-mars/IKFoM} to assist practitioners to quickly deploy an on-manifold Kalman filter. 
    \end{abstract}
    
\section{Introduction}

Kalman filter and its variants have been widely used in modern control systems. However, Kalman filters typically treat the state space as a Euclidean space $\mathbb{R}^n$ while many real-world systems (e.g., robotic systems) usually have their states evolving on manifolds (e.g., rotation group $SO(3)$). To circumvent the constraints imposed by the manifold, an elegant and effective way is to perform Kalman filtering steps (i.e., predict and update) in the error-state, i.e., the error-state extended Kalman filter (ESEKF), which have been used in various robotic applications such as attitude estimation~\cite{markley2003attitude,trawny2005indirect,markley2014fundamentals}, online extrinsic calibration~\cite{Mirzaei2008A, kelly2011visual}, GPS/IMU navigation~\cite{sola2017quaternion}, visual inertial navigation\cite{Kleinert2010Errorstate, Mourikis2007ICRA,Li2013EKFbasedVIO,SLynen2013sensorfusion, bloesch2017iterated, huai2018robocentric} and lidar-inertial navigation \cite{hesch2010laser, qin2020lins, xu2020fast}. The basic idea of ESEKF is to repeatedly parameterize the state trajectory $\mathbf{x}_{\tau} \in \mathcal{M}$ by an error state trajectory $\delta \mathbf{x}_{\tau | k} \in \mathbb{R}^n$ from the current state predict ${\mathbf{x}}_{\tau|k}$: $\mathbf{x}_{\tau}  = {\mathbf{x}}_{\tau|k} \boxplus \delta \mathbf{x}_{\tau | k} $. Then a normal extended Kalman filter (EKF) is performed on the error state trajectory $\delta \mathbf{x}_{\tau|k}$ to update the error state, and adds the updated error state back to the original state on manifolds. Since this error is small, minimal parameterization (e.g., axis-angle and Euler angle) can be employed without concerning the singularity issue (see Fig. \ref{fig:error_state}). In addition, compared to other techniques such as unscented Kalman filter (UKF), the efficiency of the extended Kalman filter is higher. With the superiority of accuracy, stability and efficiency, the ESEKF provides an elegant Kalman filter framework for nonlinear robotic systems.

Despite of these advantages, an ESEKF is usually much more difficult than normal EKFs for a certain on-manifold system. Due to the lack of canonical representation of systems on manifolds, existing ESEKFs are designed case by case, and usually require users to fully understand its underlying principles (e.g., switching between the original state and the error state) and manually derive each step (e.g., predict, update, reset) from scratch for a customized system. Although this may seem like a mere book-keeping issue but in practice it tends to be particularly cumbersome and error-prone, especially for systems of high dimension, such as robotic swarms and systems with augmented internal states~\cite{Lu2019IMU} or multiple extrinsic parameters~\cite{lin2020decentralized}. Besides system dimension, difficulty in hand-derivation also rapidly escalates when error-state is coupled with iteration (e.g., iterated error-state Kalman filter), which has recently found more applications in visual-inertial \cite{bloesch2017iterated} and lidar-inertial navigation \cite{qin2020lins, xu2020fast} to mitigate the linearization error in extended Kalman filters \cite{huang2013quadratic,huang2007convergence}. 

\begin{figure}[t]
\setlength{\abovecaptionskip}{-0.1cm} 
\setlength{\belowcaptionskip}{-0.2cm} 
\vspace{-0.3cm}
	\centering
	\includegraphics[width=0.6\columnwidth]{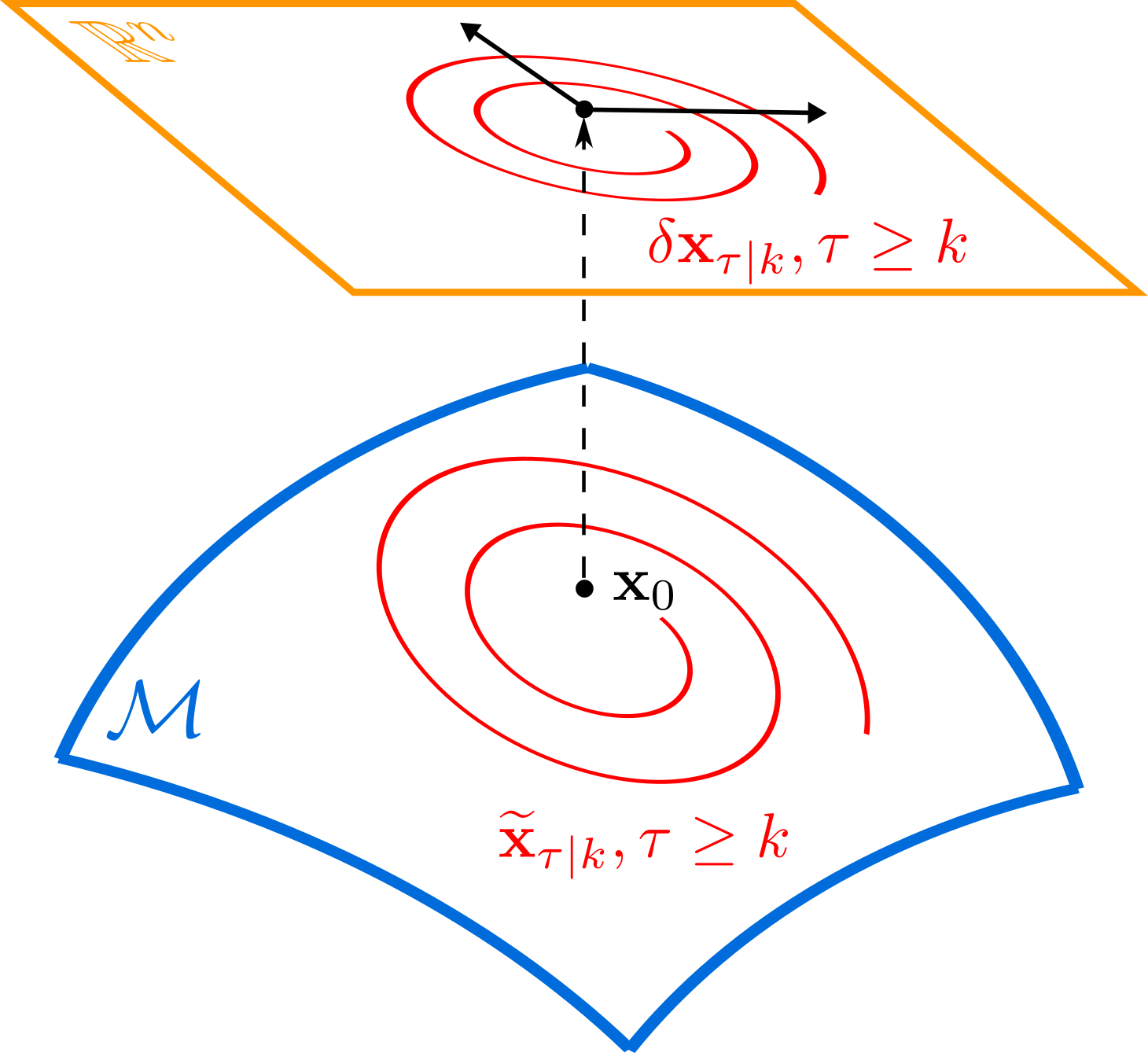}
	\caption{Illustration of the error state trajectory when $\mathcal{M}$ is a Lie group. The $\mathbb{R}^n$ space is locally homeomorphic to $\mathcal{M}$ space at the identity $\mathbf{x}_0$. $\delta \mathbf{x}_{\tau|k}$ is a minimal parameterization of the error state $\widetilde{\mathbf{x}}_{\tau|k} = \mathbf{x}_{\tau|k}^{-1} \cdot \mathbf{x}_{\tau} \in\mathcal{M}$.
		\label{fig:error_state}}
	\vspace{-0.3cm}
\end{figure}

In this paper, we address above issues by naturally integrating manifold constraints into the Kalman filter framework. Specifically, our contributions are as follows: 1) We propose a canonical and generic representation for on-manifold systems in discrete time, i.e., $\mathbf{x}_{k+1} = \mathbf{x}_k \oplus \left( \Delta t \mathbf{f} (\mathbf{x}_k, \mathbf{u}_k, \mathbf{w}_k) \right)$; 2) Based on the canonical system representation, we show that in each step of a Kalman filter, the manifold constraints are well separated from the system behaviors, enabling us to integrate the manifold constraints into the Kalman filter. We further derive a fully iterated, symbolic, and error-state Kalman filter termed as {\it IKFoM} on the canonical system representation; 3) We develop an open source $C$++ package. Its main advantage is hiding all the Kalman filter derivations and the treatment of manifold constraints, leaving the user to supply system-specific descriptions only and call the respective filter steps (e.g., predict, update) in the running time; 4) We verify our formulation and implementations with a tightly-coupled lidar-inetial navigation system and on various real-world datasets.

\section{Related Work}

Many approaches have been proposed to overcome the discrepancy between Kalman filters on $\mathbb{R}^n$ and real-world systems evolving on manifolds (e.g., rotation group $SO(3)$. One straightforward way is to use a parameterization of the state with minimal dimensions \cite{grewal2007global}. This minimal parameterization unfortunately has singularities. For example, Euler angle representation of $SO(3)$ has singularities at $\pm 90^{\circ}$ rotations along the second rotation axis, and the axis-angle representation has singularities at $180^{\circ}$ of rotations \cite{lynch2017modern}. Workarounds for this singularity exist and they either avoid these parts of the state space, as done in the Apollo Lunar Module \cite{KLUMPP1974133}, or switch between alternative orderings of the parameterization each of which exhibits singularities in different areas of the state space.

Another approach is representing system states using redundant parameters (i.e., over-parameterization). For example, unit quaternion is often used to represent rotations on $SO(3)$. Yet, the over-parameterization shifts the problem from system representation to filtering algorithm: viewing the over-parameterized state as a vector in the Euclidean space and applying the Kalman filter (or its variants) will make the predicted state no longer lie on the manifold (i.e., $\mathbf q^T\!\mathbf q\! \neq\! 1$). One ad-hoc way to ensure the predicted state on the manifold is normalization. Since the normalization imposes constraints on the state, the propagated covariance should be adjusted in parallel. For example, a unit quaternion $\mathbf q^T\! \mathbf q\! =\! 1$ leads to an error satisfying $\mathbf q^T\! \delta \mathbf q \!=\! 0$, which means the error is zero along the direction $\mathbf q$ and the corresponding covariance should be adjusted to zero~\cite{Davison2003Real} too, which is therefore singular. Although Kalman filters still work with this singular covariance as long as the innovation covariance is positive definite, it is unknown if this phenomenon causes further problems, e.g., the zero-uncertainty direction could create overconfidence in other directions after a nonlinear update~\cite{hertzberg2013integrating}. An alternative way to interpret the normalization is viewing $1$ as the measurement of $\mathbf q^T \!\mathbf q$, thus one more nonlinear measurement $\mathbf h\!\left(\mathbf q\right)\!=\!\mathbf q^T \!\mathbf q$ should be added to the system. The augmented measurements will then update the covariance in the Kalman filter framework. This approach is somewhat equivalent to the first one
and hence suffers from the same problem. 

A more elegant approach is transforming the system that operates on a manifold to its equivalent error space (i.e., local homeomorphic space) which is defined as the difference between the groundtruth state and its most recent estimate. Since this error is small when the Kalman filter converges, it can be safely parameterized by a minimal set of parameters without occurring singularity. Then a normal EKF is used to update the minimally-parameterized error state, which is finally added back to the original state on the manifold. Such an indirect way to update the state estimate has different names, such as ``error state" EKF (ESEKF) \cite{sola2017quaternion}, indirect EKF \cite{trawny2005indirect}, or multiplicative EKF \cite{markley2003attitude}. ESEKF provides an elegant way to incorporate filtering techniques into systems on manifolds, and has been widely used in a variety of robotic applications~\cite{markley2003attitude,trawny2005indirect,markley2014fundamentals, Mirzaei2008A, kelly2011visual, sola2017quaternion, Kleinert2010Errorstate, Mourikis2007ICRA,Li2013EKFbasedVIO,SLynen2013sensorfusion, huai2018robocentric, hesch2010laser}. To better describe the relation between the original state on manifolds and the error state, the $\boxplus \backslash \boxminus$ operations are introduced in~\cite{Hertzberg08aframework} and widely adopted by UKFs~\cite{hertzberg2013integrating, wuest2019online} and recently iterated Kalman filters~\cite{bloesch2017iterated, qin2020lins, xu2020fast}. The $\boxplus \backslash \boxminus$ operations have also been widely used in manifold-based optimizations \cite{kummerle2011g, ceres-solver} such as calibration \cite{wagner2011rapid}, graph-SLAM \cite{Grisetti2010ITSM} and parameter identification \cite{Michael2018parameter}. 

This paper focuses on deriving a generic and symbolic Kalman filter framework for systems naturally evolving on differentiable manifolds. We propose a canonical representation of on-manifold systems, based on which a fully iterated and symbolic Kalman fitler framework is derived. For well-studied Special Orthogonal group $SO(3)$, our work eventually leads to nearly the same Kalman filter as in~\cite{markley2003attitude,trawny2005indirect,markley2014fundamentals, Mirzaei2008A, kelly2011visual, sola2017quaternion, Kleinert2010Errorstate, Mourikis2007ICRA,Li2013EKFbasedVIO,SLynen2013sensorfusion, huai2018robocentric, hesch2010laser} for a specific system (up to the discretization accuracy), but unifies all of them into one canonical form. Moreover, our work provides a general way to integrate new manifolds that are less studied, such as the 2-sphere $\mathbb{S}^2$ for modeling the bearing vector of a visual landmark \cite{bloesch2017iterated}.

The rest of the paper is organized as follows: Sec.~\ref{boxplus_method} introduces the basic concepts of differentiable manifolds and three encapsulated operations: $\boxplus\backslash\boxminus$ and $\oplus$. Sec.~\ref{cano_repres} presents the canonical representation of on-manifold systems, based on which Sec.~\ref{esekfom} derives a fully iterated and symbolic Kalman filter. Sec.~\ref{C++_imp} implements the symbolic error-state iterated Kalman filter as a $C$++ package. Experiment results are presented in Sec.~\ref{experiments}. Finally, Sec.~\ref{conclusion} concludes this paper.

\section{Preliminaries of Differentiable Manifolds}~\label{boxplus_method}

\subsection{Differentiable manifolds}

A formal definition of differentiable manifolds involves much topological concepts \cite{carmo1992riemannian, lee2013smooth}. Informally, as shown in \cite{murray1994mathematical}, a manifold of dimension $n$ is a set $\mathcal{M}$ which is locally homeomorphic to $\mathbb{R}^n$ (called the {\it homeomorphic space}). That is, for any point $\mathbf x \in \mathcal{M}$ and an open subset $U \subset \mathcal{M}$ containing $\mathbf x$, there exists a bijective function \footnote{A bigective function is one-to-one (injective) and onto(surjective).} (called {\it homeomorphism}) $\phi$ that maps points in $U$ to an open subset of $\mathbb{R}^n$. The pair $(\phi, U)$ is called a local coordinate chart. If any two charts $(\phi, U)$ and $(\psi, V)$ sharing overlaps have their composite map $\phi \ \circ \  \psi^{-1}$ being differentiable, the manifold is said a differentiable manifold. 

\subsection{The \texorpdfstring{$\boxplus\backslash\boxminus$}{Lg} operations}\label{basic_operation}

The existence of homeomorphisms around any point in a manifold $\mathcal{M}$ enables us to encapsulate two operators $\boxplus_\mathcal{M}$ (“boxplus”) and $\boxminus_\mathcal{M}$ (“boxminus”) to the manifold~\cite{Hertzberg08aframework}:

\begin{equation}~\label{e:boxplus_def_manifold_oplus}
\setlength{\abovedisplayskip}{0.15cm} 
\setlength{\belowdisplayskip}{0.15cm} 
\begin{aligned}
    \boxplus : \mathcal{M} &\times \mathbb{R}^n \rightarrow \mathcal{M}\\
    \mathbf{x}&\boxplus_{\mathcal{M}}\mathbf{u}= \leftidx{^\mathcal{M}}\!{\bm\varphi}{_\mathbf{x}^{-1}}\left(\mathbf{u}\right) \\
    \boxminus : \mathcal{M} &\times \mathcal{M} \rightarrow \mathbb{R}^n\\
    \mathbf{y}&\boxminus_{\mathcal{M}}\mathbf{x}=\leftidx{^\mathcal{M}}\!{\bm\varphi}{_\mathbf{x}}\left(\mathbf{y}\right)
\end{aligned}
\end{equation}
where $\leftidx{^\mathcal{M}}\!{\bm\varphi}{_\mathbf{x}}$ is a homeomorphism around the point $\mathbf x \in \mathcal{M}$. It can be shown that $\mathbf{x} \boxplus_{\mathcal{M}} (\mathbf{y} \boxminus_{\mathcal{M}} \mathbf{x} ) = \mathbf{y}$ and $(\mathbf{x} \boxplus_{\mathcal{M}} \mathbf u) \boxminus_{\mathcal{M}} \mathbf{x} = \mathbf{u}, \ \forall \mathbf{x},\ \mathbf{y} \in \mathcal{M}, \mathbf u \in \mathbb{R}^n$. The physical interpretation of $\mathbf{y} = \mathbf{x}\boxplus_{\mathcal{M}}\mathbf{u}$ is adding a small perturbation $\mathbf{u} \in \mathbb{R}^n$ to $\mathbf{x}\in \mathcal{M}$, as illustrated in Fig. \ref{fig:boxplus}. And the inverse operation $\mathbf{u} = \mathbf{y}\boxminus_{\mathcal{M}}\mathbf{x}$ determines the perturbation $\mathbf{u}$ which yields $\mathbf{y}\in\mathcal{M}$ when $\boxplus_{\mathcal{M}}$-added to $\mathbf{x}$. These two operators create a local, vectorized view of the globally more complex structure of the manifold.

\begin{figure}[t]
\setlength{\abovecaptionskip}{-0.1cm} 
\setlength{\belowcaptionskip}{-0.2cm} 
\vspace{-0.3cm}
	\centering
	\includegraphics[width=0.6\columnwidth]{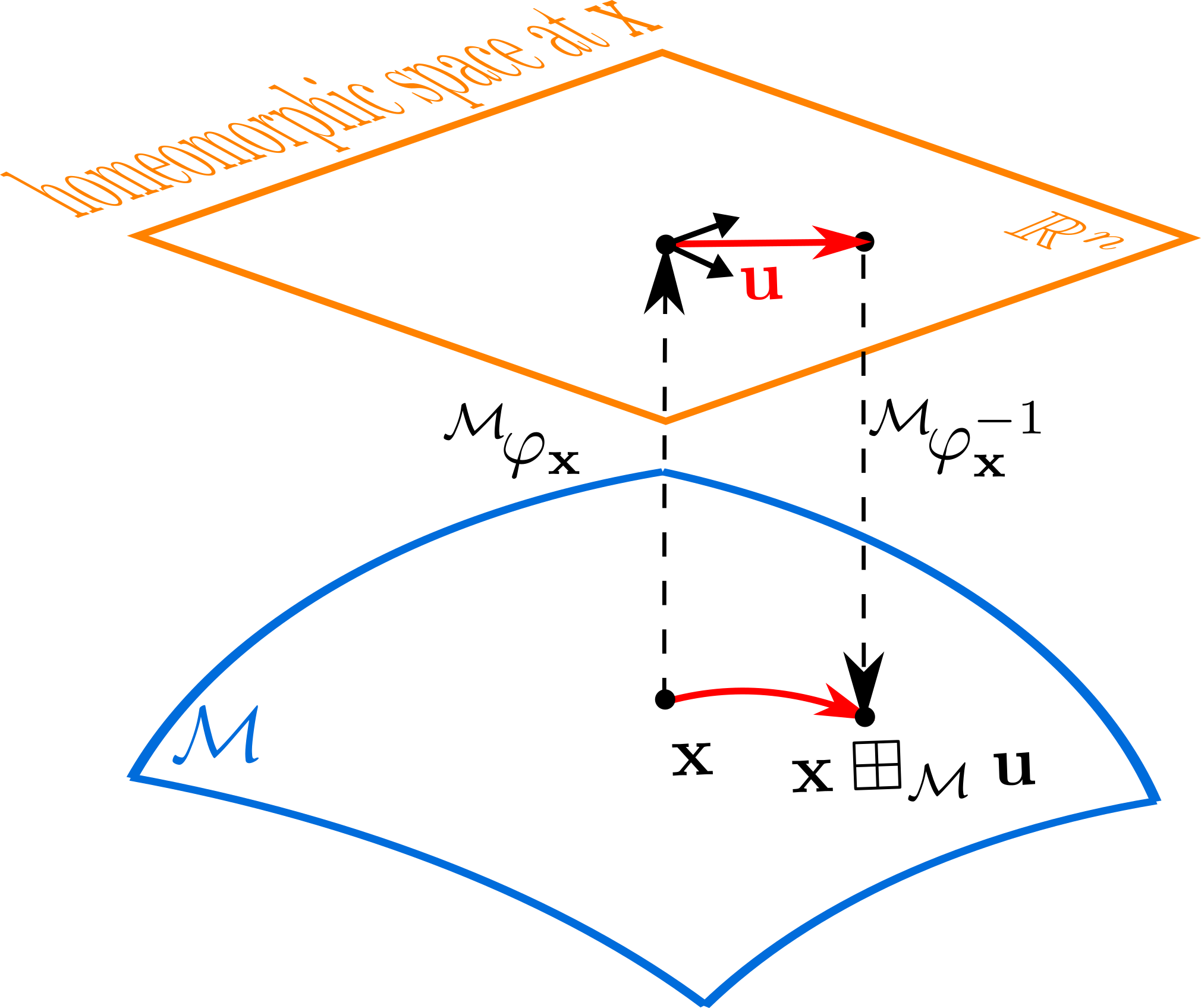}
	\caption{Illustration of the $\boxplus_{\mathcal{M}}$ operation on manifold $\mathcal{M}$, the $\mathbb{R}^n$ space is locally homeomorphic to $\mathcal{M}$ space at point $\mathbf{x}$. 
		\label{fig:boxplus}}
	\vspace{-0.3cm}
\end{figure}

As the homeomorphism is not unique for a subset of the manifold, the map $\leftidx{^\mathcal{M}}\!{\bm\varphi}{_\mathbf{x}}(\cdot)$ could also vary. Particularly, we could choose the minimal parameterization space of the tangent space as the local homeomorphic space. For a differentiable manifold, such tangent space always exists and its minimal parameterization naturally represents the perturbation. 

When $\mathcal{M}$ is a Lie group (e.g., $\mathbb{R}^n$, $SO(3)$, $SE(3)$), the tangent space possesses a Lie algebraic structure denoted as $\mathfrak{m}$ and an exponential map ${\rm exp}:\mathfrak{m} \mapsto \mathcal{M}$ \cite{murray1994mathematical}. Let $\mathfrak{f}: \mathbb{R}^n \mapsto \mathfrak{m}$ be the map from the minimal parameterization space to the Lie algebra and ${\rm Exp} = {\rm exp} \circ \mathfrak{f}$ with inverse ${\rm Log}$, the $\boxplus \backslash \boxminus$ operations are: 
\begin{equation}
\setlength{\abovedisplayskip}{0.15cm} 
\setlength{\belowdisplayskip}{0.15cm} 
\begin{aligned}\label{e:def_group}
    \mathbf{x}\boxplus_{\mathcal{M}}\mathbf{u}&=  \mathbf{x} \cdot \rm{Exp}(\mathbf{u});\quad \mathbf{y}\boxminus_{\mathcal{M}}\mathbf{x}= \rm{Log} (\mathbf{x}^{-1} \cdot \mathbf{y} )
\end{aligned}
\end{equation}
where $\cdot$ is the binary operation on $\mathcal{M}$ such that $(\mathcal{M}, \cdot)$ forms a Lie group, and $\mathbf{x}^{-1}$ is the inverse of $\mathbf{x}$ that always exists for an element on Lie groups by definition. 

When the manifold $\mathcal{M}$ is not a Lie group, there is no general guideline to find the homeomorphism between the manifold and its tangent space parameterization.  For example, the tangent space of a 2-sphere manifold $\mathbb{S}^2(r)\triangleq \{\mathbf{x}\in\mathbb{R}^3|\Vert\mathbf{x}\Vert=  r, r > 0\}$ at point $\mathbf x$ is simply the tangent plane at $\mathbf x$ (see Fig.~\ref{fig:S2_boxplus}). For a point $\mathbf x \in \mathbb{S}^2(r)$, the perturbation can be achieved by rotating along a vector in the tangent plane, the result would still remain on $\mathbb{S}^2({r})$ as required. The rotation vector in the tangent plane is minimally parameterized by $\mathbf u \in \mathbb{R}^2$ under two basis vectors $\mathbf{b}_1, \mathbf{b}_2 \in \mathbb{R}^3$ spanning the tangent plane. That is,
\begin{equation}
\setlength{\abovedisplayskip}{0.15cm} 
\setlength{\belowdisplayskip}{0.15cm} 
    \label{e:xboxplusu_S2}
    \mathbf{x}\boxplus\mathbf{u} \triangleq \mathbf{R}(\mathbf{B}(\mathbf{x})\cdot\mathbf{u})\cdot\mathbf{x}; \ \mathbf{B}(\mathbf{x}) = \begin{bmatrix} \mathbf b_1 & \mathbf b_2 \end{bmatrix} \in \mathbb{R}^{3 \times 2}
\end{equation}
where $\mathbf{R}(\mathbf w) = {\rm Exp}(\mathbf w) \in SO(3)$ denotes a rotation about an axis-angle represented by the vector $\mathbf w \in \mathbb{R}^3$, and the dependence on $\mathbf x$ of the two basis vectors $\mathbf{b}_1, \mathbf{b}_2$ are made explicit in $\mathbf B(\mathbf x)$. The choice of $\mathbf{b}_1, \mathbf{b}_2$ is not unique as long as they are orthonormal and both perpendicular to $\mathbf x$.  
 
\begin{figure}[t]
\vspace{-0.3cm}
\setlength{\abovecaptionskip}{-0.1cm} 
\setlength{\belowcaptionskip}{-0.2cm} 
	\centering
	\includegraphics[width=0.7\columnwidth]{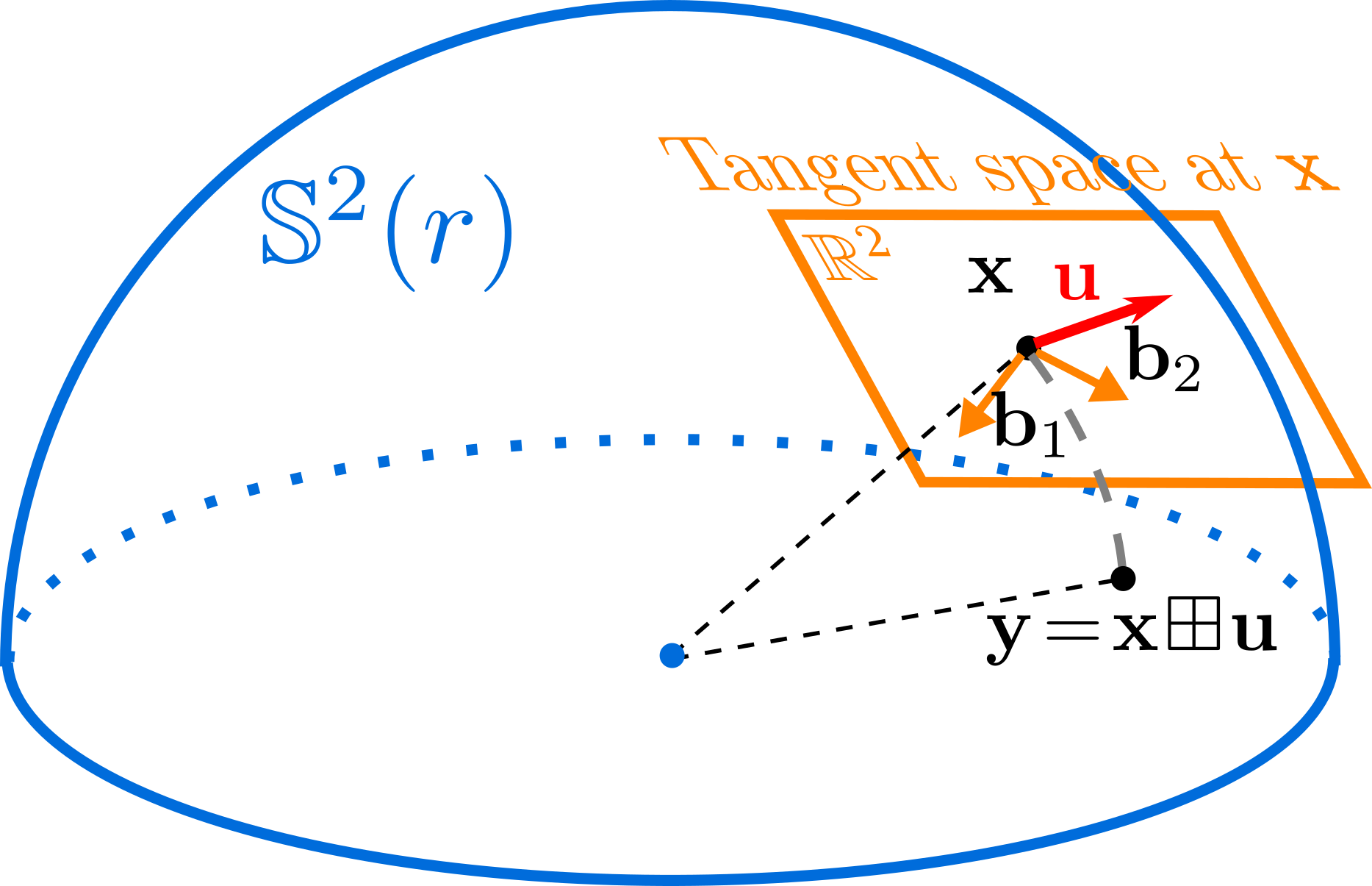}
	\caption{Illustration of the $\boxplus$ operation on the $\mathbb{S}^2(r)$ manifold. 
		\label{fig:S2_boxplus}}
	\vspace{-0.3cm}
\end{figure}

\subsection{The \texorpdfstring{$\oplus$}{Lg} operation}\label{basic_operation_oplus}

A real-world system is usually driven by some exogenous input. To model this phenomenon, besides describing the manifold itself where the state lies on, an additional operation is needed to describe how the state on the manifold is driven by a constant exogenous velocity over an infinitesimal time period. Although the effect of the velocity on the state is adding a perturbation to its original location on the manifold which are well described by the $\boxplus$ operator, the exogenous velocity is not necessarily in the same homeomorphic space (i.e., the tangent space) defining the $\boxplus$ operation, hence a new operation denoted as $\oplus_{\mathcal{M}}$ is needed. Assume the exogenous velocity is of dimension $l$, the new operation is then a map $\oplus_{\mathcal{M}}:\mathcal{M}\times\mathbb{R}^l\mapsto\mathcal{M}$. 

In particular, when $\mathcal{M}$ is a Lie group, the exogenous velocity typically lies in the tangent plane that also defines $\boxplus \backslash \boxminus$ operations as in (\ref{e:def_group}), hence the operation $\oplus_{\mathcal{M}}$ coincides with the $\boxplus_{\mathcal{M}}$, i.e.,
\begin{equation}
\setlength{\abovedisplayskip}{0.15cm} 
\setlength{\belowdisplayskip}{0.15cm} 
\begin{aligned}
    \mathbf x \oplus_{\mathcal{M}} \mathbf v &= \mathbf x \boxplus_{\mathcal{M}} \mathbf v =  \mathbf{x} \cdot \rm{Exp}(\mathbf{v})  \quad (i.e., \text{$l = n$})
\end{aligned}
\end{equation}

Otherwise, the definition of $\oplus_{\mathcal{M}}$ should be made according to the specific manifold. For example, the element of the 2-sphere manifold $\mathbb{S}^2(r)$ space is a vector of fixed length $r$, whose transformation under a perturbation is a rotation around an axis-angle that may not lie on the homeomorphic space of $\mathbb{S}^2(r)$. Therefore, we can define $\mathbf{x}\oplus_{\mathcal{M}}\mathbf{v}=\mathbf{R}(\mathbf{v})\mathbf{x}$. Take a more concrete example, in practice, $\mathbb{S}^2(r)$ is usually used to describe a constant vector of fixed length ${r}$. When the vector is represented in a fixed world frame, the state satisfies $\dot{\mathbf x} = \mathbf 0$ (i.e., zero velocity), while when the vector is represented in the body frame, it satisfies $\dot{\mathbf x} = \boldsymbol{\omega} \times \mathbf x$, where $\boldsymbol{\omega}$ is the angular velocity. In either case, the velocity is the complete $\mathbb{R}^3$, while the homeomorphic space of $\boxplus$ is the tangent plane at $\mathbf x$ (see Fig. \ref{fig:S2_boxplus}). Moreover, we can see that in either case, the state is updated as $\mathbf R (\mathbf v) \mathbf x$, where $\mathbf v $ is $\mathbf 0$ or $\boldsymbol{\omega} dt $, respectively.

For the sake of notation simplicity, in the following discussion, we drop the subscript $\mathcal{M}$ in operations $\boxplus$, $\boxminus$ and $\oplus$ when no ambiguity exists. 


\subsection{Differentiations}
In the Kalman filter that will be derived later in Sec.~\ref{esekfom}, the partial differentiation of $\left(\left(\left(\mathbf{x}\boxplus\mathbf{u}\right)\oplus \mathbf{v}\right)\boxminus\mathbf{y}\right)$ with respect to $\mathbf{u}$ and $\mathbf{v}$ will be used, where $\mathbf{x}, \mathbf{y}\in\mathcal{M}, \mathbf{u} \in\mathbb{R}^n$ and $\mathbf{v}\in\mathbb{R}^l$. This can be obtained easily from the chain rule as follows:
\begin{equation}
\setlength{\abovedisplayskip}{0.15cm} 
\setlength{\belowdisplayskip}{0.15cm} 
\begin{aligned}
    &\frac{\partial\left(\left(\left(\mathbf{x}\boxplus\mathbf{u}\right)\oplus\mathbf{v}\right)\boxminus\mathbf{y}\right)}{\partial \mathbf{u}}=\frac{\partial\leftidx{^\mathcal{M}}\!{\bm\varphi}{_{\mathbf{y}}}\left(\mathbf{z}\right)}{\partial\mathbf{z}}|_{\begin{subarray}{l}\mathbf{z}=\left(\mathbf{x}\boxplus\mathbf{u}\right)\oplus \mathbf{v}\end{subarray}} \\
    & \quad \quad \quad \quad \quad \quad \quad \quad \cdot \frac{\partial\left(\mathbf z \oplus \mathbf{v}\right)}{\partial\mathbf{z}} |_{\begin{subarray}{l}\mathbf{z}=\mathbf{x}\boxplus\mathbf{u}\end{subarray}} \cdot \frac{\partial\leftidx{^\mathcal{M}}\!{\bm\varphi}{_{\mathbf{x}}^{-1}}\left(\mathbf{z}\right)}{\partial\mathbf{z}} |_{\begin{subarray}{l}\mathbf{z}=\mathbf{u}\end{subarray}}\\
    &\frac{\partial\left(\left(\left(\mathbf{x}\boxplus\mathbf{u}\right)\oplus\mathbf{v}\right)\boxminus\mathbf{y}\right)}{\partial \mathbf{v}}=\frac{\partial\leftidx{^\mathcal{M}}\!{\bm\varphi}{_{\mathbf{y}}}\left(\mathbf{z}\right)}{\partial\mathbf{z}}|_{\begin{subarray}{l}\mathbf{z}=\left(\mathbf{x}\boxplus\mathbf{u}\right)\oplus \mathbf{v}\end{subarray}}\cdot \frac{\partial\left(\left(\mathbf x\boxplus\mathbf{u}\right) \oplus \mathbf{z}\right)}{\partial\mathbf{z}} |_{\begin{subarray}{l}\mathbf{z}=\mathbf{v}\end{subarray}}
\end{aligned}
\end{equation}

For certain manifolds (e.g., $SO(3)$), it is usually more convenient to compute the differentiations $\frac{\partial\left(\left(\left(\mathbf{x}\boxplus\mathbf{u}\right)\oplus \mathbf{v}\right)\boxminus\mathbf{y}\right)}{\partial \mathbf{u}}$ and $\frac{\partial\left(\left(\left(\mathbf{x}\boxplus\mathbf{u}\right)\oplus \mathbf{v}\right)\boxminus\mathbf{y}\right)}{\partial \mathbf{v}}$ directly instead of using the above chain rule.

\subsection{{\it Compound differentiable manifolds}}

Based on the principles of Cartesian product of manifolds, the $\boxplus\backslash\boxminus$ and $\oplus$ on a {\it compound manifold} of two (and by induction arbitrary numbers of) sub-manifolds are defined as:
\begin{equation}\label{e:compound_plus_minus}
\setlength{\abovedisplayskip}{0.15cm} 
\setlength{\belowdisplayskip}{0.15cm} 
    \begin{aligned}
    \underbrace{\begin{bmatrix}
    \mathbf{x}_1\\\mathbf{x}_2
    \end{bmatrix}}_{\mathbf{x}} \! \boxplus \! \underbrace{\begin{bmatrix}
    \mathbf{u}_1\\\mathbf{u}_2
    \end{bmatrix}}_{\mathbf{u}} \!=\! \begin{bmatrix}
    \mathbf{x}_1 \! \boxplus \!  \mathbf{u}_1\\
    \mathbf{x}_2 \! \boxplus \! \mathbf{u}_2
    \end{bmatrix}, 
    \underbrace{\begin{bmatrix}
    \mathbf{x}_1\\\mathbf{x}_2
    \end{bmatrix}}_{\mathbf{x}} \! \oplus \! \underbrace{\begin{bmatrix}
    \mathbf{v}_1\\\mathbf{v}_2
    \end{bmatrix}}_{\mathbf{v}} \!=\! \begin{bmatrix}
    \mathbf{x}_1 \! \oplus \!  \mathbf{v}_1\\
    \mathbf{x}_2 \! \oplus \! \mathbf{v}_2
    \end{bmatrix}.
    \end{aligned}
\end{equation}

The partial differentiation on the {\it compound manifold} therefore satisfies (see Lemma.~\ref{lemma_comp_mani_diff} in Appx.~\ref{app:partial_diff_Com} for proof):

\begin{equation}\label{diff_compound}
\setlength{\abovedisplayskip}{0.15cm} 
\setlength{\belowdisplayskip}{0.15cm} 
    \begin{aligned}
    \hspace{-0.2cm}\frac{\partial \left( \! \left( \! \left( \mathbf{x}\boxplus\mathbf{u} \! \right)\oplus \mathbf{v} \! \right)\boxminus\mathbf{y} \! \right)}{\partial {\mathbf{u}} }\!\!&=\!\!\begin{bmatrix}
    \frac{\partial\left( \! \left( \! \left( \mathbf{x}_1\boxplus\mathbf{u}_1 \! \right)\oplus \mathbf{v}_1 \! \right)\boxminus\mathbf{y}_1 \! \right)}{\partial {\mathbf{u}}_1} &\!\! \mathbf{0}\\\mathbf{0} &\!\!\frac{\partial\left( \! \left( \! \left( \mathbf{x}_2 \boxplus\mathbf{u}_2 \! \right)\oplus \mathbf{v}_2 \! \right)\boxminus\mathbf{y}_2 \! \right)}{\partial {\mathbf{u}}_2}\end{bmatrix}\\
    \hspace{-0.2cm}\frac{\partial \left( \! \left( \! \left( \mathbf{x}\boxplus\mathbf{u} \! \right)\oplus \mathbf{v} \! \right)\boxminus\mathbf{y} \! \right)}{\partial {\mathbf{v}} }\!\!&=\!\!\begin{bmatrix}
    \frac{\partial\left( \! \left( \! \left( \mathbf{x}_1\boxplus\mathbf{u}_1 \! \right)\oplus \mathbf{v}_1 \! \right)\boxminus\mathbf{y}_1 \! \right)}{\partial {\mathbf{v}}_1} &\!\! \mathbf{0}\\\mathbf{0} &\!\!\frac{\partial\left( \! \left( \! \left( \mathbf{x}_2 \boxplus\mathbf{u}_2 \! \right)\oplus \mathbf{v}_2 \! \right)\boxminus\mathbf{y}_2 \! \right)}{\partial {\mathbf{v}}_2}\end{bmatrix}
    \end{aligned}
\end{equation}

The $\boxplus\backslash\boxminus$ and $\oplus$ operations and their partial differentiation  on a {\it compound manifold} are extremely useful, enabling us to define the $\boxplus\backslash\boxminus$ and $\oplus$ operations and their derivatives for {\it primitive manifolds} (e.g., $\mathbb{R}^n, SO(3), \mathbb{S}^2(r)$) only and then extend these definitions to more complicated {\it compound manifolds}. 

\begin{table*}[b]
\vspace{-0.3cm}
\setlength{\abovecaptionskip}{-0cm} 
\setlength{\belowcaptionskip}{-0.2cm} 
\centering
\caption{Operation and differentiation of important manifolds in practice}\label{tab:imp_mani}
\begin{threeparttable}
    \begin{tabular}{llllll}
    \toprule
    $\mathcal{M}$&$\mathbf{x}\boxplus\mathbf{u}$&$\mathbf{y}\boxminus\mathbf{x}$&$\mathbf{x}\oplus\mathbf{v}$&$\frac{\partial\left(\left(\left(\mathbf{x}\boxplus\mathbf{u}\right)\oplus \mathbf{v}\right)\boxminus\mathbf{y}\right)}{\partial \mathbf{u}}$ &$\frac{\partial\left(\left(\left(\mathbf{x}\boxplus\mathbf{u}\right)\oplus \mathbf{v}\right)\boxminus\mathbf{y}\right)}{\partial \mathbf{v}}$ \\
    \midrule
    $\mathbb{R}^n$&$\mathbf{x}+\mathbf{u}$&$\mathbf{y}-\mathbf{x}$&$\mathbf{x}+\mathbf{v}$&$\mathbf{I}_{n\times n}$&$\mathbf{I}_{n\times n}$\\ 
    $SO(3)$&$\mathbf{x}\!\cdot\!{\rm Exp}(\mathbf{u})$&${\rm Log}(\mathbf{x}^{-1}\cdot\mathbf{y})$&$\mathbf{x}\!\cdot\!{\rm Exp}(\mathbf{v})$&$\mathbf{A}(((\mathbf{x}\!\boxplus\!\mathbf{u})\!\oplus \!\mathbf{v})\!\boxminus\!\mathbf{y})^{-\!T}{\rm Exp}(-\mathbf{v})\mathbf{A}(\mathbf{u})^T$&$\mathbf{A}(((\mathbf{x}\!\boxplus\!\mathbf{u})\!\oplus \!\mathbf{v})\!\boxminus\!\mathbf{y})^{-\!T}\!\mathbf{A}(\mathbf{v})^T$\\
    $\mathbb{S}^2(r)$&$\mathbf{R}(\mathbf{B}(\mathbf{x})\mathbf{u})\mathbf{x}$&$\mathbf{B}(\mathbf{x})^T\! \left(
    \theta \frac{\lfloor\mathbf{x}\rfloor\mathbf{y}}{\|\lfloor\mathbf{x}\rfloor\mathbf{y}\|}\! \right)$&$ \mathbf{R}(\mathbf{v})\mathbf{x}$&$\mathbf{N} ((\mathbf{x}\! \boxplus \! \mathbf{u}) \! \oplus \! \mathbf{v},\mathbf{y} ) \mathbf{R}(\mathbf{v})\mathbf{M}(\mathbf{x},\mathbf{u})$&$-\mathbf{N} ((\mathbf{x}\! \boxplus \! \mathbf{u}) \! \oplus \! \mathbf{v},\mathbf{y}) \mathbf{R}(\mathbf{v}) \lfloor\mathbf{x}\!\boxplus\!\mathbf{u}\rfloor \mathbf{A}(\mathbf{v})^T$\\
    \bottomrule
    \end{tabular}
    \end{threeparttable}
    \vspace{-0.3cm}
\end{table*}
For example, the definitions of $\boxplus\backslash\boxminus$ and $\oplus$ operations for a few important manifolds, including $\mathbb{R}^n, SO(3)$ and $\mathbb{S}^2(r)$, according to the previous discussions and their partial differentiations are summarized in Tab.~\ref{tab:imp_mani}, where
\begin{equation}~\label{eq:A}
\setlength{\abovedisplayskip}{0.15cm} 
\setlength{\belowdisplayskip}{0.15cm} 
\begin{aligned}
\mathbf{A}\!\left(\mathbf{u}\right)\!&=\!\mathbf{I}\!+\!\left(\frac{1\!-\!{\rm cos}\left(\Vert\mathbf{u}\Vert\right)}{\Vert\mathbf{u}\Vert}\right)\!\frac{\lfloor\mathbf{u}\rfloor}{\Vert\mathbf{u}\Vert}\!+\!\left(1\!-\!\frac{{\rm sin}\left(\Vert\mathbf{u}\Vert\right)}{\Vert\mathbf{u}\Vert}\right)\!\frac{\lfloor\mathbf{u}\rfloor^2}{\Vert\mathbf{u}\Vert^2}\\
\mathbf{N}\!\left(\mathbf{x},\!\mathbf{y}\right)&\!=\!\mathbf{B}\!\left(\mathbf{y}\right)^{T}\!\!\left(\frac{\theta}{\| \! \lfloor\mathbf{y} \! \rfloor\mathbf{x}\|}\lfloor  \mathbf{y}  \rfloor\!+\!\lfloor  \mathbf{y}  \rfloor \cdot \mathbf{x}\!\cdot\!\mathbf{P}\!\left(\! \mathbf{x}, \! \mathbf{y}\right)\right)\\
     \mathbf{M}\!\left(\mathbf{x},\!\mathbf{u}\right)&\! =\! - \mathbf R (\mathbf B ( \mathbf x  ) \mathbf u) \cdot \lfloor \mathbf x  \rfloor  \cdot \mathbf A \! \left(\mathbf B( \mathbf x  ) \mathbf u \right)^{T} \cdot \mathbf B( \mathbf x  )
\end{aligned}
\end{equation} 
where $\lfloor\mathbf{u}\rfloor$ denotes the skew-symmetric matrix that maps the cross product of $\mathbf{u}\in\mathbb{R}^3$ and
\begin{equation}
\setlength{\abovedisplayskip}{0.15cm} 
\setlength{\belowdisplayskip}{0.15cm} 
    \label{eq:S2}
    \begin{aligned}
      \alpha\left(\|\mathbf{u}\|\right)&=\frac{\|\mathbf{u}\|}{2}{\rm cot}\left(\frac{\|\mathbf{u}\|}{2}\right)=\frac{\|\mathbf{u}\|}{2}\frac{{\rm cos}\left({\|\mathbf{u}\|}/2\right)}{{\rm sin}\left(\|\mathbf{u}\|/2\right)}\\
\mathbf{P}\!\left(\mathbf{x},\mathbf{y}\right) &\!=\!\frac{1}{{r}^4}\left(\frac{-\mathbf{y}^T\mathbf{x}\|\lfloor\mathbf{y}\rfloor\mathbf{x}\|+{r}^4\theta}{\|\lfloor\mathbf{y}\rfloor\mathbf{x}\|^3}\mathbf x^T \lfloor \mathbf y \rfloor^2 \!-\!\mathbf{y}^T \right).
\end{aligned}
\end{equation}

Detailed derivations of the results in Tab.~\ref{tab:imp_mani} can be seen in Appx.~\ref{app:important_manifolds}.

For a new primitive manifold, we can define its respective operations and derive the associate partial differentiation similar to the procedure above. Notice that these procedures need only to be performed once for a certain primitive manifold and they do not depend on the particular system evolving on the manifold.  

\section{Canonical Representation of On-manifold Systems}~\label{cano_repres}

Consider a robotic system in discrete time with sampling period $\Delta t$. Using a zero holder discretization where the inputs (and hence the first order derivative of the state) are constant during one sampling period, we can cast it into the following canonical form:
\begin{equation}\label{cano}
\setlength{\abovedisplayskip}{0.15cm} 
\setlength{\belowdisplayskip}{0.15cm} 
\begin{aligned}
    \mathbf{x}_{k+1}&=\mathbf{x}_k\oplus_{\mathcal{M}_s}\left(\Delta t\mathbf{f}\left(\mathbf{x}_k, \mathbf{u}_k, \mathbf{w}_{k}\right)\right), \mathbf{x}_k \in\mathcal{M}_s, \\
    \mathbf{z}_k &= \mathbf{h}\left(\mathbf{x}_k, \mathbf v_k \right), \mathbf{z}_k\in\mathcal{M}_m,\\
    \mathbf{w}_k&\sim\mathcal{N}\left(\mathbf{0},  \mathcal{Q}_k\right), \mathbf{v}_k\sim\mathcal{N}\left(\mathbf{0},  \mathcal{R}_k\right).
    \end{aligned}
\end{equation}
where the state $\mathbf{x}_k$ is assumed to be on the manifold $\mathcal{M}_s$ of dimension $n$ and the measurement $\mathbf{z}_k$ is assumed to be on the manifold $\mathcal{M}_m$ of dimension $m$. When compared to higher-order discretization methods (e.g., Runge-Kutta integration) used in prior work \cite{Mourikis2007ICRA, huai2018robocentric}, the zero-order hold discretization is usually less accurate. However, such difference is negligible when the sampling period is small.

In the following, we show how to cast different state components into the canonical form in (\ref{cano}). Then with the composition property (\ref{e:compound_plus_minus}), the complete state equation can be obtained by concatenating all components.

{\it Example 1: } Vectors in Euclidean space (e.g., position and velocity). Assume $\mathbf x \in \mathbb{R}^n$ subject to $\dot{\mathbf x} = \mathbf f(\mathbf x, \mathbf u, \mathbf w)$. Using zero-order hold discretization, $\mathbf f(\mathbf x, \mathbf u, \mathbf w)$ is assumed constant during the sampling period $\Delta t$, hence 
\begin{equation}
    \begin{aligned}
     \mathbf{x}_{k+1} &=\mathbf{x}_k + \left(\Delta t\mathbf{f}\left(\mathbf{x}_k, \mathbf{u}_k, \mathbf{w}_{k}\right)\right) \\
     &=\mathbf{x}_k\oplus_{\mathbb{R}^n}\left(\Delta t\mathbf{f}\left(\mathbf{x}_k, \mathbf{u}_k, \mathbf{w}_{k}\right)\right).
    \end{aligned}
\end{equation}

{\it Example 2:} Attitude kinematics in a global reference frame (e.g., the earth-frame). Let $\mathbf x \in SO(3)$ be the body attitude relative to the global frame and $^G\boldsymbol{\omega}$ be the global angular velocity which holds constant for one sampling period $\Delta t$, then
\begin{equation}
    \begin{aligned}
     & \dot{\mathbf x} \!=\! \lfloor ^G\boldsymbol{\omega} \rfloor \cdot \mathbf x \! \! \implies \! \! \mathbf x_{k+1} \!=\! {\rm Exp}(\Delta t \ ^G\boldsymbol{\omega}_k) \cdot \mathbf x_{k} = \mathbf x_{k} \\
     & \cdot {\rm Exp}\left(\Delta t (\mathbf x_k^T \cdot ^G\!\boldsymbol{\omega}_k ) \right) \!= \! \mathbf{x}_k \oplus_{SO(3)} \left(\Delta t\mathbf{f}\left(\mathbf x_k,  ^G\!\boldsymbol{\omega}_k \right)\right), \\ 
     & \quad \quad \quad \quad \quad \quad \quad \quad \quad \quad \quad \quad \mathbf{f}\left(\mathbf x_k, ^G\!\boldsymbol{\omega}_k \right) = \mathbf x_k^T \cdot ^G\!\boldsymbol{\omega}_k.
    \end{aligned}
\end{equation}

{\it Example 3:} Attitude kinematics in body frame. Let $\mathbf x \in SO(3)$ be the body attitude relative to the global frame and $^B\boldsymbol{\omega}$ be the body angular velocity which holds constant for one sampling period $\Delta t$, then
\begin{equation}
    \begin{aligned}
     \dot{\mathbf x} &= \mathbf x \cdot \lfloor ^B\boldsymbol{\omega} \rfloor \! \implies \! \mathbf x_{k+1} = \mathbf x_{k} \cdot {\rm Exp}(\Delta t \ ^B\boldsymbol{\omega}_k) \\
     & = \mathbf{x}_k\oplus_{SO(3)}\left(\Delta t\mathbf{f}\left( ^B\boldsymbol{\omega}_k \right)\right), \ \mathbf{f}\left( ^B\boldsymbol{\omega}_k \right) =\  ^B\boldsymbol{\omega}_k.
    \end{aligned}
\end{equation}

{\it Example 4:} Vectors of known magnitude (e.g., gravity) in the global frame. Let $\mathbf x \in \mathbb{S}^2(g)$ be the gravity vector in the global frame with known magnitude $g$. Then,
\begin{equation}
    \begin{aligned}
     \dot{\mathbf x} \! &= \! \mathbf 0 \!\! \implies \!\! \mathbf x_{k+1} \! = \! \mathbf x_{k}  \! = \! \mathbf{x}_k \! \oplus_{\mathbb{S}^2 \! (g)}\! \! \left(\Delta t\mathbf{f}\!\left(\mathbf x_k \right) \! \right), \mathbf{f}\!\left(\mathbf x_k \right) \!=\! \mathbf 0.
    \end{aligned}
\end{equation}

{\it Example 5:} Vectors of known magnitude (e.g., gravity) in body frame. Let $\mathbf x \in \mathbb{S}^2(g)$ be the gravity vector in the body frame and $^B\boldsymbol{\omega}$ be the body angular velocity which holds constant for one sampling period $\Delta t$. Then,
\begin{equation}
    \begin{aligned}
     \dot{\mathbf x} \! &= \! - \lfloor ^B\boldsymbol{\omega} \rfloor \mathbf x \!\! \implies \!\! \mathbf x_{k+1} \! = \! {\rm Exp}(- \Delta t\  ^B\boldsymbol{\omega}_k) \mathbf x_{k}  \\
     \! &= \! \mathbf{x}_k\oplus_{\mathbb{S}^2(g)}\left(\Delta t\mathbf{f}\left(^B\boldsymbol{\omega}_k \right)\right), \mathbf{f}\left(^B\boldsymbol{\omega}_k \right) \!=\! - ^B\boldsymbol{\omega}_k.
    \end{aligned}
\end{equation}

{\it Example 6:} Bearing-distance parameterization of visual landmarks \cite{bloesch2017iteratedsupp}. Let $\mathbf x \in \mathbb{S}^2(1)$ and $d(\rho) \in \mathbb{R}$ be the bearing vector and depth (with parameter $\rho$), respectively, of a visual landmark, and $^G\mathbf R_C, ^G\mathbf p_C$ be the attitude and position of the camera. Then the visual landmark in the global frame is $^G\mathbf R_C (\mathbf x d(\rho)) + ^G\!\mathbf p_C$, which is constant over time:
\begin{equation}
    \label{bearing-depth_sup}
    \begin{aligned}
     &\frac{d (^G\mathbf R_C (\mathbf x d(\rho)) + ^G\mathbf p_C)}{dt} \!=\! \mathbf 0  \implies \\
     & \lfloor ^C\boldsymbol{\omega} \rfloor (\mathbf x d(\rho)) + \dot{\mathbf x} d(\rho) + {\mathbf x} d'(\rho) \dot{\rho} + ^C\!\mathbf v = \mathbf 0.  
    \end{aligned}
\end{equation}

Left multiplying (\ref{bearing-depth_sup}) by $\mathbf x^T$ and using $\mathbf x^T \dot{\mathbf x} = 0$ yield $\dot{\rho} = - \mathbf x^T \cdot ^C\!\!\mathbf v / d'(\rho)$. Substituting this to (\ref{bearing-depth_sup}) leads to
\begin{equation}
    \begin{aligned}
     & \dot{\mathbf x} \! = \! - \lfloor  ^C\!\boldsymbol{\omega}  +  \frac{1}{d(\rho)} \lfloor \mathbf x \rfloor \cdot ^C\!\mathbf v   \rfloor \cdot \mathbf x \implies \\
     & \mathbf x_{k+1} \!=\! {\rm Exp}\left(- \Delta t \left(  ^C\boldsymbol{\omega}_k  +  \frac{1}{d(\rho)} \lfloor \mathbf x_k \rfloor \cdot ^C\!\mathbf v_k   \right) \right) \mathbf x_k \\
     & \quad \quad  = \! \mathbf{x}_k\oplus_{\mathbb{S}^2(1)} \! \left(\Delta t\mathbf{f}\left(\mathbf x_k, ^C\!\boldsymbol{\omega}_k, ^C\!\mathbf v_k \right)\right), \\
     & \quad \quad \ \ \mathbf{f}\left(\mathbf x_k, ^C\!\boldsymbol{\omega}_k, ^C\!\mathbf v_k \right) \!=\! - ^C\!\boldsymbol{\omega}_k \! - \!  \frac{1}{d(\rho)} \lfloor \mathbf x_k \rfloor \cdot ^C\!\mathbf v_k. 
    \end{aligned}
\end{equation}
where $^C\!\boldsymbol{\omega}  +  \frac{1}{d(\rho)} \lfloor \mathbf x \rfloor \cdot ^C\!\mathbf v$ is assumed constant for one sampling period $\Delta t$ due to the zero-order hold assumption.

\section{Kalman Filters on differentiable Manifolds}~\label{esekfom}
In this chapter, we derive a symbolic Kalman filter based on the canonical system representation (\ref{cano}). To avoid singularity of the minimal parameterization of the system original state which lies on manifolds, we employ the error-state idea that has been previously studied in prior work such as~\cite{sola2017quaternion} and~\cite{Lu2019IMU}. The presented derivation is very abstract, although being more concise, compact and generic. Moreover, for a complete treatment, we derive the full multi-rate iterated Kalman filter. Readers may refer to~\cite{sola2017quaternion} for more detailed derivations/explanations or~\cite{Lu2019IMU} for a brief derivation on a concrete example. We contribute to present an easy way to deploy the iterated Extended Kalman filter on arbitrary robotic systems with states on differentiable manifolds.

In the following presentations, we use the below notations:
\begin{enumerate}[label=(\roman*)]
    \item $\mathcal{M}_s$ denotes the manifold that the state $\mathbf{x}$ lies on. And $\mathcal{M}_m$ denotes the manifold that the measurement $\mathbf{z}$ lies on. For sake of notation simplification, we drop the subscripts $\mathcal{M}_s, \mathcal{M}_m$ for $\boxplus\backslash\boxminus$ and $\oplus$ when the context is made clear.
    \item The subscript $k$ denotes the time index, e.g., $\mathbf{x}_k$ is the ground truth of the state $\mathbf x$ at step $k$.
    \item The subscript $\tau|k$ denotes the estimation of a quantity at step $\tau$ based on all the measurements up to step $k$, e.g., $\mathbf{x}_{\tau|k}$ means the estimation of state $\mathbf{x}_{\tau}$ based on measurements up to step $k$. For filtering problem, it requires $\tau \geq k$. More specifically, we have $\tau > k$ for state predict (i.e., prior estimate) and $\tau = k$ for state update (i.e., posteriori estimate). 
    \item $\delta \mathbf{x}_{\tau|k} = \mathbf{x}_{\tau} \boxminus\mathbf{x}_{\tau|k}$ denotes the estimation error in the local homeomorphic linear space of $\mathbf{x}_{\tau|k}$. It is a random vector in $\mathbb{R}^n$ since the ground true state $\mathbf x$ is random. 
    \item $\mathbf{P}_{\tau|k}$ denotes the covariance of the estimation error $\delta\mathbf{x}_{\tau|k}$.
    \item superscript $j$ denotes the $j$-th iteration of the iterated Kalman filter, e.g. ${\mathbf{x}}_{k|k}^j$ denotes the estimate of state $\mathbf{x}_{k}$ at the $j$-th iteration based on measurements up to step $k$.
\end{enumerate}
\begin{figure}[t]
\vspace{-0.3cm}
\setlength{\abovecaptionskip}{-0.1cm} 
\setlength{\belowcaptionskip}{-0.2cm} 
\centering
        \includegraphics[width=0.6\columnwidth]{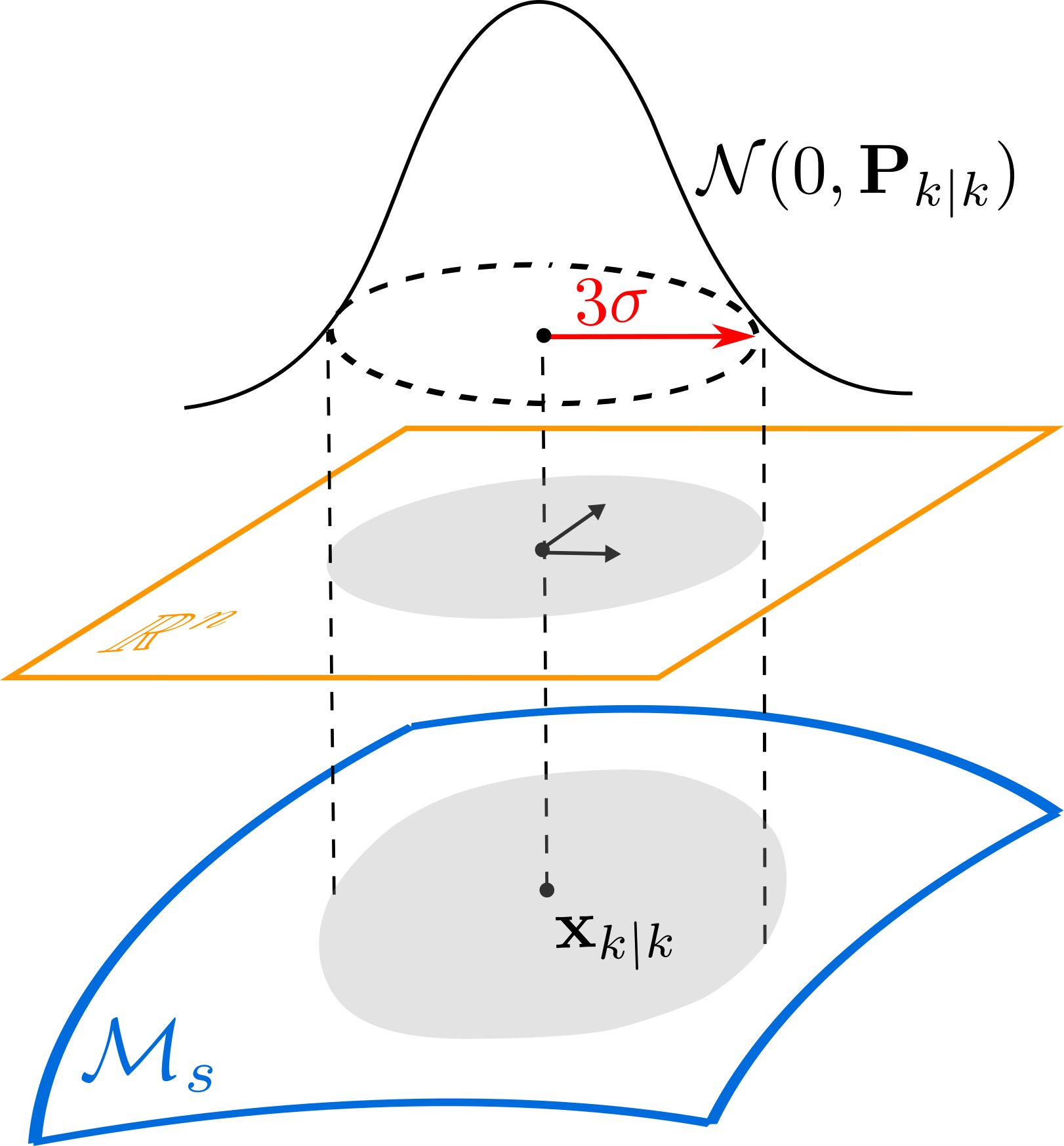}
\centering
\caption{The covariance matrix ($\mathbf{P}_{k|k}$) of the error state ($\delta\mathbf{x}_{k|k})$. The yellow $\mathbb{R}^n$ space is locally homeomorphic to $\mathcal{M}_s$.}
\vspace{-0.3cm}
\label{fig:cov_relation}
\end{figure}
\subsection{Initialization}\label{sec:initialization}
Assume we have received measurements up to step $k$ and updated the state at that time step as $\mathbf{x}_{k|k}$ along with the updated covariance matrix $\mathbf{P}_{k|k}$. According to the notation conventions above, $\mathbf{P}_{k|k}$ denotes the covariance of $\delta \mathbf{x}_{k|k}$, an error in the local homeomorphic space of the state update $\mathbf{x}_{k|k}$. The relation between $\delta \mathbf{x}_{k|k}$ and $\mathbf{P}_{k|k}$ is shown in Fig.~\ref{fig:cov_relation}.
\subsection{State predict}~\label{sec:esekf_state_propagation}
The state predict from step $k$ follows directly from the system model in equation (\ref{cano}) by setting $\mathbf{w} = \mathbf{0}$: 
\begin{equation}~\label{pred0}
\setlength{\abovedisplayskip}{0.15cm} 
\setlength{\belowdisplayskip}{0.15cm} 
    \mathbf{x}_{\tau+1|k}=\mathbf{x}_{\tau|k}\oplus \left(\Delta t\mathbf{f}\left(\mathbf{x}_{\tau|k}, \mathbf{u}_{\tau|k}, \mathbf{0}\right)\right);\tau\geq k
\end{equation}

If only one step needs to be predicted, which is usually the case for measurements being the same sampling rate as that of the input, then $\tau=k$. Otherwise, the predict proceeds at each input and stops when a measurement comes.
\subsection{The error-state system}~\label{algorithm_ekf}
The error-state Kalman filter propagates the covariance matrix in the error state in order to avoid the over-parameterization in $\mathbf{x}$. The error state is defined for all future times $\tau\geq k$ as follows
\begin{equation}~\label{error_state}
\setlength{\abovedisplayskip}{0.15cm} 
\setlength{\belowdisplayskip}{0.15cm} 
    \delta\mathbf{x}_{\tau|k} = \mathbf{x}_{\tau}\boxminus\mathbf{x}_{\tau|k}, \tau\geq k.
\end{equation}

Substituting (\ref{cano}) and (\ref{pred0}) into (\ref{error_state}) leads to 
\begin{equation}
\setlength{\abovedisplayskip}{0.15cm} 
\setlength{\belowdisplayskip}{0.15cm} 
\begin{aligned}
    \delta\mathbf{x}_{\tau+1|k} &= \mathbf{x}_{\tau+1}\boxminus\mathbf{x}_{\tau+1|k} =\!\left( \mathbf{x}_{\tau}\!\oplus \!\left(\Delta t\mathbf{f}\!\left(
    \mathbf{x}_{\tau}, \mathbf{u}_{\tau}, \mathbf{w}_{\tau}\right)\right)\right)\\
    &\quad \quad \quad \quad \quad \quad \boxminus\!\left(\mathbf{x}_{\tau|k}\!\oplus\!\left(\Delta t\mathbf{f}\!\left(
    \mathbf{x}_{\tau|k}, \mathbf{u}_{\tau}, \mathbf{0}\right)\right)\right).
\end{aligned}
\end{equation}

Then substituting (\ref{error_state}) into the above equation leads to
\begin{equation}~\label{pred_ite}
\setlength{\abovedisplayskip}{0.15cm} 
\setlength{\belowdisplayskip}{0.15cm} 
\begin{aligned}
    &\hspace{-0.3cm} \delta \mathbf{x}_{\tau+1|k} \! = \!\left( \! \left(\mathbf{x}_{\tau|k}\!\boxplus\!\delta\mathbf{x}_{\tau|k}\right)\!\oplus \!\left(\Delta t\mathbf{f} \! \left(\mathbf{x}_{\tau|k}\!\boxplus\!\delta\mathbf{x}_{\tau|k}, \! \mathbf{u}_{\tau}, \! \mathbf{w}_\tau\right)\! \right) \! \right)\\
    &\quad\quad\quad\quad\quad \boxminus\!\left(\mathbf{x}_{\tau|k}\!\oplus \!\left(\Delta t\mathbf{f}\left(\mathbf{x}_{\tau|k}, \mathbf{u}_{\tau}, \mathbf{0}\right)\! \right) \! \right).
\end{aligned}
\end{equation}
which defines a {\it new} system starting from $\tau=k$. This system describes the time evolvement of the error state $\delta\mathbf{x}_{\tau|k}$ and hence is referred to as the error-state system. Since the new error-state system originates from the current measurement time $k$, it is re-defined once a new measurement is received to update the state estimate. Such a repeating process effectively restricts the error trajectory within a neighbor of zero, validating the minimal parameterization in $\delta\mathbf{x}_{\tau|k}$.  In case $\mathcal{M}_s$ is a Lie group, the error state in local tangent space of $\mathbf{x}_{\tau | k}$ is $\delta\mathbf{x}_{\tau|k} = \text{Log} ( \mathbf{x}_{\tau | k}^{-1} \cdot \mathbf{x}_{\tau} )$. Define $\widetilde{\mathbf{x}}_{\tau | k} = \mathbf{x}_{\tau}^{-1} \cdot \mathbf{x}_{\tau}$ the error state on the original manifold $\mathcal{M}_s$, the relation between the two trajectories  $\delta\mathbf{x}_{\tau|k}$ and $\widetilde{\mathbf{x}}_{\tau|k}$ is shown in  Fig.~\ref{fig:error_state}.

Since the error system (\ref{pred_ite}) has minimal parameterization, the standard Kalman filter variants could be employed. Accordingly, the two Kalman filter steps, predict and update, are referred to as “error-state predict” and “error-state update”, respectively, in order to distinguish from the original state space (\ref{cano}). In the following, we show in detail the error-state predict and error-state update.

The step of error-state prediction is simply applying standard Kalman prediction to the new error system (\ref{pred_ite}), which is in minimal parameterization and can be viewed as a usual nonlinear system. 

\subsubsection{Initial condition}
The error system (\ref{pred_ite}) starts from $\tau=k$, with initial estimation as below 
\begin{equation}~\label{initial_error}
\setlength{\abovedisplayskip}{0.15cm} 
\setlength{\belowdisplayskip}{0.15cm} 
    \delta\mathbf{x}_{(k|k)|k}=\left(\mathbf{x}_k\boxminus\mathbf{x}_{k|k}\right)_{|k} = \mathbf{x}_{k|k}\boxminus\mathbf{x}_{k|k}=\mathbf{0}
\end{equation}
here, the notation $\delta \mathbf{x}_{(k|k)|k}$ denotes the estimation of the random vector $\delta\mathbf{x}_{k|k}$ (recall that this is indeed random due to its definition in (\ref{error_state}) and that the ground truth state $\mathbf{x}_k$ is random) based on measurements up to $k$. The result in (\ref{initial_error}) is not surprising as $\delta \mathbf{x}_{k|k}$ is the error after conditioning on the measurements (up to $k$) already, so conditioning on  the same measurements again does not give more information.
\subsubsection{Error state predict}
The error state predict follows directly from the error-state system model in (\ref{pred_ite}) by setting the process noise $\mathbf{w}_{\tau}$ to zero:
\begin{equation}
\setlength{\abovedisplayskip}{0.15cm} 
\setlength{\belowdisplayskip}{0.15cm} 
    \begin{aligned}
    \delta\mathbf{x}_{(\tau+1|k)|k}\!\!&=\!\!\left(\left(\mathbf{x}_{\tau|k}\!\boxplus\!\delta\mathbf{x}_{(\tau|k)|k}\right)\right.\\
    &\left.\quad\oplus \!\left(\Delta t\mathbf{f}\!\left(\mathbf{x}_{\tau|k}\!\boxplus\!\delta\mathbf{x}_{(\tau|k)|k}, \mathbf{u}_{\tau}, \mathbf{0}\right)\right)\right)\\
    &\quad\boxminus\!\left(\mathbf{x}_{\tau|k}\!\oplus \!\left(\Delta t\mathbf{f}\left(\mathbf{x}_{\tau|k}, \mathbf{u}_{\tau}, \mathbf{0}\right)\right)\right);\tau\geq k
    \end{aligned}
\end{equation}

Starting from the initial condition in (\ref{initial_error}), we obtain
\begin{equation}
\setlength{\abovedisplayskip}{0.15cm} 
\setlength{\belowdisplayskip}{0.15cm} 
    \delta\mathbf{x}_{(\tau|k)|k}=\mathbf{0}; \forall \tau\geq k.
\end{equation}

Next, to propagate the error covariance, we need to linearize the system (\ref{pred_ite}) as follows 
\begin{equation}\label{e:error_system_lin}
\setlength{\abovedisplayskip}{0.15cm} 
\setlength{\belowdisplayskip}{0.15cm} 
    \delta\mathbf{x}_{\tau+1|k}\approx\mathbf{F}_{\mathbf{x}_{\tau}}\delta\mathbf{x}_{\tau|k}+\mathbf{F}_{\mathbf{w}_{\tau}}\mathbf{w}_\tau
\end{equation}
where $\mathbf{F}_{\mathbf{x}_{\tau}}$ is the partial differention of (\ref{pred_ite}) w.r.t $\delta\mathbf{x}_{\tau|k}$ at point $\delta\mathbf{x}_{(\tau|k) | k}=\mathbf{0}$ and can be computed by the chain rule:
\begin{equation}~\label{F_x}
\setlength{\abovedisplayskip}{0.15cm} 
\setlength{\belowdisplayskip}{0.15cm} 
    \begin{aligned}
    \mathbf{F}_{\mathbf{x}_{\tau}}\!\!\!&=\!\!\frac{\partial\left(\left(\mathbf{x}_{\tau|k}\boxplus\delta\mathbf{x}_{\tau|k}\right)\oplus \left(\!\Delta t\mathbf{f}\left(\mathbf{x}_{\tau|k}, \mathbf{u}_{\tau},\mathbf{0}\right)\!\right)\!\right)\boxminus\left(\mathbf{x}_{\tau|k}\oplus \left(\Delta t\mathbf{f}\left(\mathbf{x}_{\tau|k},\mathbf{u}_{\tau}, \mathbf{0}\right)\right)\right)}{\partial\delta\mathbf{x}_{\tau|k}}\\
    &+\!\!\frac{\partial\left(\mathbf{x}_{\tau|k}\oplus \left(\Delta t\mathbf{f}\left(\mathbf{x}_{\tau|k}\boxplus\delta\mathbf{x}_{\tau|k}, \mathbf{u}_{\tau},\mathbf{0}\right)\right)\right)\boxminus\left(\mathbf{x}_{\tau|k}\oplus \left(\Delta t\mathbf{f}\left(\mathbf{x}_{\tau|k},\mathbf{u}_{\tau}, \mathbf{0}\right)\right)\right)}{\partial\delta\mathbf{x}_{\tau|k}}\\
    &=\!\! \mathbf{G}_{{\mathbf{x}}_{\tau} } +\Delta t \mathbf{G}_{{\mathbf{f}}_{\tau} }  \frac{\partial\mathbf{f}\left(\mathbf{x}_{\tau | k} \boxplus \delta \mathbf{x},\mathbf{u}_{\tau}, \mathbf{0}\right)}{\partial \delta \mathbf{x}} |_{\delta \mathbf{x}=\mathbf{0}}
    \end{aligned}
\end{equation}
and $\mathbf{F}_{\mathbf{w}_{\tau}}$ is the partial differentiation of (\ref{pred_ite}) w.r.t $\mathbf{w}_{\tau}$ at the point $\mathbf{w}_\tau=\mathbf{0}$, as follows
\begin{equation}~\label{F_w}
\setlength{\abovedisplayskip}{0.15cm} 
\setlength{\belowdisplayskip}{0.15cm} 
    \begin{aligned}
     \mathbf{F}_{\mathbf{w}_{\tau}}\!\!&=\!\!\frac{\partial\left(\mathbf{x}_{\tau|k}\oplus \left(\Delta t\mathbf{f}\left(\mathbf{x}_{\tau|k}, \mathbf{u}_{\tau},\mathbf{w}_\tau\right)\right)\right)\boxminus\left(\mathbf{x}_{\tau|k}\oplus \left(\Delta t\mathbf{f}\left(\mathbf{x}_{\tau|k},\mathbf{u}_{\tau}, \mathbf{0}\right)\right)\right)}{\partial\mathbf{w}_\tau}\\
     &=\Delta t \mathbf{G}_{{\mathbf{f}}_{\tau} }  \frac{\partial\mathbf{f}\left(\mathbf{x}_{\tau|k},\mathbf{u}_{\tau}, \mathbf{w} \right)}{\partial \mathbf{w} } |_{\mathbf{w}=\mathbf{0}}
    \end{aligned}
\end{equation}
where
\begin{equation}
\setlength{\abovedisplayskip}{0.15cm} 
\setlength{\belowdisplayskip}{0.15cm} 
    \begin{aligned}
    \mathbf{G}_{{\mathbf{x}}_{\tau} } &=\left. \frac{\partial\left(\left(\left(\mathbf{x}\boxplus\mathbf{u}\right)\oplus \mathbf{v}\right)\boxminus\mathbf{y}\right)}{\partial\mathbf{u}} \right|_{\substack{
    \mathbf{x}=\mathbf{x}_{\tau|k}; \mathbf{u}=\mathbf{0};\mathbf{v}=\Delta t\mathbf{f}\left(\mathbf{x}_{\tau|k}, \mathbf{u}_{\tau}, \mathbf{0}\right); \\ \mathbf{y}=\mathbf{x}_{\tau|k} \oplus \Delta t\mathbf{f}\left(\mathbf{x}_{\tau|k},\mathbf{u}_{\tau}, \mathbf{0}\right)}} \\
    \mathbf{G}_{{\mathbf{f}}_{\tau} } &=\left. \frac{\partial\left(\left(\left(\mathbf{x}\boxplus\mathbf{u}\right)\oplus \mathbf{v}\right)\boxminus\mathbf{y}\right)}{\partial\mathbf{v}} \right|_{\substack{
    \mathbf{x}=\mathbf{x}_{\tau|k}; \mathbf{u}=\mathbf{0}; \mathbf{v}=\Delta t\mathbf{f}\left(\mathbf{x}_{\tau|k}, \mathbf{u}_{\tau}, \mathbf{0}\right);  \\ \mathbf{y}=\mathbf{x}_{\tau|k} \oplus \Delta t\mathbf{f}\left(\mathbf{x}_{\tau|k},\mathbf{u}_{\tau}, \mathbf{0}\right)}}
    \end{aligned}
\end{equation}

Finally, the covariance is propagated as
\begin{equation}\label{e:Cov_prop}
\setlength{\abovedisplayskip}{0.15cm} 
\setlength{\belowdisplayskip}{0.15cm} 
    \mathbf{P}_{\tau+1|k}=\mathbf{F}_{\mathbf{x}_{\tau}}\mathbf{P}_{\tau|k}\mathbf{F}_{\mathbf{x}_{\tau}}^T+\mathbf{F}_{\mathbf{w}_{\tau}}\mathcal{Q}_{\tau} \mathbf{F}_{\mathbf{w}_{\tau}}^T
\end{equation}

\begin{figure}[t]
\vspace{-0.3cm}
\setlength{\abovecaptionskip}{-0.1cm} 
\setlength{\belowcaptionskip}{-0.2cm} 
\centering
        \includegraphics[width=0.8\columnwidth]{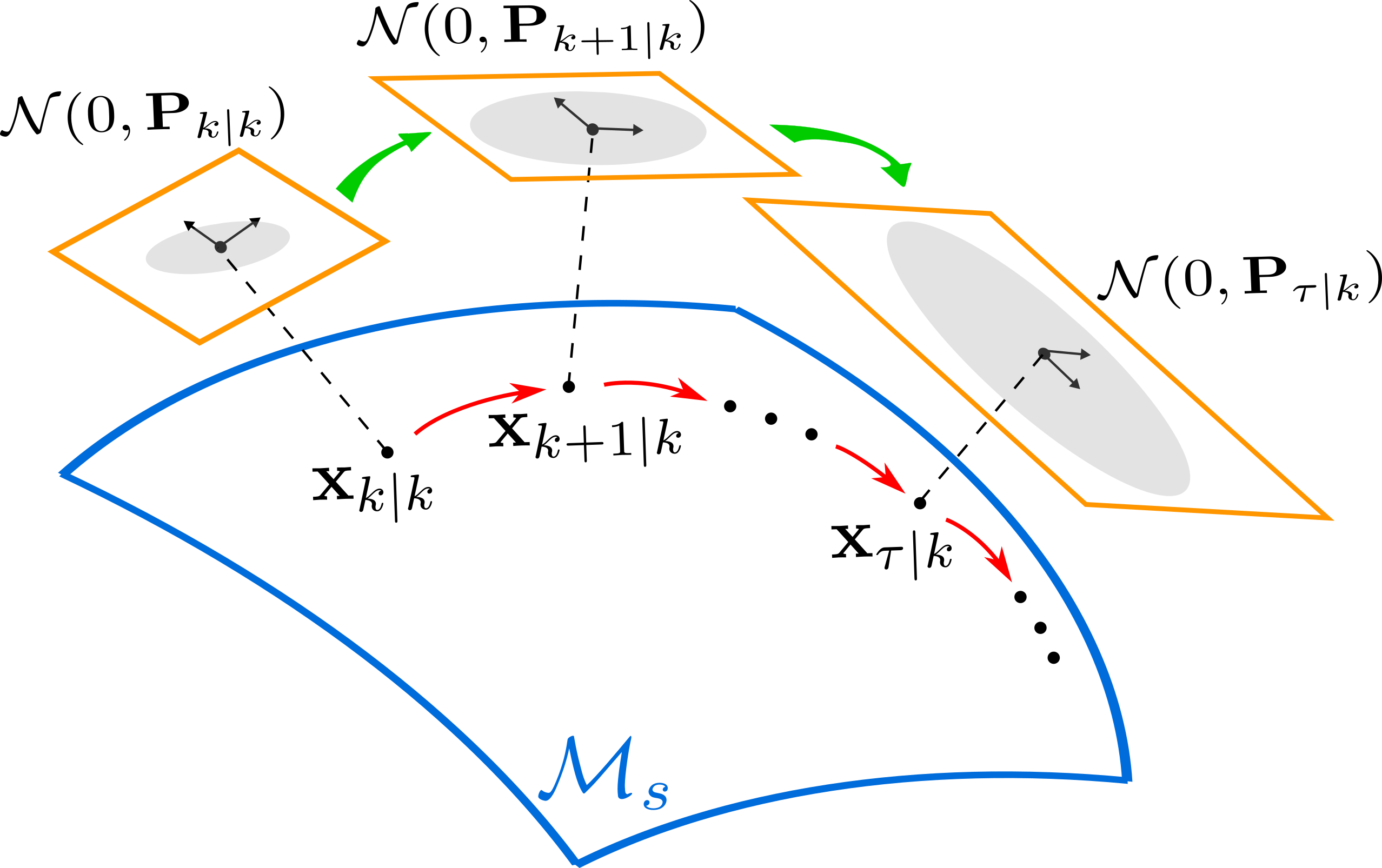}
         \caption{Predict of the state (red arrows on the manifold) and propagation of its covariance (green arrows between local homeomorphic spaces). The yellow $\mathbb{R}^n$ spaces are locally homeomorphic to $\mathcal{M}_s$.}
        \label{fig:cov_prop}
        \vspace{-0.3cm}
\end{figure}

The predict of the state in (\ref{pred0}) and the propagation of respective covariance in (\ref{e:Cov_prop}) are illustrated in Fig.~\ref{fig:cov_prop}. 

\subsubsection{Isolation of manifolds}~\label{sec:isolation_predict}
As shown by (\ref{F_x}) and (\ref{F_w}), the two system matrices $\mathbf{F}_{\mathbf{x}_\tau}, \mathbf{F}_{\mathbf{w}_\tau}$ are well separated into manifold-specific parts $\mathbf{G}_{{\mathbf{x}}_{\tau} }, \mathbf{G}_{{\mathbf{f}}_{\tau} }$ and system-specific parts $ \frac{\partial\mathbf{f}\left(\mathbf{x}_{\tau | k} \boxplus \delta \mathbf{x},\mathbf{u}_{\tau}, \mathbf{0}\right)}{\partial \delta \mathbf{x}}|_{\delta \mathbf x = \mathbf 0} ,  \frac{\partial\mathbf{f}\left(\mathbf{x}_{\tau|k},\mathbf{u}_{\tau}, \mathbf{w} \right)}{\partial \mathbf{w} }|_{\mathbf w = \mathbf 0}$. The manifold-specific parts for commonly used manifolds are listed in Tab.~\ref{tab:mani-spec_Fx}. Moreover, based on (\ref{diff_compound}), the manifold-specific parts for any {\it compound manifold} are the concatenation of that of these {\it primitive manifolds}.
\begin{table}[ht]
\vspace{-0.3cm}
\setlength{\abovecaptionskip}{-0cm} 
\setlength{\belowcaptionskip}{-0.2cm} 
\centering
\caption{Manifold-specific parts for $\mathbf{G}_{{\mathbf{x}}_{\tau} }, \mathbf{G}_{{\mathbf{f}}_{\tau} }$}\label{tab:mani-spec_Fx}
\begin{threeparttable}
    \begin{tabular}{lll}
    \toprule
    $\mathcal{M}_s$&$\!\mathbf{G}_{\mathbf{x}_\tau}$&$\mathbf{G}_{\mathbf{f}_\tau}$ \\
    \midrule
    $\mathbb{R}^n$&$\!\mathbf{I}_{n\times n}$&$\mathbf{I}_{n\times n}$\\ 
    $SO(3)$&$\!{\rm Exp}(-\!\Delta t \mathbf{f}(\mathbf{x}_{\tau|k}, \!\mathbf{u}_{\tau}, \mathbf{0}))$&$\mathbf{A}\!\left(\Delta t \mathbf{f}\!\left(\mathbf{x}_{\tau|k}, \mathbf{u}_{\tau}, \mathbf{0}\right)\right)^T$\\ $\mathbb{S}^2(r)$&$\!-\frac{1}{{r}^2}\mathbf{B}\!\left(\mathbf{x}_{\tau+1|k}\right)^T$&$-\frac{1}{{r}^2}\mathbf{B}\!\left(\mathbf{x}_{\tau+1|k}\right)^T$\\
    &$\!\cdot\mathbf{R}(\Delta t \mathbf{f}(\mathbf{x}_{\tau|k},\mathbf{u}_\tau,\mathbf{0}))$&$\cdot\mathbf{R}(\Delta t \mathbf{f}\!(\mathbf{x}_{\tau|k},\mathbf{u}_\tau,\mathbf{0}))$\\
    &$\cdot \lfloor \mathbf{x}_{\tau|k} \rfloor^2 \mathbf{B}\!\left(\mathbf{x}_{\tau|k}\right)$&$\cdot\lfloor\! \mathbf{x}_{\tau|k}\!\rfloor^2\! \mathbf{A}\!\left(\!\Delta t\mathbf{f}\!\left(\!\mathbf{x}_{\tau|k},\mathbf{u}_\tau,\mathbf{0}\!\right)\!\right)^T$\\
    \bottomrule
    \end{tabular}
    \end{threeparttable}
    \vspace{-0.3cm}
\end{table}
\subsection{State update}~\label{sec:state_update_esekf}

The state prediction in (\ref{pred0}) along with the covariance propagation in (\ref{e:Cov_prop}) proceed at every input $\mathbf u_{\tau}$, until the next measurement arrives. At that time, the predicted state and the propagated covariance give a prior distribution for the state while the next measurement serves as an observation of the state. Combining these two, we obtain a posteriori distribution of the state and subsequently perform a maximum a posteriori estimation (i.e., state update). 

\subsubsection{Prior distribution} 
Assume the next measurement arrives at step $\tau > k$. Without the loss of generality, we assume $\tau = k+1$, i.e., the measurement rate is equal to the input rate. The predicted error state $\delta \mathbf x_{k+1|k}$ and its covariance $\mathbf P_{k+1 | k}$ create a prior distribution for $\mathbf x_{k+1}$:
\begin{equation}
\setlength{\abovedisplayskip}{0.15cm} 
\setlength{\belowdisplayskip}{0.15cm} 
    \label{dist:pred}
    \delta \mathbf x_{k+1|k} = \mathbf x_{k+1} \boxminus \mathbf x_{k+1|k} \sim \mathcal{N} \left(\mathbf 0, \mathbf P_{k+1|k} \right)
\end{equation}
\subsubsection{Iterated update}
Now assume the next measurement at $k+1$ is $\mathbf z_{k+1}$. Since the measurement model could be nonlinear, linear approximations have to be made at each iteration. Assume in the $j$-th iteration, the state estimate is $\mathbf x_{k+1|k+1}^j$, where $\mathbf x_{k+1|k+1}^j = \mathbf x_{k+1|k}$ (i.e., the priori estimate) for $j = 0$ , then define and linearize the residual as below
\begin{equation}
\setlength{\abovedisplayskip}{0.15cm} 
\setlength{\belowdisplayskip}{0.15cm} 
    \label{eq:residual}
    \begin{aligned}
     \mathbf r_{k+1}^j &\triangleq \mathbf z_{k+1} \! \boxminus \! \mathbf h (\mathbf x_{k+1|k+1}^j, \mathbf 0 ) \\ 
     &= \mathbf h(\mathbf x_{k+1}, \mathbf v_{k+1}) \boxminus \mathbf h (\mathbf x_{k+1|k+1}^j, \mathbf 0 ) \\
     &= \mathbf h(\mathbf x_{k+1|k+1}^j \boxplus \delta \mathbf x_j, \mathbf v_{k+1}) \boxminus \mathbf h (\mathbf x_{k+1|k+1}^j, \mathbf 0 ) \\
     &\approx \mathbf D_{k+1}^j \mathbf v_{k+1} + \mathbf H_{k+1}^j \delta \mathbf x_j
    \end{aligned}
\end{equation}
where $\delta \mathbf x_j \triangleq \mathbf x_{k+1} \boxminus \mathbf x_{k+1|k+1}^j$ is the error between the ground true state $\mathbf x_{k+1}$ and its most recent estimate $\mathbf x_{k+1|k+1}^j$, and
\begin{equation}
\setlength{\abovedisplayskip}{0.15cm} 
\setlength{\belowdisplayskip}{0.15cm} 
    \label{e:iterative_error}
    \begin{aligned}
    \mathbf{H}_{k+1}^j &= \frac{\partial \left( \mathbf{h}(\mathbf{x}_{k+1|k+1}^j \boxplus \delta \mathbf{x}, \mathbf 0)\boxminus\mathbf{h}(\mathbf{x}_{k+1|k+1}^j, \mathbf 0) \right)}{\partial \delta \mathbf{x}}|_{ \delta \mathbf{x}=\mathbf{0}} \\
         &= \frac{\partial \mathbf{h}(\mathbf{x}_{k+1|k+1}^j \boxplus \delta \mathbf{x}, \mathbf 0)}{\partial \delta \mathbf{x}}|_{\delta \mathbf{x}=\mathbf{0}}, \text{ for } \mathcal{M}_m = \mathbb{R}^m, \\
    \mathbf D_{k+1}^j &= \frac{\partial \left( \mathbf{h}(\mathbf{x}_{k+1|k+1}^j, \mathbf v)\boxminus\mathbf{h}(\mathbf{x}_{k+1|k+1}^j, \mathbf 0) \right)}{\partial \mathbf{v}}|_{  \mathbf{v}=\mathbf{0}} \\
         &= \frac{\partial \mathbf{h}(\mathbf{x}_{k+1|k+1}^j, \mathbf v)}{\partial \mathbf{v}}|_{ \mathbf{v}=\mathbf{0}}, \text{ for } \mathcal{M}_m = \mathbb{R}^m
    \end{aligned}
\end{equation}

Equation (\ref{eq:residual}) defines a linearized observation model  for $\delta \mathbf x_j$ (and equivalently for $\mathbf x_{k+1}$): 
\begin{equation}\label{e:posteriori}
\setlength{\abovedisplayskip}{0.15cm} 
\setlength{\belowdisplayskip}{0.15cm} 
    \begin{aligned}
      &(\mathbf D_{k+1}^j \mathbf v_{k+1})| \delta \mathbf x_j \!= \! \mathbf r_{k+1}^j \!-\!  \mathbf H_{k+1}^j \delta \mathbf x_j \sim \mathcal{N} \left(\mathbf 0, \bar{\mathcal{R}}_{k+1} \right); \\
      & \bar{\mathcal{R}}_{k+1} = \mathbf D_{k+1}^j \mathcal{R}_{k+1} (\mathbf D_{k+1}^j)^T
    \end{aligned}
\end{equation}

\begin{figure}[tb]
\vspace{-0.3cm}
\setlength{\abovecaptionskip}{-0.1cm} 
\setlength{\belowcaptionskip}{-0.2cm} 
\centering
        \includegraphics[width=0.7\columnwidth]{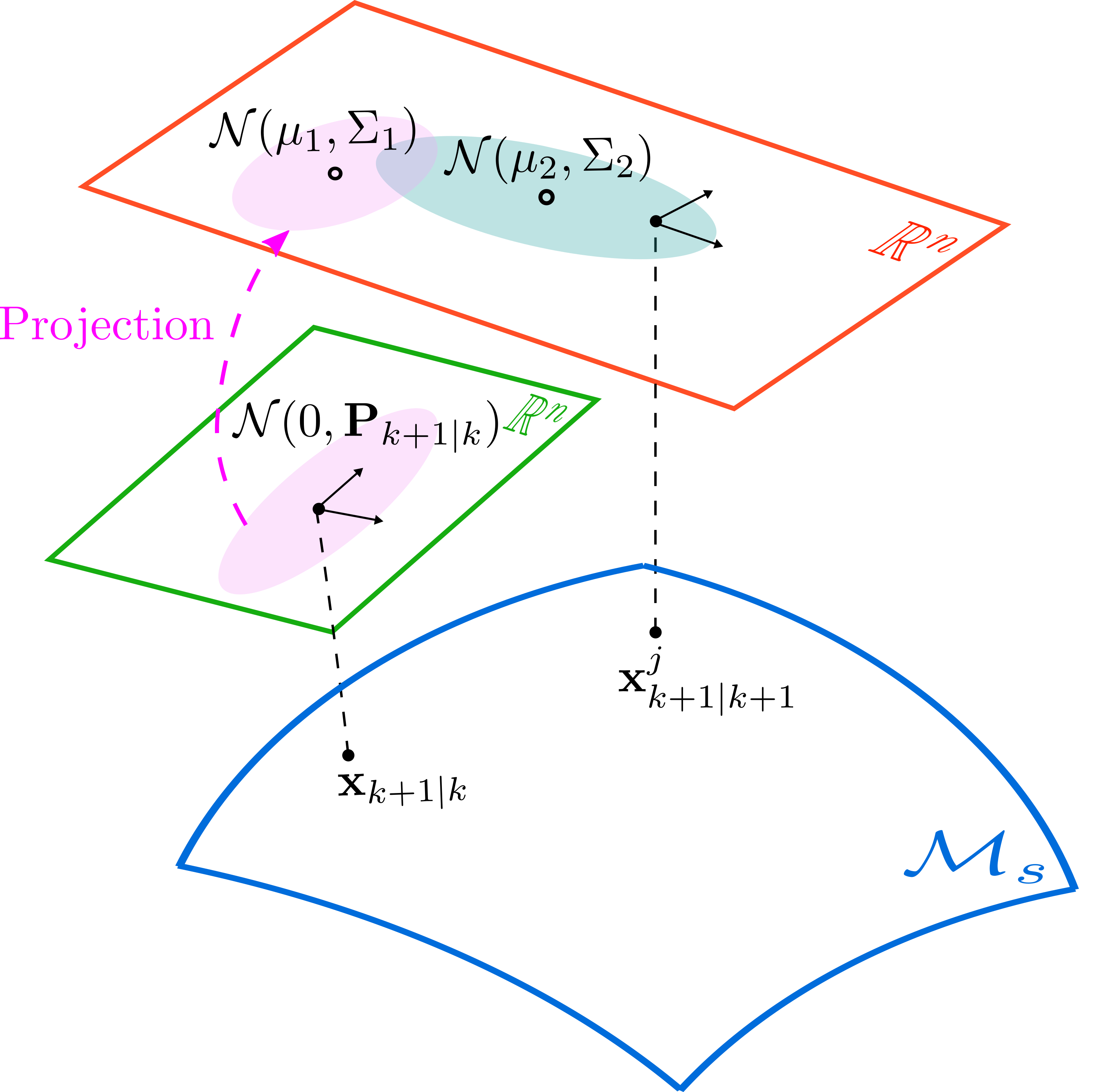}
\centering
\caption{Prior distribution $\mathcal{N}(\mathbf 0, \mathbf P_{k+1|k})$, its projection $\mathcal{N}(\boldsymbol{\mu}_1, \mathbf \Sigma_1)$, and the distribution imposed by measurements $\mathcal{N}(\boldsymbol{\mu}_2, \mathbf \Sigma_2)$, where $\boldsymbol{\mu}_1=-\mathbf{J}_{k+1}^j(\mathbf{x}_{k+1|k+1}^j\boxminus\mathbf{x}_{k+1|k})$, $\mathbf \Sigma_1=\mathbf{J}_{k+1}^j\mathbf{P}_{k+1|k}(\mathbf{J}_{k+1}^j)^{T}$ and $\boldsymbol{\mu}_2=(\mathbf{H}_{k+1}^j)^{-1}\mathbf{r}_{k+1}^j$, $\mathbf \Sigma_2=(\mathbf{H}_{k+1}^j)^{-1}\bar{\mathcal{R}}_{k+1}(\mathbf{H}_{k+1}^j)^{-T}$. The red and green $\mathbb{R}^n$ spaces are locally homeomorphic to the $\mathcal{M}_s$ space.}
\vspace{-0.3cm}
\label{fig:cov_proj}
\end{figure}

Note that the prior distribution for $\mathbf x_{k+1}$ in (\ref{dist:pred}) is defined in terms of $\delta \mathbf x_{k+1|k}$, which lies in the local homeomorphic space at $\mathbf x_{k+1|k}$, while the observation model of $\mathbf x_{k+1}$ in (\ref{e:posteriori}) is defined in terms of $\delta \mathbf x_j$, which lies in the local homeomorphic space at $\mathbf x_{k+1|k+1}^j$. To combine them into a posterior distribution for $\mathbf x_{k+1}$, we need to project them into the same space as shown in Fig.~\ref{fig:cov_proj}. We choose to project $\delta \mathbf x_{k+1|k}$ to the local homeomorphic space at $\mathbf x_{k+1|k+1}^j$:
\begin{equation}
\setlength{\abovedisplayskip}{0.15cm} 
\setlength{\belowdisplayskip}{0.15cm} 
    \label{e:projection}
    \begin{aligned}
     \delta \mathbf x_{k+1|k} \! &= \! \mathbf x_{k+1} \boxminus \mathbf x_{k+1|k} \!=\!  (\mathbf x_{k+1|k+1}^j \! \boxplus \!  \delta \mathbf x_j) \! \boxminus  \!\mathbf x_{k+1|k} \\
     & = (\mathbf x_{k+1|k+1}^j \! \boxminus  \!\mathbf x_{k+1|k})  + (\mathbf J_{k+1}^j)^{-1} \delta \mathbf x_j
    \end{aligned}
\end{equation}
where 
\begin{equation}
\setlength{\abovedisplayskip}{0.15cm} 
\setlength{\belowdisplayskip}{0.15cm} 
    \label{eq:update}
    \begin{aligned}
         \mathbf J_{k+1}^j &\!= \! \left. \frac{\partial\left(\! \left( \! \left(\mathbf{x}\boxplus\mathbf{u}\right)\oplus \mathbf{v}\right)\boxminus\mathbf{y}\right)}{\partial\mathbf{u}} \right|_{\begin{subarray}
          \mathbf \mathbf x =\mathbf{x}_{k+1|k}, \mathbf{u}= \mathbf x_{k+1|k+1}^j\boxminus\mathbf{x}_{k+1|k},\\
          \mathbf{v}=\mathbf{0}, \mathbf{y}= \mathbf{x}_{k+1|k+1}^j\end{subarray}} 
    \end{aligned}
\end{equation}
is the inverse Jacobian of $\delta \mathbf x_{k+1|k}$ with repect to $\delta \mathbf x_j$ evaluated at zero. Then, the equivalent prior distribution for $\mathbf x_{k+1}$ can now be expressed in terms of $\delta \mathbf x_j$ as below: 
\begin{equation}\label{e:prior}
\setlength{\abovedisplayskip}{0.15cm} 
\setlength{\belowdisplayskip}{0.15cm} 
    \begin{aligned}
        &\delta \mathbf x_j \! \sim  \!\mathcal{N}\! (-\mathbf{J}_{k+1}^j(\mathbf x_{k+1|k+1}^j \! \boxminus  \!\mathbf x_{k+1|k}), \mathbf J_{k+1}^j \mathbf P_{k+1|k} (\mathbf J_{k+1}^j)^T )
    \end{aligned}
\end{equation}

Now the prior distribution (\ref{e:prior}) and observation model (\ref{e:posteriori}) are in the same space and can be combined to produce the posterior distribution and finally the maximum a-posteriori estimate (MAP) in terms of $\delta \mathbf x_j$ (see Fig. \ref{fig:cov_proj}):
\begin{equation}
\setlength{\abovedisplayskip}{0.15cm} 
\setlength{\belowdisplayskip}{0.15cm} 
    \label{e:cost}
    \begin{aligned}
     &\arg \underset{ \delta \mathbf x_j}{\max}  \log \left( \mathcal{N} \! \left(\delta \mathbf x_j \right) \! \mathcal{N} \left((\mathbf D_{k+1}^j \mathbf v_{k+1})| \delta \mathbf x_j \right) \right)  \\
     &= \arg \underset{ \delta \mathbf x_j}{\min} \ g \left( \delta \mathbf x_j \right); \ g \! \left( \delta \mathbf x_j \right) \! = \!  \| \mathbf r_{k+1}^j\! -\! \mathbf H_{k+1}^j \delta \mathbf x_j \|^2_{\bar{\mathcal{R}}_{k+1}} \\
    & \quad \quad + \| (\mathbf x_{k+1|k+1}^j \! \boxminus  \!\mathbf x_{k+1|k})  + (\mathbf J_{k+1}^j)^{-1} \delta \mathbf x_j \|^2_{\mathbf P_{k+1|k}} 
    \end{aligned}
\end{equation}
where $\| \mathbf x \|^2_{\mathbf A}  = \mathbf x^T \mathbf A^{-1} \mathbf x$. The optimization problem in (\ref{e:cost}) is a standard quadratic programming and the optimal solution $\delta \mathbf x^{o} $ can be easily obtained, which is the Kalman update \cite{bell1993iterated}: 
\small
\begin{equation}
\setlength{\abovedisplayskip}{0.15cm} 
\setlength{\belowdisplayskip}{0.15cm} 
    \label{e:ekf_update}
    \begin{aligned}
        \delta \mathbf x_j^{o} & \! = \! - \mathbf J_{k+1}^j (\mathbf x_{k+1|k+1}^j \! \boxminus  \!\mathbf x_{k+1|k}) \\
        & +  \mathbf K_{k+1}^j (\mathbf r_{k+1}^j   + \mathbf H_{k+1}^j \mathbf J_{k+1}^j (\mathbf x_{k+1|k+1}^j \! \boxminus  \!\mathbf x_{k+1|k}))\\
        \mathbf K_{k+1}^j & \! = \!  \left( \mathbf Q_{k+1}^j \right)^{-1} (\mathbf H_{k+1}^j)^T \bar{\mathcal{R}}_{k+1}^{-1}\\
        &\! = \!  \mathbf J_{k+1}^j \mathbf P_{k+1|k} (\mathbf J_{k+1}^j)^{T} (\mathbf H_{k+1}^{j})^T (\mathbf S_{k+1}^j)^{-1}  \\
        \mathbf Q_{k+1}^j &= (\mathbf H_{k+1}^j)^T \bar{\mathcal{R}}_{k+1}^{-1}\mathbf H_{k+1}^j \! + \! (\mathbf J_{k+1}^j)^{-T} \mathbf P_{k+1|k}^{-1}\left( \mathbf J_{k+1}^j\right)^{-1} \\
        \mathbf S_{k+1}^j &= \mathbf H_{k+1}^j \mathbf J_{k+1}^j  \mathbf P_{k+1|k} (\mathbf J_{k+1}^j)^{T} (\mathbf H_{k+1}^j)^T +\bar{\mathcal{R}}_{k+1}
    \end{aligned}
\end{equation}
\normalsize
where $\mathbf Q_{k+1}^j$ is the Hessian matrix of (\ref{e:cost}) and its inverse represents the covariance of $\delta \mathbf x_j - \delta \mathbf x_j^o$, which can be furthermore written into the form below \cite{bell1993iterated}
\begin{equation}
\setlength{\abovedisplayskip}{0.15cm} 
\setlength{\belowdisplayskip}{0.15cm} 
    \begin{aligned}
     \mathbf P_{k+1}^j &= (\mathbf Q_{k+1}^j )^{-1} \\
     &= (\mathbf I - \mathbf K_{k+1}^j \mathbf H_{k+1}^j) \mathbf J_{k+1}^j \mathbf P_{k+1|k} (\mathbf J_{k+1}^j)^{T}
    \end{aligned}
\end{equation}

With the optimal $\delta \mathbf x_j^{o}$, the update of $\mathbf x_{k+1}$ estimate is then
\begin{equation}
\setlength{\abovedisplayskip}{0.15cm} 
\setlength{\belowdisplayskip}{0.15cm} 
    \mathbf x_{k+1 | k+1}^{j+1}= \mathbf x_{k+1 | k+1}^{j} \boxplus \delta \mathbf x_j^{o} 
\end{equation}

The above process iterates until convergence or exceeding the maximum steps. 

\begin{figure}[t]
\vspace{-0.3cm}
\setlength{\abovecaptionskip}{-0.1cm} 
\setlength{\belowcaptionskip}{-0.2cm} 
\centering
        \includegraphics[width=0.7\columnwidth]{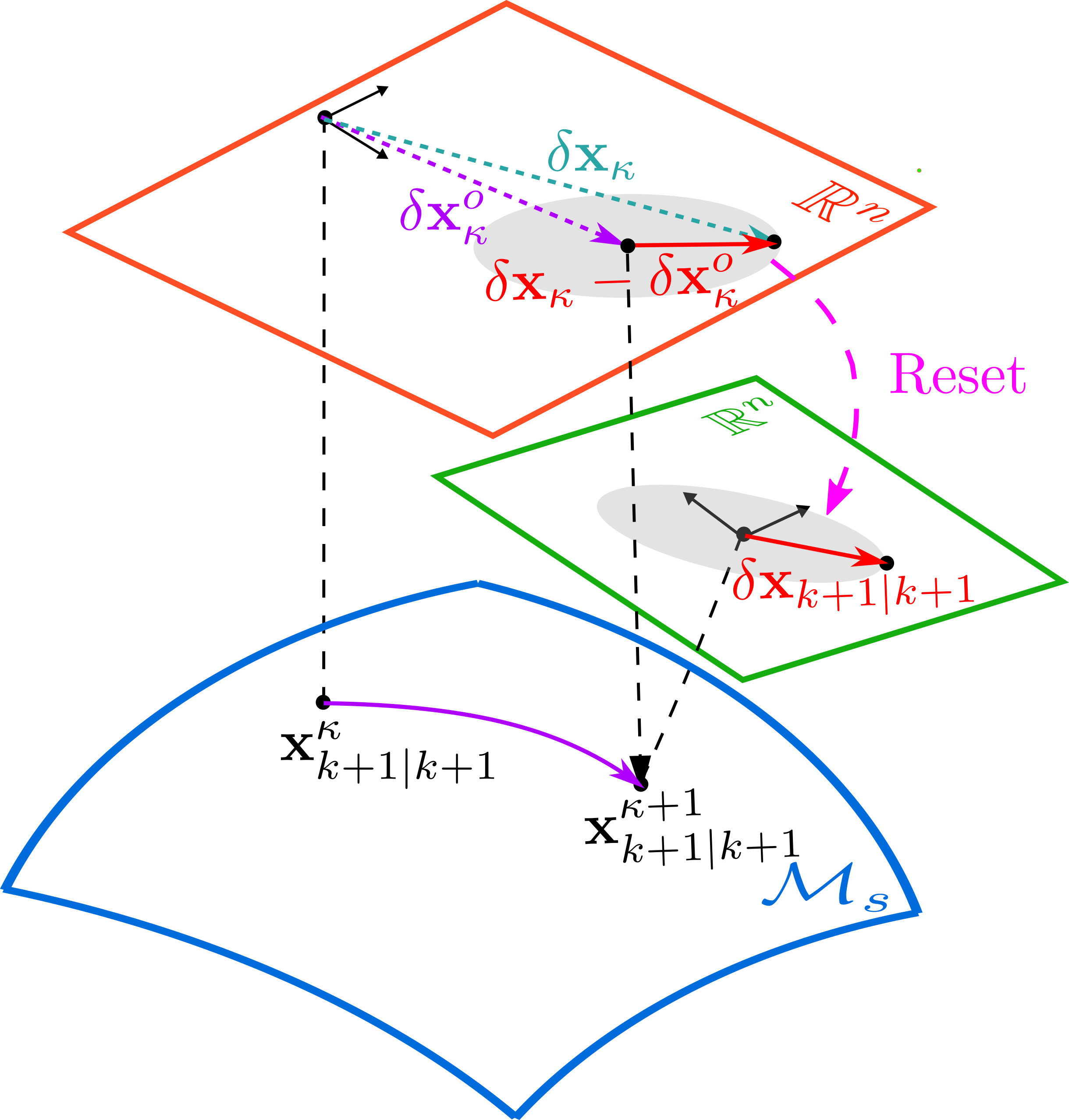}\\
        \centering
    \caption{Reset of covariance. The red and green $\mathbb{R}^n$ spaces are local homeomorphic linear spaces of the $\mathcal{M}_s$ space at points $\mathbf{x}_{k+1|k+1}^\kappa$ and $\mathbf{x}_{k+1|k+1}^{\kappa+1}$ respectively.}
    \label{fig:cov_reset}
    \vspace{-0.3cm}
\end{figure}

\subsubsection{Covariance reset}~\label{sec:reset_cov}
Assume the iterated update stops after $\kappa \geq 0$ iterations, resulting in a MAP estimate $\mathbf x_{k+1 | k+1}^{\kappa+1}$ and covariance matrix $\mathbf P_{k+1}^\kappa$. Then $\mathbf x_{k+1 | k + 1}^{\kappa+1}$ becomes the Kalman update of $\mathbf x_{k+1}$ 
\begin{equation}
\setlength{\abovedisplayskip}{0.15cm} 
\setlength{\belowdisplayskip}{0.15cm} 
     \mathbf x_{k+1|k+1} = \mathbf x_{k+1 | k + 1}^{\kappa+1}
\end{equation}
which is passed to the next step of Kalman filter. For the $\mathbf P_{k+1}^\kappa$, note that it describes the covariance of $\delta \mathbf x_\kappa - \delta \mathbf x_\kappa^o$ in the local homeomorphic space of $\mathbf{x}_{k+1|k+1}^\kappa$, while what required at the next step of Kalman filter should be the covariance $\mathbf{P}_{k+1|k+1}$ describing error $\delta\mathbf{x}_{k+1|k+1}$ in the local homeomorphic space of $\mathbf{x}_{k+1|k+1}$ (see Sec.~\ref{sec:initialization}). This discrepancy necessitates a projection step as shown in Fig. \ref{fig:cov_reset}. According to the definition of the error state in (\ref{error_state}), we have
\begin{equation}
\setlength{\abovedisplayskip}{0.15cm} 
\setlength{\belowdisplayskip}{0.15cm} 
    \begin{aligned}
    \delta\mathbf{x}_{k+1|k+1}&=\mathbf{x}_{k+1}\boxminus\mathbf{x}_{k+1|k+1}=\mathbf{x}_{k+1}\boxminus\mathbf{x}_{k+1|k+1}^{\kappa + 1}\\
    \delta\mathbf{x}_{\kappa}&=\mathbf{x}_{k+1}\boxminus\mathbf{x}_{k+1|k+1}^{\kappa}
    \end{aligned}
\end{equation}
which leads to
\begin{equation}
\setlength{\abovedisplayskip}{0.15cm} 
\setlength{\belowdisplayskip}{0.15cm} 
\begin{aligned}
\delta\mathbf{x}_{k+1|k+1}& \! \!=\!(\mathbf{x}_{k+1|k+1}^{\kappa} \! \!  \boxplus \! \delta\mathbf{x}_{\kappa}) \! \boxminus \! \mathbf{x}_{k+1|k+1}^{\kappa+1} \\
&=\mathbf{L}_{k+1} (\delta\mathbf{x}_{\kappa} - \delta\mathbf{x}_{\kappa}^o )
\end{aligned}    
\end{equation}
where
\begin{equation}~\label{e:reset_esekf}
\setlength{\abovedisplayskip}{0.15cm} 
\setlength{\belowdisplayskip}{0.15cm} 
    \hspace{-0.3cm}\mathbf{L}_{k+1} \!=\! \left. \frac{\partial\left(\! \left( \! \left(\mathbf{x}\boxplus\mathbf{u}\right)\oplus \mathbf{v}\right)\boxminus\mathbf{y}\right)}{\partial\mathbf{u}} \right|_{\begin{subarray}
    \mathbf \mathbf{x}=\mathbf{x}_{k+1|k+1}^{\kappa}, \mathbf{u}= \delta\mathbf{x}_{\kappa}^o,\\
    \mathbf{v}=\mathbf{0}, \mathbf{y}= \mathbf{x}_{k+1|k+1}^{\kappa+1}\end{subarray}}
\end{equation}
is the Jacobian of $\delta\mathbf{x}_{k+1|k+1}$ w.r.t. $\delta\mathbf{x}_{\kappa}$ evaluated at $\delta\mathbf{x}_{\kappa}^o$.

Finally, the covariance for $\delta\mathbf{x}_{k+1|k+1}$ is
\begin{equation}
\setlength{\abovedisplayskip}{0.15cm} 
\setlength{\belowdisplayskip}{0.15cm} 
    \mathbf{P}_{k+1|k+1}=\mathbf{L}_{k+1} {\mathbf{P}}_{k+1}^{\kappa} \mathbf{L}_{k+1}^T
\end{equation}

In particular, for an extended Kalman filter (i.e., $\kappa = 0$), $\mathbf J_{k+1}^{\kappa} = \mathbf I$ while $\mathbf L_{k+1} \neq \mathbf I$; for a fully converged iterated Kalman filter (i.e., $\kappa$ is sufficiently large), $\mathbf J_{k+1}^{\kappa} \neq \mathbf I$ while $\mathbf L_{k+1} = \mathbf I$.

\subsubsection{Isolation of manifolds}~\label{sec:isolation_update}
Notice that the two matrices $\mathbf J_{k+1}^j$and $\mathbf{L}_{k+1}$ required in the Kalman upudate only depend on the manifold $\mathcal{M}_s$ thus being manifold-specific matrices. Their values for commonly used manifolds are summarized in Tab.~\ref{tab:mani-spec_L}. Again, the manifold-specific parts for any {\it compound manifold} are the concatenation of these {\it primitive manifolds}. 
\begin{table}[ht]
\vspace{-0.3cm}
\setlength{\abovecaptionskip}{-0cm} 
\setlength{\belowcaptionskip}{-0.2cm} 
\centering
\caption{Manifold-specific parts for $\mathbf{J}_{k+1}^j$, $\mathbf{L}_{k+1}$}\label{tab:mani-spec_L}
\begin{threeparttable}
    \begin{tabular}{lll}
    \toprule
    $\mathcal{M}_s$&$\mathbf{J}_{k+1}^j$&$\mathbf{L}_{k+1}$\\
    \midrule
    $\mathbb{R}^n$&$\mathbf{I}_{n\times n}$&$\mathbf{I}_{n\times n}$\\
    $SO(3)$&$\mathbf{A}(\delta\mathbf{x}_{k+1|k+1}^j)^T$&$\mathbf{A}(\delta\mathbf{x}_{\kappa}^o)^T$\\
    $\mathbb{S}^2(r)$&$\frac{-1}{{r}^2}\mathbf{B}(\mathbf{x}_{k+1|k+1}^j)^T$&$\frac{-1}{{r}^2}\mathbf{B}(\mathbf{x}_{k+1|k+1}^{\kappa+1})^T$\\
    &$\cdot\mathbf{R}(\mathbf{B}(\mathbf{x}_{k+1|k})\delta\mathbf{x}_{k+1|k+1}^j)$&$\cdot\mathbf{R}(\mathbf{B}(\mathbf{x}_{k+1|k+1}^\kappa)\delta\mathbf{x}_{\kappa}^o)$\\
    &$\cdot\lfloor\!\mathbf{x}_{k+1|k}\rfloor^2$&$\cdot\lfloor\mathbf{x}_{k+1|k+1}^\kappa\rfloor^2$\\
    &$\cdot\mathbf{A}(\mathbf{B}(\mathbf{x}_{k+1|k})\delta\mathbf{x}_{k+1|k+1}^j)^T$&$\cdot\mathbf{A}(\mathbf{B}(\mathbf{x}_{k+1|k+1}^\kappa)\delta\mathbf{x}_{\kappa}^o)^T$\\
    &$\cdot\mathbf{B}(\mathbf{x}_{k+1|k})$&$\cdot\mathbf{B}(\mathbf{x}_{k+1|k+1}^\kappa)$\\
    \bottomrule\\
    \end{tabular}
    \begin{tablenotes}
    \item[1] $\delta\mathbf{x}_{k+1|k+1}^j=\mathbf{x}_{k+1|k+1}^j\boxminus\mathbf{x}_{k+1|k}$.
    \end{tablenotes}
    \end{threeparttable}
    \vspace{-0.3cm}
\end{table}

\subsection{Fully iterated Kalman filter on differentiable Manifolds}
Summarizing all the above procedures in sections \ref{sec:initialization}, \ref{sec:esekf_state_propagation}, \ref{algorithm_ekf}, \ref{sec:state_update_esekf} lead to the full error-state iterated Kalman filter operating on differentiable manifolds (see Algorithm 1). Setting the number of iteration $N_{\text{max}}$ to zero leads to the error-state extended Kalman filter used in \cite{Lu2019IMU, sola2017quaternion}. 

\begin{table}[ht]
\setlength{\abovecaptionskip}{-0cm} 
\setlength{\belowcaptionskip}{-0.2cm} 
    \begin{tabular}{ll}
    \toprule
    \!Algorithm 1:\!\!&\!\!\!\!Iterated error-state Kalman filter on differentiable manifolds  \\
    \midrule
    \!Input:\!\!&\!\!\!\!$\mathbf{x}_{k|k}$, $\mathbf{P}_{k|k}$, $\mathbf{u}_k$, $\mathbf{z}_{k+1}$\\
    \!Output:\!\!&\!\!\!\! State update $\mathbf{x}_{k+1|k+1}$ and covariance $\mathbf{P}_{k+1|k+1}$\\
    \!Predict:\!\!&\!\!\!\!\\
    \!&\!\!\!\!$\mathbf{x}_{k+1|k}=\mathbf{x}_{k|k}\oplus \left(\Delta t\mathbf{f}\left(\mathbf{x}_{k|k}, \mathbf{u}_{k}, \mathbf{0}\right)\right)$;\\
    \!&$\!\!\!\!\mathbf{P}_{k+1|k}=\mathbf{F}_{
    \mathbf{x}_{k}}\mathbf{P}_{k|k}\mathbf{F}_{\mathbf{x}_{k}}^T+\mathbf{F}_{\mathbf{w}_{k}}\mathcal{Q}_k \mathbf{F}_{\mathbf{w}_{k}}^T$;\\
    \!Update:\!\!&\!\!\!\!\\
    \!& \!\!\!\! $j = -1; \ \ {\mathbf{x}}_{k+1|k+1}^{0} = \mathbf x_{k+1|k}$; \\
    \!&\!\!\!\!\textbf{while} Not Converged and $j \leq N_{\text{max}} - 1$ \textbf{do}\\
    \!&$\ j = j + 1$; \\
    \!&$\ $Calculate $\mathbf r_{k+1}^j$, $\mathbf{D}_{k+1}^j$, ${\mathbf{H}}_{k+1}^{j}$ as in (\ref{eq:residual}) and (\ref{e:iterative_error}); \\
    \!&$\ $Calculate ${\mathbf{J}}_{k+1}^{j}$ as in (\ref{eq:update}); \\
    \!&$\ $Calculate $\mathbf K_{k+1}^j$ and $\delta \mathbf x_j^o $ as in (\ref{e:ekf_update}); \\
    \!&$\ {\mathbf{x}}_{k+1|k+1}^{j+1}\!=\!{\mathbf{x}}_{k+1|k+1}^{j}\boxplus\! \delta \mathbf x_j^o$;\\
    \!&\!\!\!\!\textbf{end} \textbf{while}\\
    \!&\!\!\!\!$\mathbf P_{k+1}^j \! = \! (\mathbf I \! - \!  \mathbf K_{k+1}^j \mathbf H_{k+1}^j) \mathbf J_{k+1}^j \mathbf P_{k+1|k} (\mathbf J_{k+1}^j)^{T}$; \\
    \!&\!\!\!\!$\mathbf{x}_{k+1|k+1}={\mathbf{x}}_{k+1|k+1}^{j+1}$;\\
    \!&\!\!\!\!$\mathbf{P}_{k+1|k+1}=\mathbf{L}_{k+1} {\mathbf{P}}_{k+1}^{j} \mathbf{L}_{k+1}^T$; \\
    \bottomrule
    \end{tabular}
    \label{tab:algorithm_2}
    \vspace{-0.3cm}
\end{table}

\section{Integrating Manifolds into Kalman Filters and Toolkit Development}~\label{C++_imp}


Shown in Sec.~\ref{esekfom}, the derived Kalman filter is formulated in symbolic representations and it is seen  
that each step of the Kalman filter is nicely separated into manifold constraints and system-specific behaviors. More specifically, state predict (\ref{pred0}) breaks into the manifold-specific operation $\oplus$ and system-specific part $\Delta t\mathbf{f}\left(\mathbf{x}, \mathbf{u}, \mathbf{w}\right)$, the two matrices $\mathbf{F}_{\mathbf{x}}$ and $\mathbf{F}_{\mathbf{w}}$ used in the covariance propagation (\ref{e:Cov_prop}) breaks into the manifold-specific parts $\mathbf{G}_{\mathbf{x}}$, $\mathbf{G}_{\mathbf{f}}$ and system-specific parts $ \frac{\partial\mathbf{f}\left(\mathbf{x} \boxplus \delta \mathbf{x},\mathbf{u}, \mathbf{0}\right)}{\partial \delta \mathbf{x}}|_{\delta \mathbf x= \mathbf 0}, \frac{\partial\mathbf{f}\left(\mathbf{x},\mathbf{u}, \mathbf{w} \right)}{\partial \mathbf{w} }|_{ \mathbf w= \mathbf 0}$. State update (\ref{e:ekf_update}) breaks into the manifold-specific operation $\boxplus$, manifold-specific part $\mathbf{J}_{k+1}^j$ and system-specific parts, i.e.,  $\mathbf{h}(\mathbf{x}, \mathbf v)$, $ \frac{\partial\left(\mathbf{h}(\mathbf{x}\boxplus \delta \mathbf{x}, \mathbf 0)\boxminus\mathbf{h}(\mathbf{x}, \mathbf 0)\right)}{\partial \delta \mathbf{x}}|_{\delta \mathbf x= \mathbf 0}$, and $ \frac{\partial\left(\mathbf{h}(\mathbf{x}, \mathbf v)\boxminus\mathbf{h}(\mathbf{x}, \mathbf 0)\right)}{\partial \mathbf{v}}|_{ \mathbf v= \mathbf 0}$.
And covariance reset only involves the manifold-specific part $\mathbf{L}_{k+1}$. Note that these system-specific descriptions are often easy to be derived even for robotic systems of high dimension (see Sec.~\ref{experiments}). 

The nice separation property between the manifold constraints and system-specific behaviors allows the integration of manifolds into the Kalman filter framework, and only leaves system-specific parts to be filled for specific systems. Moreover, enabled by the manifold composition in (\ref{e:compound_plus_minus}) and (\ref{diff_compound}), we only need to do so for simple {\it primitive manifolds} while those for larger {\it compound manifolds} can be automatically constructed. These two properties enabled us to develop a $C$++ toolkit that integrates the manifold-specific operations with a Kalman filter. With this toolkit, users need only to specify the manifold of state $\mathcal{M}_s$, measurement $\mathcal{M}_m$, and system-specific descriptions (i.e., function $\mathbf f, \mathbf h$ and their derivatives), and call the respective Kalman filter operations (i.e., predict and update) according to the current event (e.g., reception of an input or a measurement).

The current toolkit implementation is a full multi-rate iterated Kalman filter naturally operating on differentiable manifolds and is thus termed as {\it IKFoM}. Furthermore, it supports three {\it primitive manifolds}: $\mathbb{R}^n$, $SO(3)$ and $\mathbb{S}^2(r)$, but extendable to other types of {\it primitive manifolds} with proper definition of the operation $\boxplus\backslash\boxminus, \oplus$, and differentiations $\frac{\partial\left(\left(\left(\mathbf{x}\boxplus\mathbf{u}\right)\oplus \mathbf{v}\right)\boxminus\mathbf{y}\right)}{\partial \mathbf{u}}$, $\frac{\partial\left(\left(\left(\mathbf{x}\boxplus\mathbf{u}\right)\oplus \mathbf{v}\right)\boxminus\mathbf{y}\right)}{\partial \mathbf{v}}$. The toolkit is open sourced and more details about the implementation can be found at~\url{https://github.com/hku-mars/IKFoM}.

\section{Experiments}~\label{experiments}

In this section, we apply our developed Kalman filter framework and toolkit implementations to a tightly-coupled lidar-inertial navigation system taken from \cite{xu2020fast}. The overall system, shown in Fig. \ref{fig:conf_lid}, consists of a solid-state lidar (Livox AVIA) with a built-in IMU and an onboard computer. The lidar provides a typical scan rate of $10Hz-100Hz$ and $200Hz$ gyro and accelerometer measurements. Unlike conventional spinning lidars (e.g., Velodyne lidars), the Livox AVIA has only 70$^\circ$ Field of View (FoV), making the lidar-inertial odometry rather challenging. The onboard computer is configured with a $1.8 GHz$ quad-core Intel i7-8550U CPU and $8 GB$ RAM. Besides the original state estimation problem considered in \cite{xu2020fast}, we further consider the online estimation of the extrinsic between the lidar and IMU. 
\subsection{System modeling}
The global frame is denoted as $G$ (i.e. the initial IMU frame), the IMU frame is taken as the body frame (denoted as $I$), and the lidar frame is denoted as $L$. Assuming the lidar is rigidly attached to the IMU with an unknown extrinsic $^I\mathbf{T}_L=\left({}^I\mathbf{R}_L, {}^I\mathbf{p}_L\right)$, the objective of this system is to 1) estimate kinematics states of the IMU including its position ($^G\mathbf{p}_I$), velocity ($^G\mathbf{v}_I$), and rotation ($^G\mathbf{R}_I\in SO(3)$) in the global frame; 2) estimate the biases of the IMU (i.e., $\mathbf{b}_\mathbf{a}$ and $\mathbf{b}_{\bm \omega}$; 3) estimate the gravity vector ($^G\mathbf{g}$) in the global frame; 4) estimate the extrinsic $^I\mathbf{T}_L=\left({}^I\mathbf{R}_L, {}^I\mathbf{p}_L\right)$ online; and 5) build a global point cloud map of the observed environment. 
\begin{figure}[t]
\setlength{\abovecaptionskip}{-0.1cm} 
\setlength{\belowcaptionskip}{-0.2cm} 
	\centering
	\includegraphics[width=1.0\columnwidth]{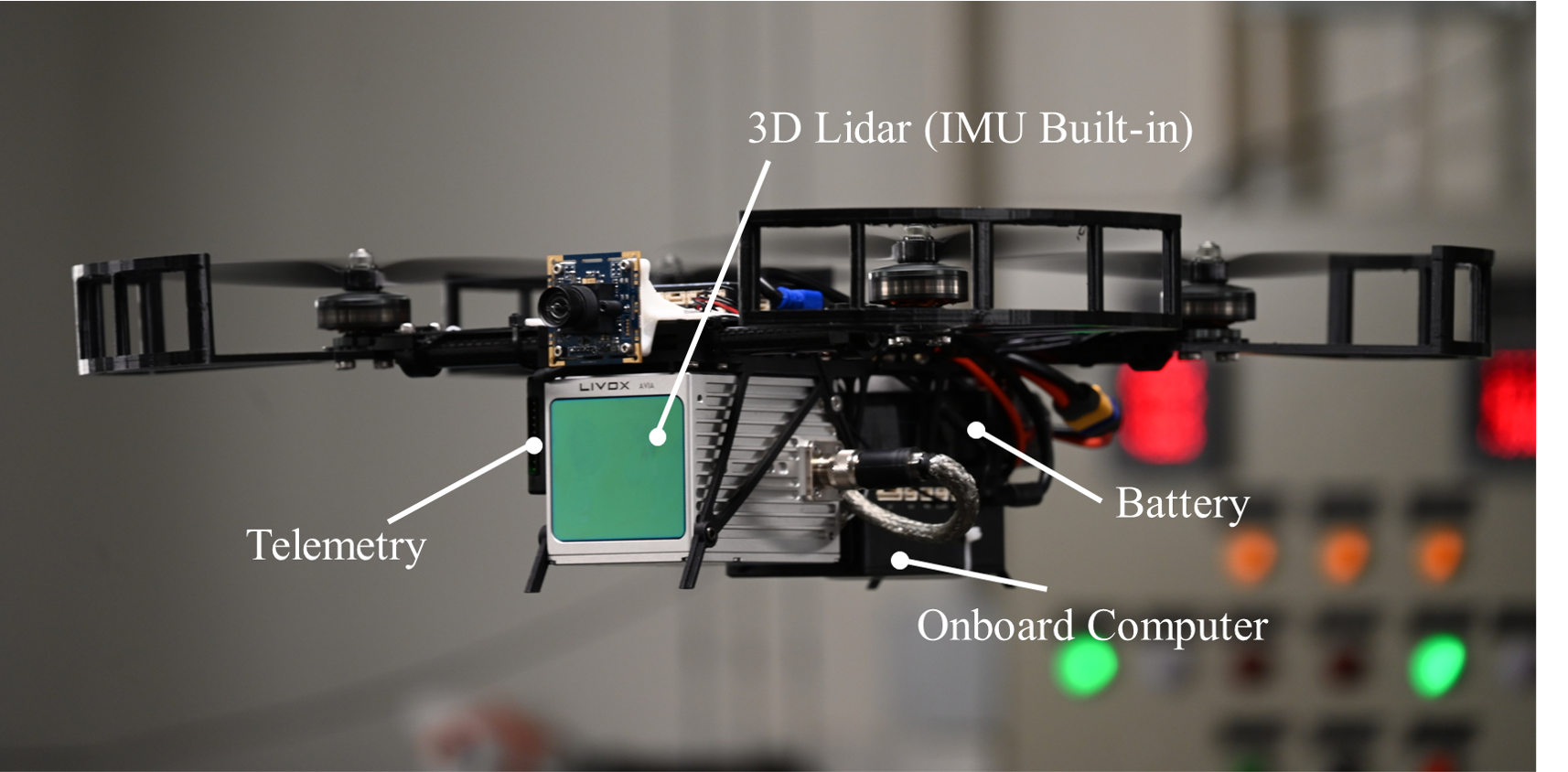}
	\caption{Configuration of the lidar-inertial system from \cite{xu2020fast}: A small scale ($280mm$ wheelbase) quadrotor UAV carrying a Livox AVIA lidar and a DJI Manifold 2C computer. The onboard camera is for visualization only.
		\label{fig:conf_lid}}
	\vspace{-0.3cm}
\end{figure}

Augmenting the state formulation in \cite{xu2020fast} with the lidar-IMU extrinsic, we have:
\begin{equation}~\label{e:system_model_2}
\setlength{\abovedisplayskip}{0.15cm} 
\setlength{\belowdisplayskip}{0.15cm} 
\begin{aligned}
&^G\dot{\mathbf{p}}_I = {}^G\mathbf{v}_I, \ ^G\dot{\mathbf{v}}_I = {}^G\mathbf{R}_I\left(\mathbf{a}_m-\mathbf{b}_\mathbf{a}-\mathbf{n}_\mathbf{a}\right)+{}^G\mathbf{g}\\
&^G\dot{\mathbf{R}}_I = {}^G\mathbf R_I \lfloor \bm\omega_m\!-\!\mathbf{b}_{\bm\omega}\!-\!\mathbf{n}_{\bm\omega} \rfloor,\ \dot{\mathbf{b}}_{\bm\omega} = \mathbf{n}_{\mathbf{b}_{\bm\omega}}, \ \dot{\mathbf{b}}_\mathbf{a} = \mathbf{n}_{\mathbf{b}_\mathbf{a}}\\
&{}^G\dot{\mathbf{g}}=\mathbf{0},\ {}^I\dot{\mathbf{R}}_L=\mathbf{0},\ {}^I\dot{\mathbf{p}}_L=\mathbf{0}
\end{aligned}
\end{equation}
where $\mathbf{a}_m$, $\bm\omega_m$ are measurements of IMU, $\mathbf{n}_\mathbf{a}$,
$\mathbf{n}_{\bm\omega}$ are IMU noises, $\mathbf{n}_{\mathbf{b}_{\bm\omega}}$ and $\mathbf{n}_{\mathbf{b}_\mathbf{a}}$ are zero mean Gaussian white noises that drive the IMU biases $\mathbf{b}_{\bm\omega}$ and $\mathbf{b}_{\mathbf{a}}$ respectively. The gravity vector $^G\mathbf{g}$ is of fixed length $9.81 m/s^2$. 

The measurement model is identical to \cite{xu2020fast}: for a new scan of lidar raw points, we extract the plane and edge points (i.e., feature points) based on the local curvature~\cite{lin2020loam}. Then for a measured feature point $^L\mathbf{p}_{f_i}, i=1,...,m$, its true location in the global frame should lie on the corresponding plane (or edge) in the map built so far. More specifically, we represent the corresponding plane (or edge) in the map by its normal direction (or direction of the edge) $\mathbf u_i$ and a point ${}^G\mathbf{q}_{i}$ lying on the plane (or edge). Since the point $^L\mathbf{p}_{f_i}, i=1,...,m$ is measured in the lidar local frame (thus denoted as $L$) and contaminated by measurement noise $\mathbf n_i$, the true point location in the global frame is ${}^G\mathbf{T}_I{}^I\mathbf{T}_L \left({}^L\mathbf{p}_{f_i} - \mathbf n_i \right)$. Since this true location lies on the plane (or edge) defined by $\mathbf u_i$ and ${}^G\mathbf{q}_{i}$, its distance to the plane (or edge) should be zero, i.e.,
\begin{equation}~\label{e:lidar-inertial-output}
\setlength{\abovedisplayskip}{0.15cm} 
\setlength{\belowdisplayskip}{0.15cm} 
\mathbf{G}_i\left({}^G\mathbf{T}_I{}^I\mathbf{T}_L \left(  {}^L\mathbf{p}_{f_i} - \mathbf n_i \right)-{}^G\mathbf{q}_{i}\right) \!=\! \mathbf 0, \ i = 1, \cdots, m
\end{equation}
where $\mathbf{G}_i=\mathbf{u}_i^T$ for a planar feature and $\mathbf{G}_i=\lfloor\mathbf{u}_i\rfloor$ for an edge feature. This equation defines an implicit measurement model which relates the measurement ${}^L\mathbf{p}_{f_i}$, measurement noise $\mathbf n_i$, and the ground-truth state ${}^G\mathbf{T}_I$ and ${}^I\mathbf{T}_L$. 

To obtain $\mathbf u_i, {}^G\mathbf{q}_{i}$ of the corresponding plane (or edge) in the map, we use the state estimated at the current iteration to project the feature point $^L\mathbf{p}_{f_i}$ to the global frame and find the closest five feature points (of the same type) in the map built so far. After convergence of the iterated Kalman filter, the optimal state update is used to project the feature point $^L\mathbf{p}_{f_i}$ to the global frame and append it to the map. The updated map is finally used in the next step to register new scans. 

\subsection{Canonical representation:}
Using the zero-order hold discretization described in Sec.~\ref{cano_repres}, which is based on the operation $\oplus$, the system with state model (\ref{e:system_model_2}) and measurement model (\ref{e:lidar-inertial-output}) can be discretized and cast into the canonical form (\ref{cano}) as follows:
\begin{equation}\label{e:example_system_description}
\setlength{\abovedisplayskip}{0.15cm} 
\setlength{\belowdisplayskip}{0.15cm} 
    \begin{aligned}
    & \mathcal{M}_s = \mathbb{R}^3 \! \times \! \mathbb{R}^3 \! \times \! SO(3) \! \times \! \mathbb{R}^3  \! \times \! \mathbb{R}^3 \! \times \! \mathbb{S}^2 \! \times \! SO(3) \times \! \mathbb{R}^3, \\
    &\mathcal{M}_m=\underbrace{\mathbb{R}^{1}\times\cdots\times\mathbb{R}^1 \times \mathbb{R}^1 \times \cdots \times \mathbb{R}^1}_{m},\\
    & \mathbf{x}^T = \begin{bmatrix}^G\mathbf{p}_{I}\!\!
    &\!\!^G\mathbf{v}_{I}\!\!
    &\!\!^G\mathbf{R}_{I}\!\!
    &\!\!\mathbf{b}_{{\mathbf{a}}}\!\!
    &\!\!\mathbf{b}_{{\bm\omega}}\!\!
    &\!\!^G\mathbf{g}\!\!&\!\!^I\mathbf{R}_L\!\!&\!\!^I\mathbf{p}_L
    \end{bmatrix}\!\!,\\
    &\mathbf{u}^T=\begin{bmatrix}\mathbf{a}_{m}&\bm\omega_{m}\end{bmatrix}\\
    &\mathbf{f}\left(\mathbf{x},\mathbf{u}, \mathbf{w}\right)^T =\left[\begin{matrix}^G\mathbf{v}_{I}\!\!&\!\!^G\mathbf{R}_{I}  \!\left(\mathbf{a}_{m}\!-\!\mathbf{b}_{\mathbf{a}}\!-\!\mathbf{n}_{\mathbf{a}} \right)\!+\!{}^G\mathbf{g}\!\!\end{matrix}\right.\\
    &\quad\quad\quad\quad\left.\begin{matrix}\!\!\bm\omega_{m}\!-\!\mathbf{b}_{{\bm\omega}}\!-\!\mathbf{n}_{\bm\omega}&\mathbf{n}_{\mathbf{b}_{{\mathbf{a}}}}&\mathbf{n}_{\mathbf{b}_{{\bm\omega}}}&\mathbf{0}&\mathbf{0}&\mathbf{0}\end{matrix}\right],\\
    &\mathbf{h}_i \!\left(\mathbf{x},\! \mathbf{v}\right)^T = \mathbf{G}_i\left({}^G\mathbf{T}_I{}^I\mathbf{T}_L \left(  {}^L\mathbf{p}_{f_i} - \mathbf n_i \right)-{}^G\mathbf{q}_{i}\right),\\
    &\mathbf{w}^T =\begin{bmatrix}\mathbf{n}_{\mathbf{a}}&\mathbf{n}_{{\bm\omega}}&\mathbf{n}_{\mathbf{b}_{\mathbf{a}}} &\mathbf{n}_{\mathbf{b}_{{\bm\omega}}}\end{bmatrix},\\
    & \mathbf{v}^T = \begin{bmatrix}\cdots&\mathbf{n}_i &\cdots\end{bmatrix}, i=1,...,m.
    \end{aligned}
\end{equation}
with equivalent measurement $\mathbf z$ being constantly zero. 

Accordingly, the system-specific partial differentiations, including $\left.\frac{\partial\mathbf{f}\left(\mathbf{x} \boxplus \delta\mathbf{x},\mathbf{u},\mathbf{0}\right)}{\partial \delta \mathbf{x}}\right|_{\delta \mathbf{x} = \mathbf{0}}$, $\left.\frac{\partial\mathbf{f}\left(\mathbf{x},\mathbf{u},\mathbf{w}\right)}{\partial\mathbf{w}}\right|_{\mathbf{w}=\mathbf{0}}$ and $\frac{\partial \left(\mathbf{h}\left(\mathbf{x} \boxplus \delta \mathbf{x}, \mathbf 0\right)\boxminus\mathbf{h}\left(\mathbf{x}, \mathbf 0\right)\right)}{ \partial \delta \mathbf{x}}|_{\delta \mathbf{x}=\mathbf{0}}$, $\frac{\partial\left(\mathbf{h}(\mathbf{x}, \mathbf v)\boxminus\mathbf{h}(\mathbf{x}, \mathbf 0)\right)}{\partial \mathbf{v}}|_{ \mathbf v= \mathbf 0}$ can be easily calculated as follows:

Partial differentiations for $\mathbf f(\mathbf x, \mathbf u, \mathbf w)$: 

\begin{equation}\label{e:partial_F_sup}
\setlength{\abovedisplayskip}{0.15cm} 
\setlength{\belowdisplayskip}{0.15cm} 
\begin{aligned}
    \hspace{-0.3cm}\left.\frac{\partial\mathbf{f}\left(\mathbf{x} \boxplus \delta \mathbf{x},\mathbf{u},\mathbf{0}\right)}{\partial \delta \mathbf{x}}\right|_{\delta \mathbf{x} = \mathbf{0}}\!\!&=\!\!\begin{bmatrix}\mathbf{0}&\mathbf{I}&\mathbf{0}&\mathbf{0}&\mathbf{0}&\mathbf{0}&\mathbf{0}&\mathbf{0}\\
    \mathbf{0}&\mathbf{0}&\mathbf{U}^F_{23}&-^G\mathbf{R}_{I}&\mathbf{0}&\mathbf{U}_{26}^F&\mathbf{0}&\mathbf{0}\\
    \mathbf{0}&\mathbf{0}&\mathbf{0}&\mathbf{0}&-\mathbf{I}&\mathbf{0}&\mathbf{0}&\mathbf{0}\\
    \mathbf{0}&\mathbf{0}&\mathbf{0}&\mathbf{0}&\mathbf{0}&\mathbf{0}&\mathbf{0}&\mathbf{0}\\
    \mathbf{0}&\mathbf{0}&\mathbf{0}&\mathbf{0}&\mathbf{0}&\mathbf{0}&\mathbf{0}&\mathbf{0}\\
    \mathbf{0}&\mathbf{0}&\mathbf{0}&\mathbf{0}&\mathbf{0}&\mathbf{0}&\mathbf{0}&\mathbf{0}\\
    \mathbf{0}&\mathbf{0}&\mathbf{0}&\mathbf{0}&\mathbf{0}&\mathbf{0}&\mathbf{0}&\mathbf{0}\\
    \mathbf{0}&\mathbf{0}&\mathbf{0}&\mathbf{0}&\mathbf{0}&\mathbf{0}&\mathbf{0}&\mathbf{0}\end{bmatrix}\\
    \left.\frac{\partial\mathbf{f}\left(\mathbf{x},\mathbf{u},\mathbf{w}\right)}{\partial\mathbf{w}}\right|_{\mathbf{w}=\mathbf{0}}&=\begin{bmatrix}\mathbf{0}&\mathbf{0}&\mathbf{0}&\mathbf{0}\\
-{}^G\mathbf{R}_{I}&\mathbf{0}&\mathbf{0}&\mathbf{0}\\
    \mathbf{0}&-\mathbf{I}&\mathbf{0}&\mathbf{0}\\
    \mathbf{0}&\mathbf{0}&\mathbf{I}&\mathbf{0}\\
    \mathbf{0}&\mathbf{0}&\mathbf{0}&\mathbf{I}\\
    \mathbf{0}&\mathbf{0}&\mathbf{0}&\mathbf{0}\\
    \mathbf{0}&\mathbf{0}&\mathbf{0}&\mathbf{0}\\
    \mathbf{0}&\mathbf{0}&\mathbf{0}&\mathbf{0}\end{bmatrix}
    \end{aligned}
\end{equation}
where $\mathbf{U}^F_{23}=-^G\mathbf{R}_{I}\lfloor\mathbf{a}_{m}-\mathbf{b}_{\mathbf{a}}\rfloor$ and $\mathbf{U}_{26}^F=-\lfloor{}^G\mathbf{g}\rfloor\mathbf{B}({}^G\mathbf{g})$, $\mathbf{B}(\cdot)$ is defined in the equation (\ref{e:Rx_sup}). 

And partial differentiations for $\mathbf h(\mathbf x, \mathbf v)$: 
\begin{equation}~\label{e:H_k+1_2_sup}
\setlength{\abovedisplayskip}{0.15cm} 
\setlength{\belowdisplayskip}{0.15cm} 
\begin{aligned}
    &\frac{\partial \left(\mathbf{h}\left(\mathbf{x} \boxplus \delta \mathbf{x}, \mathbf 0\right)\boxminus\mathbf{h}\left(\mathbf{x}, \mathbf 0\right)\right)}{ \partial \delta \mathbf{x}}|_{\delta \mathbf{x}=\mathbf{0}} = \\
    &\quad \quad \quad  \begin{bmatrix}\vdots&\vdots&\vdots&\vdots&\vdots&\vdots&\vdots&\vdots\\
    \mathbf{G}_i&\mathbf{0}&\mathbf{U}^H_{i3}&\mathbf{0}&\mathbf{0}&\mathbf{0}&\mathbf{U}_{i7}^H&\mathbf{G}_i{}^G\mathbf{R}_I\\
    \vdots&\vdots&\vdots&\vdots&\vdots&\vdots&\vdots&\vdots  \end{bmatrix}, \\
    & \frac{\partial\left(\mathbf{h}(\mathbf{x}, \mathbf v)\boxminus\mathbf{h}(\mathbf{x}, \mathbf 0)\right)}{\partial \mathbf{v}}|_{ \mathbf v= \mathbf 0} = \text{diag}(\cdots, - \mathbf G_i {}^G\mathbf{R}_I {}^I\mathbf{R}_L, \cdots)
\end{aligned}
\end{equation}
where $\mathbf{U}_{i3}^H=-\mathbf{G}_i {}^G\mathbf{R}_I\lfloor{}^I\mathbf{T}_L{}^L\mathbf{p}_{f_{i}} \rfloor$, and $\mathbf{U}_{i7}^H=-\mathbf{G}_i{}^G\mathbf{R}_I{}^I\mathbf{R}_L\lfloor{}^L\mathbf{p}_{f_i}\rfloor$.

The partial differential of $\left.\frac{(\mathbf{x}\boxplus\mathbf{u})\cdot\mathbf{a}}{\partial\mathbf{u}}\right|_{\mathbf{u}=\mathbf{0}}$ with $\mathbf{x}\in SO(3)$ and $\mathbf{a}\in\mathbb{R}^3$ is shown in Lemma~\ref{lemma:pd_SO3} in Appx.~\ref{app:pdiff_on_SO3}, which is utilized in the above procedure.

Supplying the canonical representation of the system (\ref{e:example_system_description}) and the respective partial differentiations (\ref{e:partial_F_sup}, \ref{e:H_k+1_2_sup}) to our toolkit leads to a tightly-coupled lidar-inertial navigation system.

\subsection{Experiment results}

We verify the tightly-coupled lidar-inertial navigation system implemented by our toolkit in three different scenarios, i.e., indoor UAV flight, indoor quick-shake experiment, and outdoor random walk experiment. They are denoted as V1, V2 and V3 respectively. For each scenario, we test the implementation on two trials of data, one collected by ourselves and the other from the original paper \cite{xu2020fast}. The six datasets are denoted as V1-01, V1-02, V2-01, V2-02, V3-01 and V3-02, respectively. In all experiments, the maximal number of iteration in the iterated Kalman filter (see Algorithm 1) is set to 4, i.e., $N_{\rm max} = 4$. We compare our on-manifold Kalman filter {\it IKFoM} with a quaternion-based iterated Kalman filter. In the quanterion-based filter, a rotation is parameteirzed by a quaternion viewed as a vector in $\mathbb{R}^4$, and the gravity is parameterized as a vector in $\mathbb{R}^3$. The quaternion and gravity estimates are normalized after each predict or update, and additional measurement equations due to the normalization constraints are added. For the sake of space limit, we only show the results of the data trial collected in this work, i.e., V1-01, V2-01 and V3-01.

\begin{figure}[t]
\vspace{-0.3cm}
\setlength{\abovecaptionskip}{-0.1cm} 
\setlength{\belowcaptionskip}{-0.2cm} 
	\centering
	\includegraphics[width=1\columnwidth]{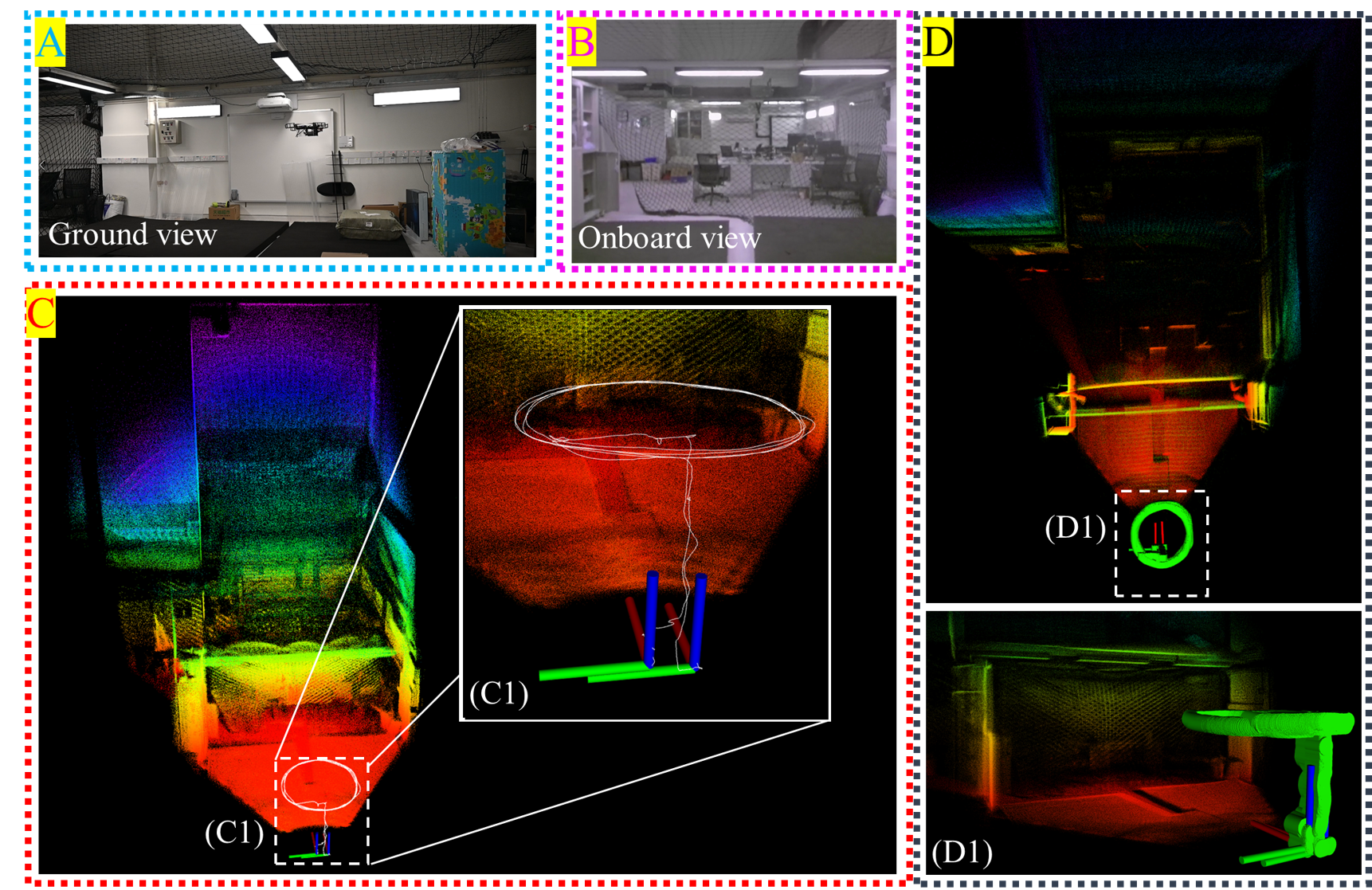}
	\caption{Real-time mapping results of dataset V1-01. A: Photo from ground view; B: Snapshot of onboard FPV video; C: Map result of {\it IKFoM}, (C1) Trajectory and poses of the UAV at beginning and end of the experiment; D: Map result of quaternion-based iterated Kalman filter, (D1) Trajectory and poses of the UAV at beginning and end of the experiment.	\label{fig:lid_map_circle_sup}}
	\vspace{-0.3cm}
\end{figure}

\begin{figure}[b]
\vspace{-0.3cm}
\setlength{\abovecaptionskip}{-0.1cm} 
\setlength{\belowcaptionskip}{-0.2cm} 
	\centering
	\includegraphics[width=1\columnwidth]{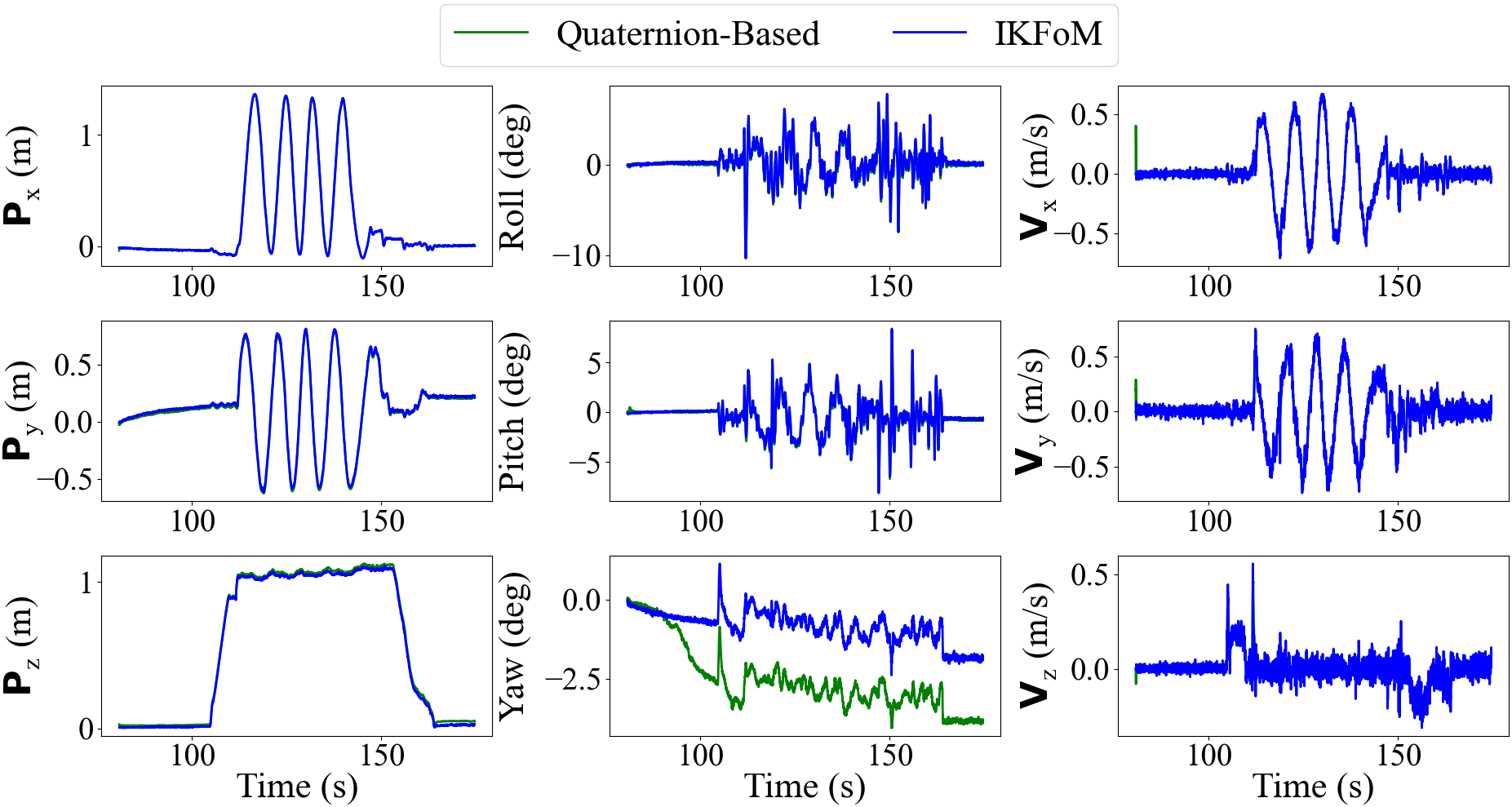}
	\caption{Comparison of estimates of quaternion-based iterated Kalman filter and {\it IKFoM} for the position, rotation in Euler angles and velocity of the IMU on dataset V1-01.
		\label{fig:lid_kin_circle_sup}}
	\vspace{-0.3cm}
\end{figure}

\subsubsection{Indoor UAV flight}

For the dataset V1-01, the experiment is conducted in an indoor environment (see Fig.~\ref{fig:lid_map_circle_sup} (A)) where the UAV took off from the ground and flied in a circle path. During the path following, the UAV is constantly facing at a cluttered office area behind a safety net (see Fig.~\ref{fig:lid_map_circle_sup} (B)).  After the path following, a human pilot took over the UAV and landed it manually to ensure that the landing position coincides with the take-off point.

Fig.~\ref{fig:lid_map_circle_sup} (C) shows the real-time mapping results overlaid with the 3D trajectory estimated by our toolkit. It can be seen that our toolkit achieves consistent mapping even in the cluttered indoor environment. The position drift is less than $0.7825\%$ (i.e., $0.2309m$ drift over the $29.5168m$ path, see Fig. \ref{fig:lid_map_circle_sup} (C1)). This drift is caused, in part by the accumulation of odometry error, which is common in SLAM systems, and in part by inaccurate manual landing. In addition, the lidar-inertial navigation system is implemented using quaternion-based iterated Kalman filter as comparison, the mapping results are shown in Fig.~\ref{fig:lid_map_circle_sup} (D). The results indicate that the point cloud map of the observed scene is constructed properly in this scenario and the estimated pose at the end of the experiment coincides with the starting pose greatly. And the position drift is $0.2382m$ versus $0.2309m$ of {\it IKFoM}.

We show the estimated trajectory of position (${}^G{\mathbf{p}}_I$), rotation (${}^G{\mathbf{R}}_I$), and velocity ($^G{\mathbf{v}}_I$) obtained by {\it IKFoM} as well as the quaternion-based iterated Kalman filter in Fig.~\ref{fig:lid_kin_circle_sup}, where the experiment starts from $80.7994s$ and ends at $174.6590s$. Our method achieves smooth state estimation that is suitable for onboard feedback control. All the estimated state variables agree well with the actual motions. The estimates of the quaternion-based iterated Kalman filter have a good coincidence with our method excepting the estimated Yaw angle. For further experiment demonstration, we refer readers to the videos at~\url{https://youtu.be/sz\_ZlDkl6fA}.

\begin{figure}[b]
\vspace{-0.3cm}
\setlength{\abovecaptionskip}{-0.1cm} 
\setlength{\belowcaptionskip}{-0.2cm} 
	\centering
	\includegraphics[width=1\columnwidth]{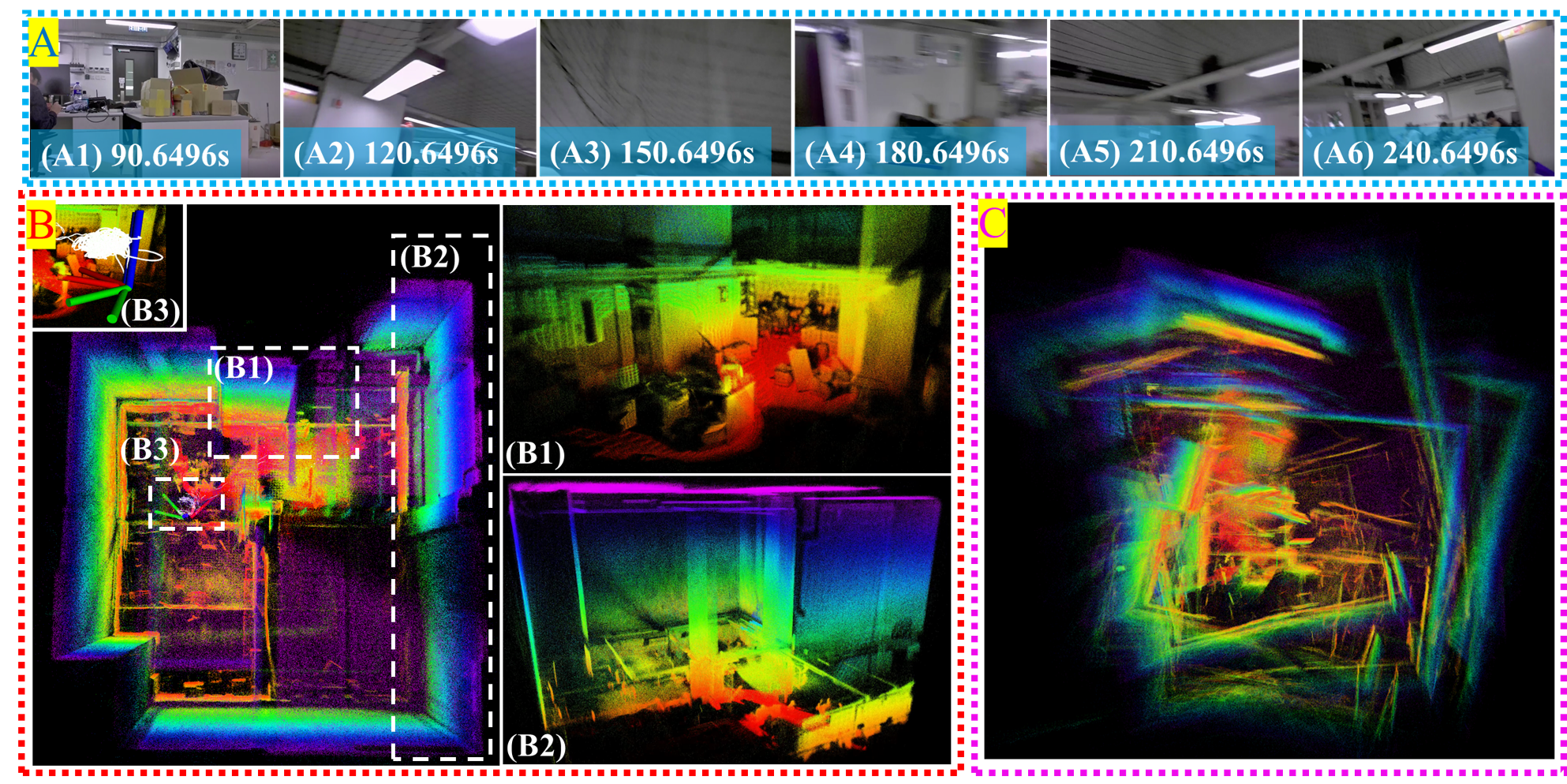}
	\caption{Real-time mapping results of dataset V2-01. A: Snapshots of onboard FPV video; B: Map result of {\it IKFoM}, (B1) Local zoom-in map, (B2) Side view of the map, (B3) Poses of the UAV at the beginning and end of the experiment; C: Map result of quaternion-based iterated Kalman filter.	\label{fig:lid_map_shake}}
	\vspace{-0.3cm}
\end{figure}

\begin{figure}[t]
\vspace{-0.3cm}
\setlength{\abovecaptionskip}{-0.1cm} 
\setlength{\belowcaptionskip}{-0.2cm} 
\centering
	\includegraphics[width=1.0\columnwidth]{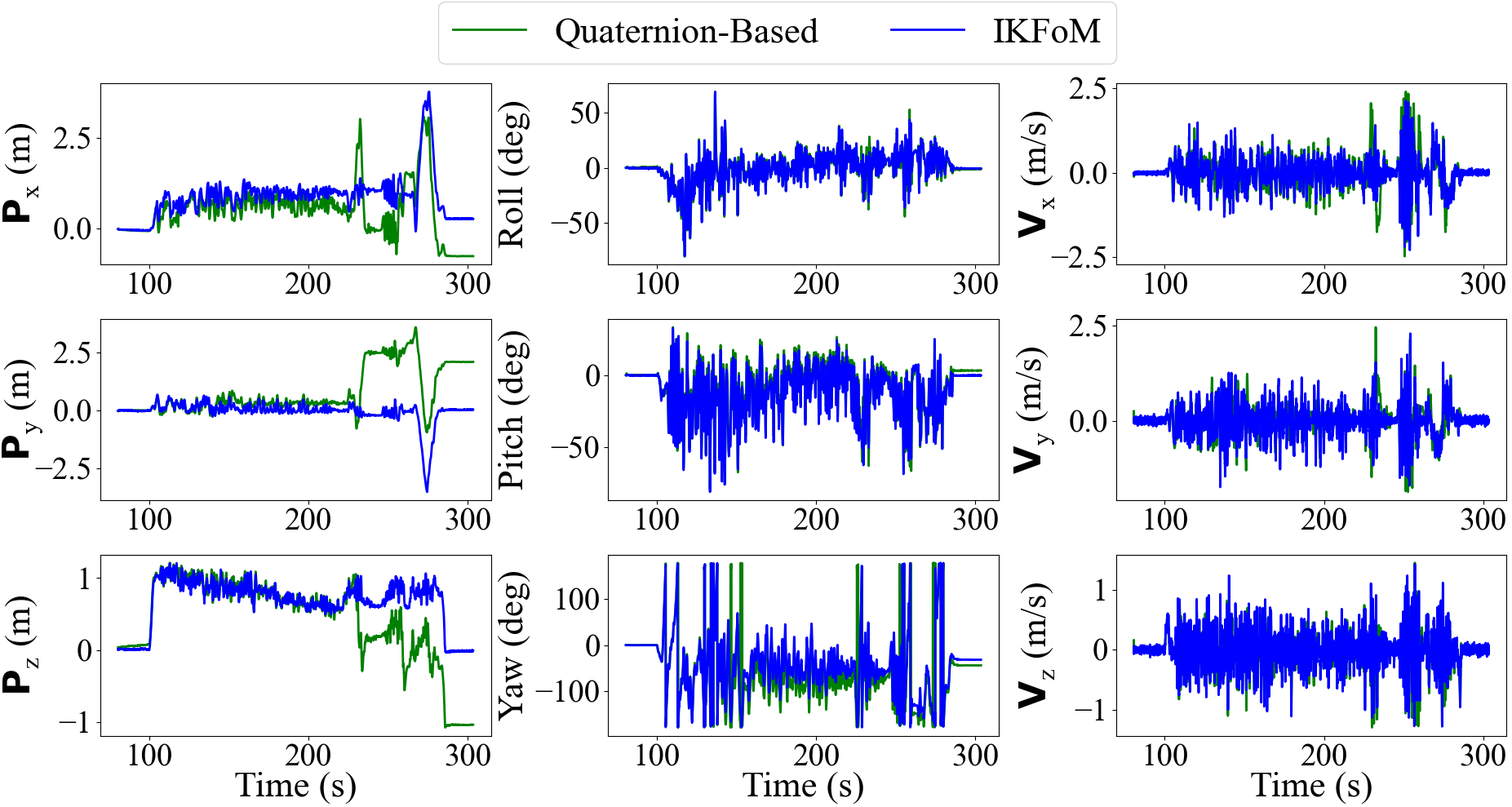}
	\caption{Comparison of estimations of quaternion-based iterated Kalman filter and {\it IKFoM} for position, rotation in Euler angles and velocity of the IMU on dataset V2-01, an indoor quick shake experiment.
		\label{fig:lid_kin_shake}}
	\vspace{-0.3cm}
\end{figure}

\subsubsection{Indoor quick shake}
The quick-shake experiment in V2-01 is conducted in a cluttered office area (see Fig.~\ref{fig:lid_map_shake} (A)). In the experiment, the UAV shown in Fig.~\ref{fig:conf_lid} is handheld and quickly shaken, creating a large rotation up to $356.85 deg/s$. The UAV ends at the starting position to enable the computation of odometry drift. 

Fig.~\ref{fig:lid_map_shake} (B) shows the real-time mapping result of {\it IKFoM} on dataset V2-01. It is seen that our system achieves consistent mapping even in fast rotational movements that are usually challenging for visual-inertial odometry due to image defocus and/or motion blur (see Fig.~\ref{fig:lid_map_shake} (A4) and (A5)). As shown in Fig.~\ref{fig:lid_map_shake} (B3), the estimated final position of the UAV coincides with the beginning position, leading to a position drift less than $0.2364\%$ (i.e., $0.2827m$ drift over $119.58m$ path). As comparison, the mapping results of the quaternion-based iterated Kalman filter are shown in Fig~\ref{fig:lid_map_shake} (C). This method fails to construct the map due to the improper propagation of the state covariance, for which, fast rotational movements amplify the phenomenon. 

Blue trajectories in Fig.~\ref{fig:lid_kin_shake} show {\it IKFoM} estimates of position ($^G{\mathbf{p}}_I$), rotation ($^G{\mathbf{R}}_I$) in Euler angles and velocity ($^G{\mathbf{v}}_I$) of the IMU, where the experiment starts from $80.5993s$ and ends at $303.499s$. Those estimates are changing in a high frequency, which is consistent with the actual motions of the UAV. The noticeable translation around $275s$ is the actual UAV motion. As comparison, the estimations of the quaternion-based iterated Kalman filter for those parameters are depicted in green lines. As can be seen, the position estimate of this method does not return to the initial point and leads to a much larger drift ($2.4837m$ versus $0.2827m$ of {\it IKFoM}) finally. We refer readers to the video at~\url{https://youtu.be/sz\_ZlDkl6fA} for further experiment demonstration. 

\begin{figure}[tb]
\vspace{-0.3cm}
\setlength{\abovecaptionskip}{-0.1cm} 
\setlength{\belowcaptionskip}{-0.2cm} 
	\centering
	\includegraphics[width=1\columnwidth]{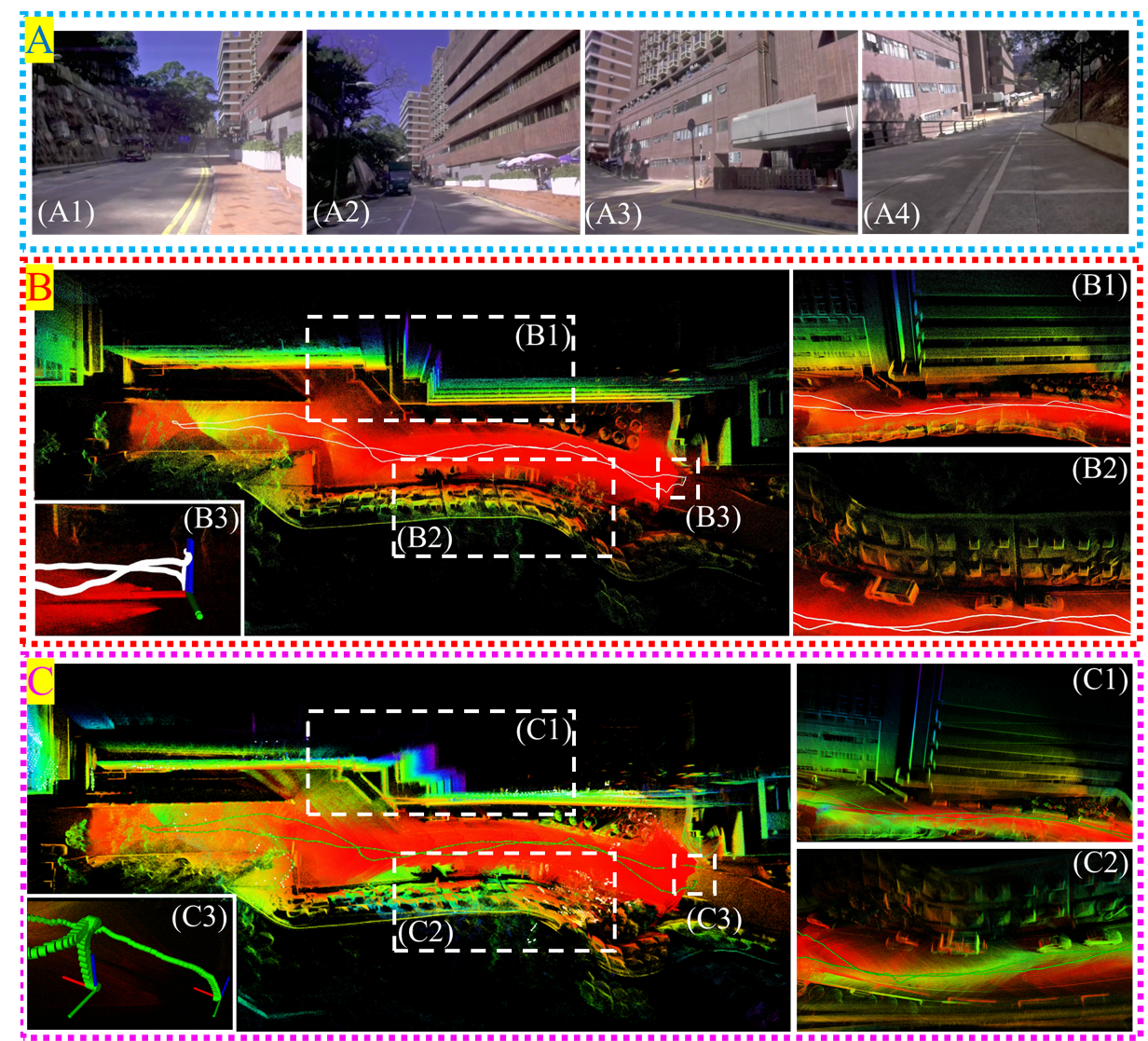}
	\caption{Real-time mapping results of dataset V3-01. A: Photos of the environment of this experiment; B: Map result of {\it IKFoM}, (B1) Local zoom-in map result of one side of the road, (B2) Local zoom-in map result of the other side of the road, (B3) Poses of the UAV at the beginning and end of the experiment; C: Map result of quaternion-based iterated Kalman filter, (C1) Local zoom-in map result of one side of the road, (C2) Local zoom-in map result of the other side of the road, (C3) Poses of the UAV at the beginning and end of the experiment.
		\label{fig:lid_map_outdoor_sup}}
	\vspace{-0.1cm}
\end{figure}

\begin{figure}[tb]
\vspace{-0.1cm}
\setlength{\abovecaptionskip}{-0.1cm} 
\setlength{\belowcaptionskip}{-0.2cm} 
	\centering
	\includegraphics[width=1\columnwidth]{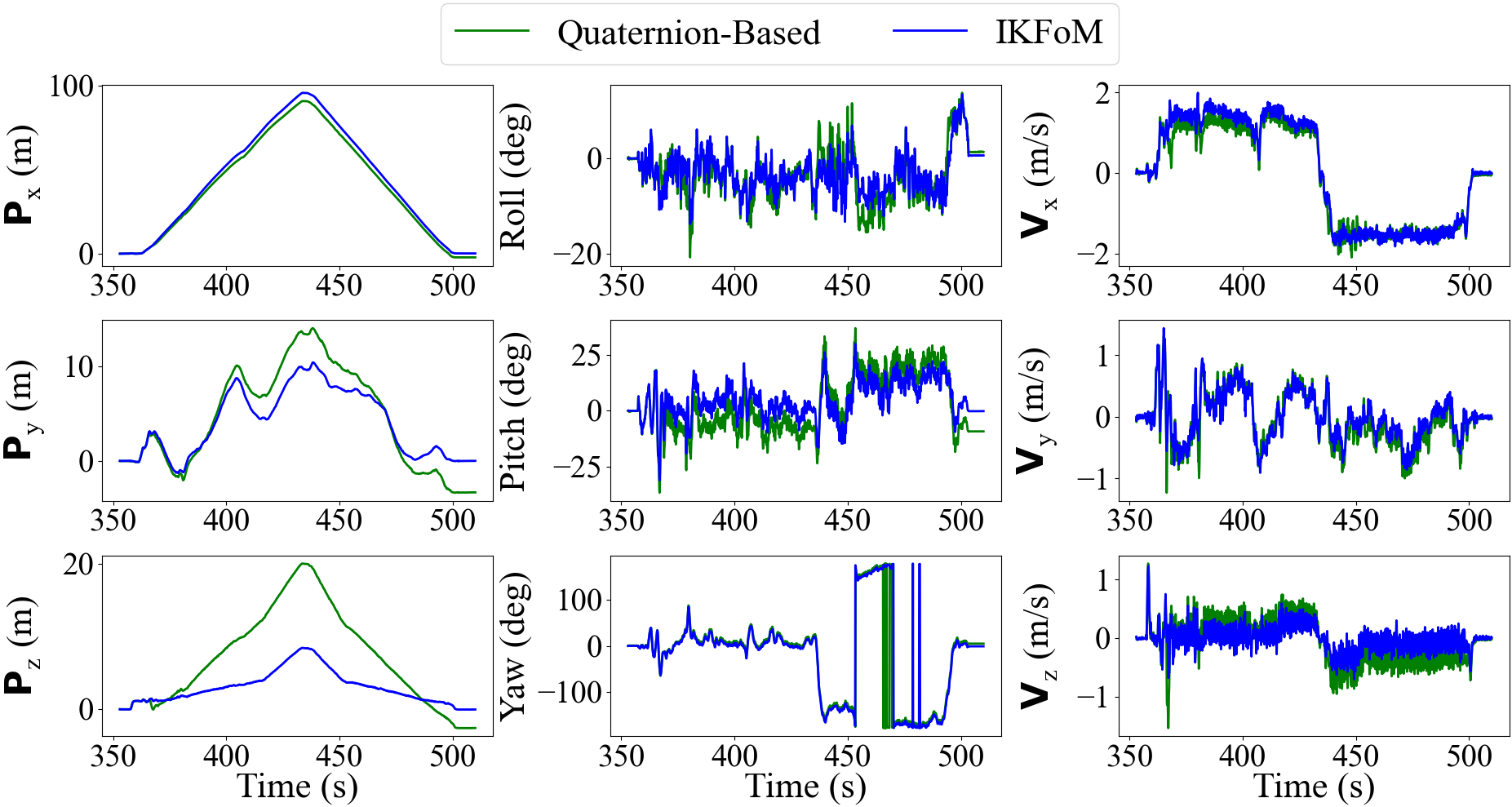}
	\caption{Comparison of estimates of quaternion-based iterated Kalman filter and {\it IKFoM} for the position, rotation in Euler angles and velocity of the IMU on dataset V3-01.
		\label{fig:lid_kin_outdoor_sup}}
	\vspace{-0.3cm}
\end{figure}

\subsubsection{Outdoor random walk}
The experiment of V3-01 is conducted in a structured outdoor environment which is a corridor between a slope and the Hawking Wong building of the University of Hong Kong. In the experiment, the UAV is handheld to move along the road and then return to the beginning position (see Fig.~\ref{fig:lid_map_outdoor_sup} (A)).

The real-time mapping results of dataset V3-01 estimated by our toolkit is shown in Fig.~\ref{fig:lid_map_outdoor_sup} (B), which clearly shows the building on one side and the cars and bricks on the slope. The position drift is less than $0.07458\%$ (i.e., $0.1536m$ drift over $205.95m$ path, see Fig.~\ref{fig:lid_map_outdoor_sup} (B3)). This small drift supports the efficacy of our system. As comparison, the mapping results of the lidar-inertial navigation implemented by the quaternion-based iterated Kalman filter are shown in Fig.~\ref{fig:lid_map_outdoor_sup} (C). Because of the improper propagation of the state covariance of this Kalman filter, the map of the building is layered and that of the cars is fringed. What's more, the drift of the pose at the end of experiment is obviously too large to coincide with the pose at the beginning of the experiment, which is calculated as around $3.9672m$ versus $0.1536m$ of {\it IKFoM}.

The estimations of the kinematics parameters are shown in Fig.~\ref{fig:lid_kin_outdoor_sup}, where the experiment starts from $353.000s$ and ends at $509.999s$. The trajectory estimated by {\it IKFoM} is approximately symmetric about the middle time in X and Z direction, which agrees with the actual motion profile where the sensor is moved back on the same road. The estimates of quaternion-based iterated Kalman filter have a drift from our method, which is inaccurate as indicated by the pose drift. For further experiment demonstration, we refer readers to the videos at~\url{https://youtu.be/sz\_ZlDkl6fA}.

\subsubsection{Online estimation of extrinsic, gravity, and IMU bias}
\begin{figure}[tb]
\vspace{-0.3cm}
\setlength{\abovecaptionskip}{-0.1cm} 
\setlength{\belowcaptionskip}{-0.2cm} 
	\centering
	\includegraphics[width=1.0\columnwidth]{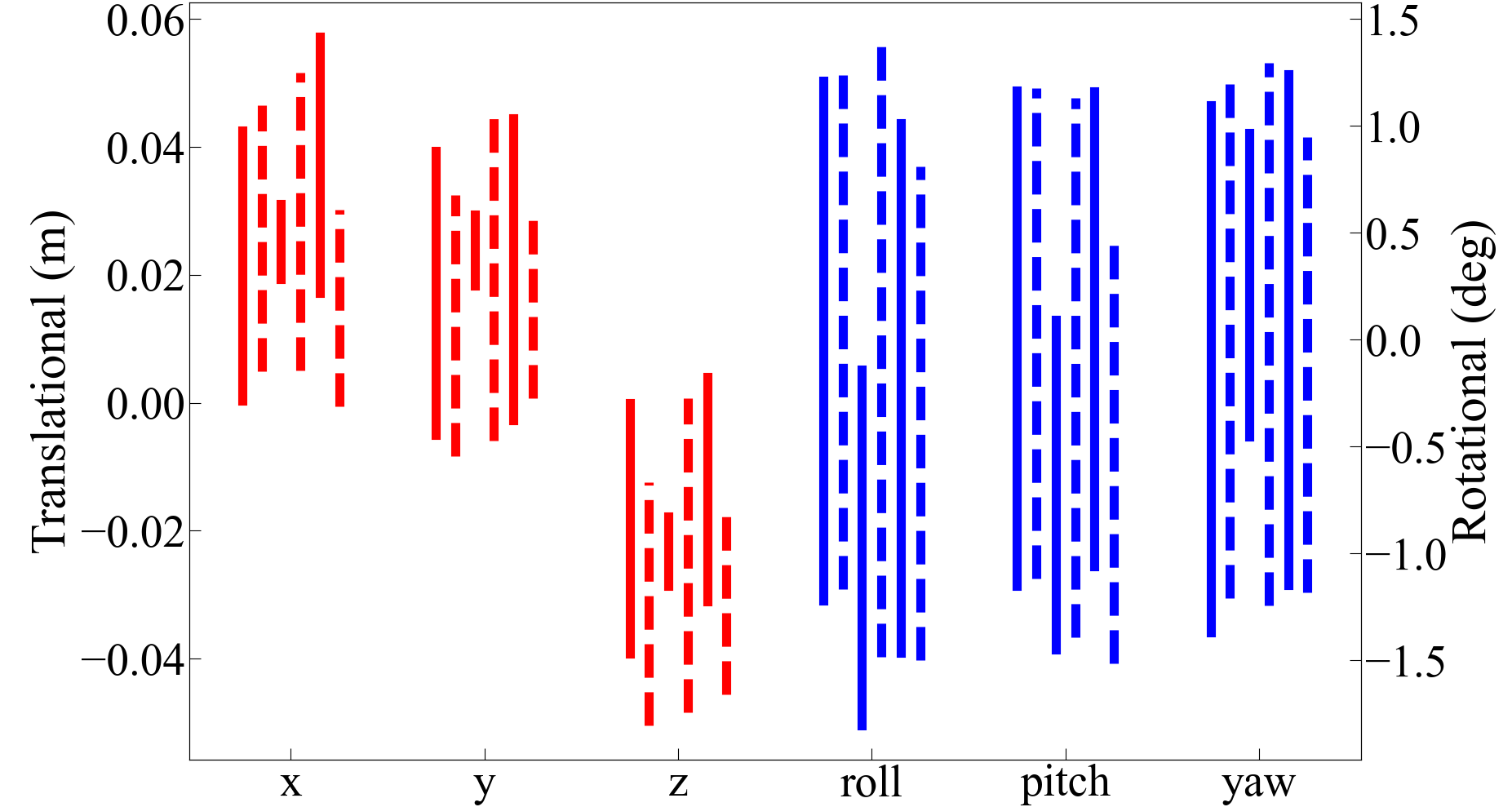}
	\caption{Estimation of the extrinsics (red for translational and blue for rotational) between lidar and IMU for $6$ datasets. The length of each line corresponds to the $3\sigma$ bounds the {\it IKFoM} converged to. 
	The lines from left to right indicate datasets V1 (i.e., indoor UAV flight), V2 (i.e., indoor quick-shake), and V3 (i.e., outdoor random walk), in sequence. For each dataset, the solid and the dashed lines respectively indicate the trial 01 (i.e., data collected in this work) and trial 02 (i.e., data from~\cite{xu2020fast}).
		\label{fig:extrinsics_lid}}
	\vspace{-0.1cm}
\end{figure}
To verify our developed method being a properly functioning filter, the online calibration parameters, which are composed of lidar-IMU extrinsics, gravity in the global frame and IMU biases, have to converge. Moreover, the extrinsic estimate should be close across different datasets with the same sensor setup, and we can thus evaluate the extrinsics on multiple datasets and compare the values they have converged to. Fig.~\ref{fig:extrinsics_lid} shows the final estimate of the translational and rotational parts of the extrinsics by running the proposed toolkit on all the six datasets. The initial values of the extrinsics were read from the manufacturer datasheet. As seen in Fig.~\ref{fig:extrinsics_lid}, the extrinsic estimates (both rotation and translation) over different datasets show great agreement. The uncertainty in translation is $1cm-5cm$ while that in rotation is less than $3^{\circ}$. In particular, as indicated by the second solid line counted from left in Fig.~\ref{fig:extrinsics_lid}, we notice a smaller variance in V2-01 over other datasets. This is resulted from the fact that V2-01 has been excited sufficiently and constantly for over $222.85 s$, which is much longer compared with other indoor datasets. And repeated scenes come along during the experiment of V2-01, while the outdoor datasets always deal with new scenes. 

We further inspect the convergence of the gravity estimation by {\it IKFoM}. Due to the space limit, we show the result on dataset V2-01 only. Fig.~\ref{fig:lid_grav_error} shows the gravity estimation error $\mathbf u\!=\! {}^G\widehat{\mathbf{g}}_k\!\boxminus\! {}^G\bar{\mathbf{g}} \in \mathbb{R}^2$, where  ${}^G\bar{\mathbf{g}}$ is the ground true gravity and ${}^G\widehat{\mathbf{g}}_k$ is the estimate at step $k$. Since the ground true gravity vector is unknown, we use the converged gravity estimation as ${}^G\bar{\mathbf{g}}$. Fig.~\ref{fig:lid_grav_error} further
shows the $3\sigma$ bounds of $\mathbf u$ estimated by the proposed {\it IKFoM}. It is shown that the error constantly falls within the $3\sigma$ bounds, which indicates the consistency of the proposed method. 

\begin{figure}[t]
\vspace{-0.1cm}
\setlength{\abovecaptionskip}{-0.2cm} 
\setlength{\belowcaptionskip}{-0.2cm} 
	\centering
	\includegraphics[width=1.0\columnwidth]{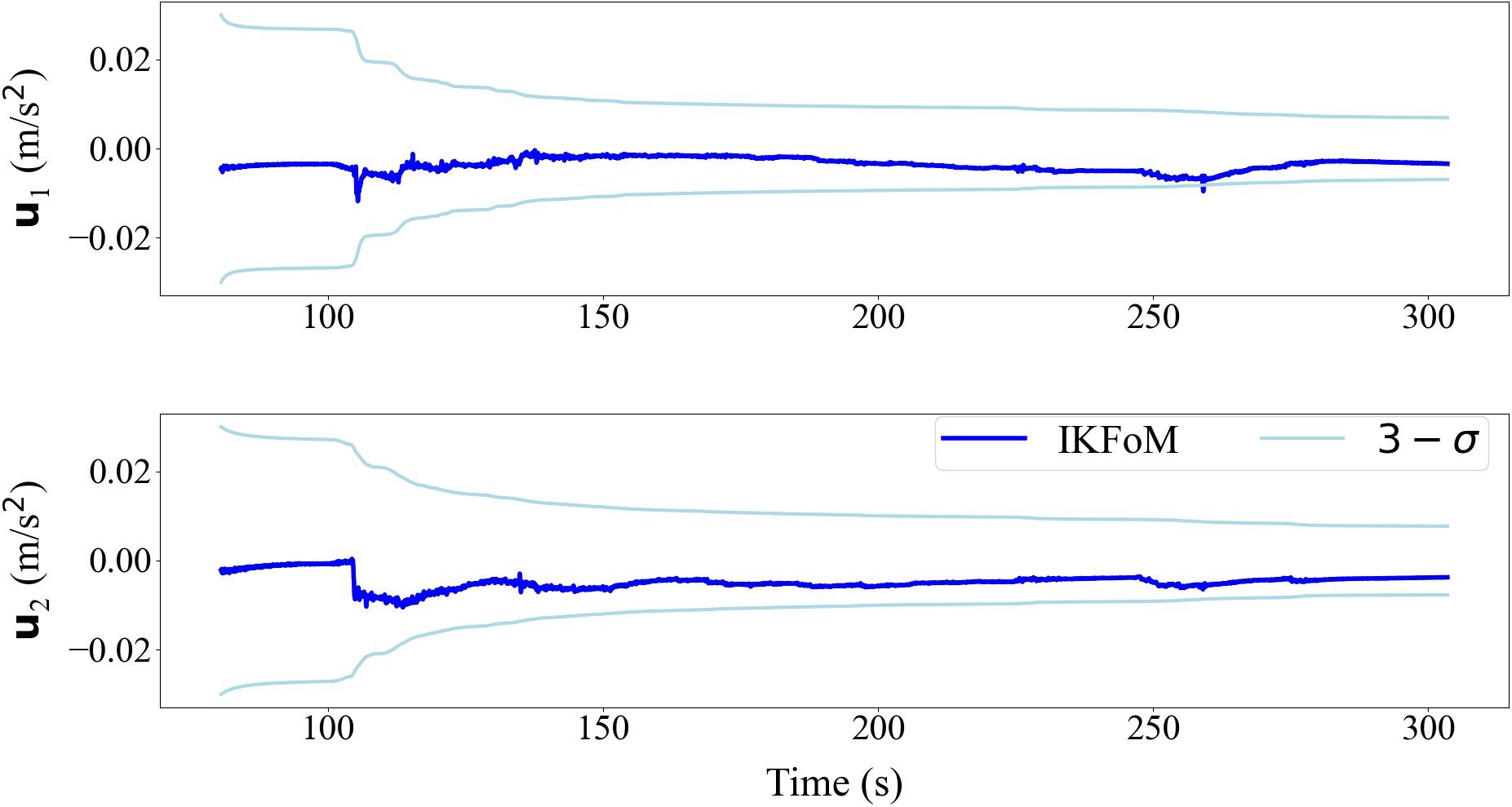}
	\caption{Gravity estimation error (dark blue lines) and its $3\sigma$ bounds (bright green lines) on dataset V2-01. 
		\label{fig:lid_grav_error}}
	\vspace{-0.3cm}
\end{figure}

Finally, we investigate the convergence of
the IMU bias estimation. We show the results on
dataset V2-01 only, which are depicted in Fig.~\ref{fig:lid_bias}. The estimates of IMU biases over time are plotted together with the $3\sigma$ bounds. In particular, the accelerometer biases converge after sufficient excitation and it typically converges faster along the gravity direction due to the large vertical movement at the beginning of the dataset (see Fig. \ref{fig:lid_map_shake}). Also the gyroscope biases converge rapidly due to the large rotational movement.
\begin{figure}[t]
\setlength{\abovecaptionskip}{-0.1cm} 
\setlength{\belowcaptionskip}{-0.2cm} 
\vspace{-0.3cm}
	\centering
	\includegraphics[width=1.0\columnwidth]{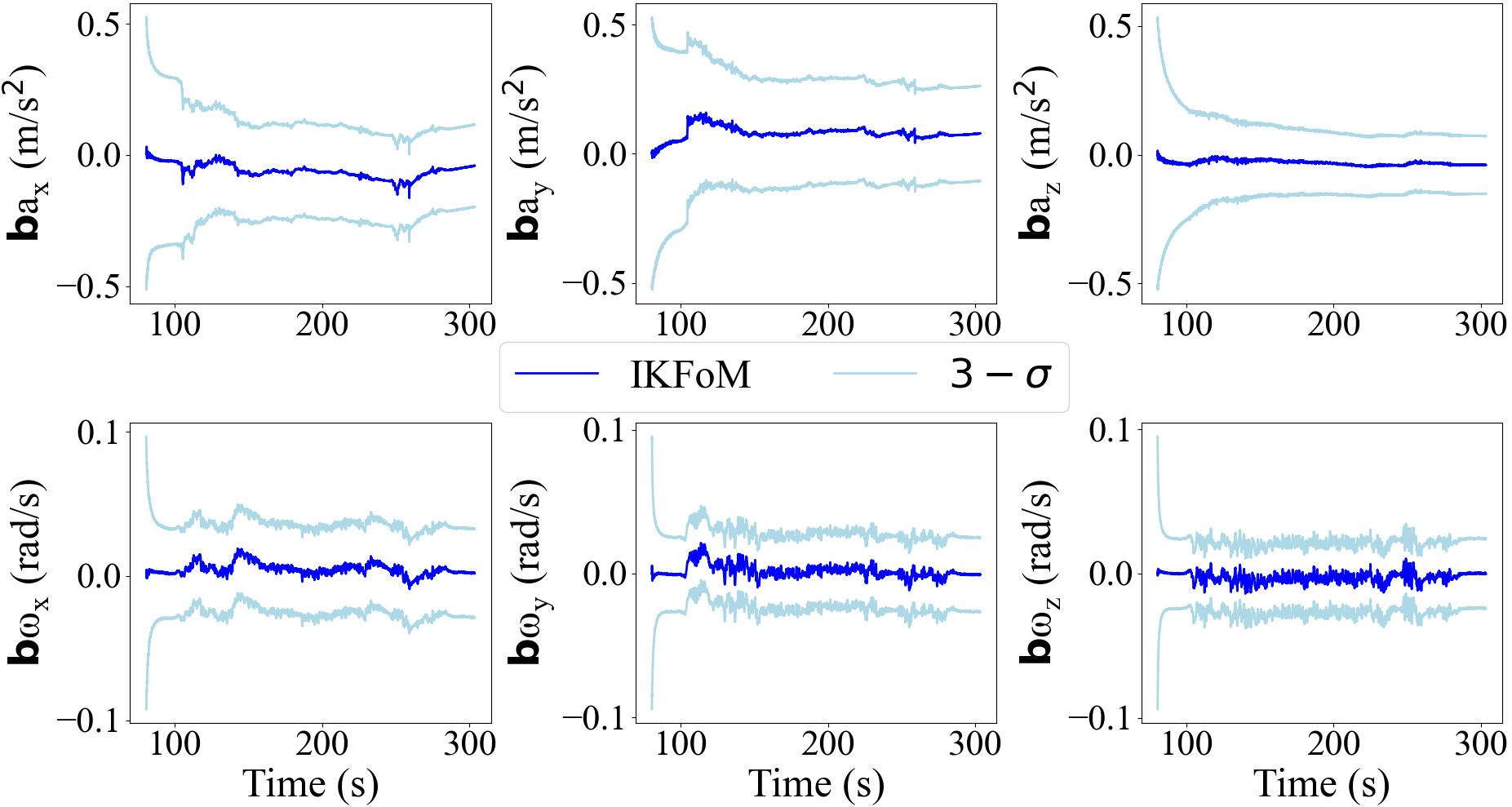}
	\caption{Estimated accelerometer and gyroscope biases on dataset V2-01. Estimates (dark blue lines) together with the $3\sigma$ bounds (bright green lines) are depicted. The estimated accelerometer bias converges quicker along the gravity direction which is mostly along the z-axis.
		\label{fig:lid_bias}}
	\vspace{-0.1cm}
\end{figure}

\subsubsection{Running time}

To further evaluate the practicability of the developed toolkit, its running time on all the six datasets are evaluated and compared against the hand-derived iterated ESEKF in~\cite{xu2020fast}. Note that the work in \cite{xu2020fast} also used an iterated Kalman filter but differs with our implementations in two aspects: (1) The iterated Kalman filter in \cite{xu2020fast} is manually derived and the respective matrices (e.g., $\mathbf F_{\mathbf x_{\tau}}, \mathbf F_{\mathbf w_{\tau}}$ in (\ref{e:Cov_prop})) used in the Kalman filter are directly coded. Matrices sparsity are carefully exploited for computation efficiency. In contrast, our implementation directly uses the toolkit which separates the computation of manifold constraints and system-specific behaviors; (2) The original work in \cite{xu2020fast} did not consider the estimation of extrinsics between lidar and IMU, hence has six fewer state variables. Other than these two aspects, the rest implementations are identical. Both implementations are tested on the UAV onboard computer (see Fig. \ref{fig:conf_lid}). 

The running time comparison is shown in Tab.~\ref{tab:run-time-comparison}, which shows the average time for completing one iteration of the lidar-inertial navigation. As expected, the toolkit-based implementation takes a little more computation time due to the higher state dimension and the toolkit overhead. However, this time overhead is acceptable and both implementations run sufficiently fast in real-time. 
\begin{table}[bt]
\vspace{-0.1cm}
\setlength{\abovecaptionskip}{-0cm} 
\setlength{\belowcaptionskip}{-0.2cm} 
\centering
\caption{Comparison of the average running time}\label{tab:run-time-comparison}
\begin{threeparttable}
    \begin{tabular}{lll}
    \toprule
    &{\it IKFoM}-based & Hand-derived\\
    &implementation&implementation in~\cite{xu2020fast}\\
    \midrule
    V1:&$7.764$$ms$ (01), $8.024$$ms$ (02) &$7.186$$ms$ (01), $7.357$$ms$ (02)\\
    \midrule
    V2:&$31.57$$ms$ (01), $33.88$$ms$ (02)&$21.23$$ms$ (01), $30.17$$ms$ (02)\\ 
    \midrule
    V3:&$60.05$$ms$ (01), $56.75$$ms$ (02)&$55.69$$ms$ (01), $54.56$$ms$ (02)\\
    \bottomrule
    \end{tabular}
    \begin{tablenotes}
    \item[1] Running time is recorded as the time consumed by one complete iteration of lidar-inertial navigation.
    \end{tablenotes}
    \end{threeparttable}
    \vspace{-0.3cm}
\end{table}

\section{Conclusion}~\label{conclusion}
This paper proposed a canonical representation of robot systems and developed a symbolic error-state iterated Kalman filter. The canonical representation employs manifolds to represent the system states and uses $\boxplus\backslash\boxminus$ and $\oplus$ operations to describe the system model. Based on the canonical representation of a robotic system, we showed the separation principle between the manifold constraints and the system-specific behaviors in a Kalman filter framework. This separation enables us to integrate differentiable manifolds into the Kalman filter by developing a $C$++ toolkit, facilitating the quick deployment of Kalman filters to generic robotic systems operating on differentiable manifolds. The proposed method and the developed toolkit are verified on a tightly-coupled lidar-inertial navigation system in three different scenarios, and result in superior filtering performance.

\appendix

\subsection{Partial differentiation of {\it compound manifolds} }~\label{app:partial_diff_Com}
\begin{lemma}~\label{lemma_comp_mani_diff}
If $\mathbf{x}_1, \mathbf{y}_1 \in\mathcal{S}_1$; $\mathbf{x}_2, \mathbf{y}_2 \in\mathcal{S}_2$; $\mathbf{u}_1 \in\mathbb{R}^{n_1}$ and $\mathbf{u}_2\in\mathbb{R}^{n_2}$; where $n_1$, $n_2$ are dimensions of $\mathcal{S}_1$, $\mathcal{S}_2$ respectively, define compound manifold $\mathcal{S}=\mathcal{S}_1\times\mathcal{S}_2$, and its elements $\mathbf{x}=\begin{bmatrix}\mathbf{x}_1 &\mathbf{x}_2\end{bmatrix}^T \in\mathcal{S} $; $\mathbf{y}=\begin{bmatrix}\mathbf{y}_1 &\mathbf{y}_2\end{bmatrix}^T \in\mathcal{S} $; $\mathbf{u}=\begin{bmatrix}\mathbf{u}_1 &\mathbf{u}_2\end{bmatrix}^T \in\mathbb{R}^{n_1+n_2} $ and $\mathbf{v}_1\in\mathbb{R}^{l_1}$, $\mathbf{v}_2\in\mathbb{R}^{l_2}$, $\mathbf{v}=\begin{bmatrix}\mathbf{v}_1 &\mathbf{v}_2\end{bmatrix}^T\in\mathbb{R}^{l_1+l_2}$, then 
\begin{equation}
\setlength{\abovedisplayskip}{0.15cm} 
\setlength{\belowdisplayskip}{0.15cm} 
\begin{aligned}
    &\frac{\partial\left(\!\left(\!\left(\!\mathbf{x}\boxplus_{\mathcal{S}} \mathbf{u}\right)\oplus_{\mathcal{S}} \mathbf{v} \! \right)\boxminus_{\mathcal{S}} \mathbf{y} \! \right)}{\partial\mathbf{u}}\!\!\\
    &=\begin{bmatrix}\!\!\frac{\partial\left(\!\left(\!\left(\!\mathbf{x}_1\boxplus_{\mathcal{S}_1} \! \mathbf{u}_1 \! \right)\oplus_{\mathcal{S}_1} \! \mathbf{v}_1 \! \right)\boxminus_{\mathcal{S}_1}  \mathbf{y}_1 \!\right)}{\partial\mathbf{u}_1}\!\!\!\!&\!\!\!\!\mathbf{0}\!\!\\
    \!\!\mathbf{0}\!\!\!\!&\!\!\!\!\frac{\partial\left(\!\left(\!\left(\!\mathbf{x}_2\boxplus_{\mathcal{S}_2} \! \mathbf{u}_2\!\right)\oplus_{\mathcal{S}_2} \! \mathbf{v}_2\!\right)\boxminus_{\mathcal{S}_2} \mathbf{y}_2\!\right)}{\partial\mathbf{u}_2}\!\!\end{bmatrix}
\end{aligned} \nonumber
\end{equation}
\end{lemma}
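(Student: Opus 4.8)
The plan is to reduce the statement to an immediate consequence of the component-wise definitions of $\boxplus$, $\oplus$, $\boxminus$ on a compound manifold given in (\ref{e:compound_plus_minus}). First I would unfold the left-hand side using those definitions in sequence. Setting $\mathbf{z} = \mathbf{x} \boxplus_{\mathcal{S}} \mathbf{u}$, the first half of (\ref{e:compound_plus_minus}) gives $\mathbf{z} = \begin{bmatrix} \mathbf{x}_1 \boxplus_{\mathcal{S}_1} \mathbf{u}_1 & \mathbf{x}_2 \boxplus_{\mathcal{S}_2} \mathbf{u}_2 \end{bmatrix}^T$, so each block $\mathbf{z}_i = \mathbf{x}_i \boxplus_{\mathcal{S}_i} \mathbf{u}_i$ depends on $\mathbf{u}_i$ only. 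Applying the second half of (\ref{e:compound_plus_minus}) to $\mathbf{z} \oplus_{\mathcal{S}} \mathbf{v}$ shows its $i$-th block is $\mathbf{z}_i \oplus_{\mathcal{S}_i} \mathbf{v}_i$, which still depends on $\mathbf{u}_i$ alone (through $\mathbf{z}_i$). Since $\boxminus_{\mathcal{S}}$ is component-wise by the analogous definition, the $i$-th block of $\left(\left(\left(\mathbf{x}\boxplus_{\mathcal{S}}\mathbf{u}\right)\oplus_{\mathcal{S}}\mathbf{v}\right)\boxminus_{\mathcal{S}}\mathbf{y}\right)$ equals $g_i(\mathbf{u}_i) \triangleq \left(\left(\left(\mathbf{x}_i\boxplus_{\mathcal{S}_i}\mathbf{u}_i\right)\oplus_{\mathcal{S}_i}\mathbf{v}_i\right)\boxminus_{\mathcal{S}_i}\mathbf{y}_i\right)$, a function of $\mathbf{u}_i$ only.

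Next I would invoke the elementary fact that if a map $g(\mathbf{u}) = \begin{bmatrix} g_1(\mathbf{u}_1) & g_2(\mathbf{u}_2) \end{bmatrix}^T$ has its first block independent of $\mathbf{u}_2$ and its second block independent of $\mathbf{u}_1$, then, partitioning both domain and codomain into blocks of sizes $n_1$ and $n_2$, the off-diagonal blocks $\partial g_1 / \partial \mathbf{u}_2$ and $\partial g_2 / \partial \mathbf{u}_1$ of the Jacobian $\partial g / \partial \mathbf{u}$ vanish identically, while the diagonal blocks are $\partial g_i / \partial \mathbf{u}_i$. Applied to the $g_i$ above, this yields precisely the claimed block-diagonal form for $\partial\!\left(\left(\left(\mathbf{x}\boxplus_{\mathcal{S}}\mathbf{u}\right)\oplus_{\mathcal{S}}\mathbf{v}\right)\boxminus_{\mathcal{S}}\mathbf{y}\right)/\partial\mathbf{u}$; running the identical argument with $\mathbf{v}$ in place of $\mathbf{u}$ gives the second displayed identity of (\ref{diff_compound}). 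A one-line induction on the number of factors then extends the result from two sub-manifolds to any finite Cartesian product, which is what (\ref{diff_compound}) asserts.

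I do not expect a substantive obstacle here: the argument is pure bookkeeping. The only point requiring a little care is verifying that \emph{each} of the three encapsulated operations respects the Cartesian splitting, so that the intermediate quantity $\mathbf{z} \oplus_{\mathcal{S}} \mathbf{v}$ never introduces a cross-dependence between the $\mathbf{u}_1$ and $\mathbf{u}_2$ blocks — once that is in place, the block-diagonal conclusion is immediate from the definition of the Jacobian.
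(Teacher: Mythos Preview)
Your proposal is correct and follows essentially the same approach as the paper: unfold $((\mathbf{x}\boxplus_{\mathcal{S}}\mathbf{u})\oplus_{\mathcal{S}}\mathbf{v})\boxminus_{\mathcal{S}}\mathbf{y}$ component-wise using (\ref{e:compound_plus_minus}) to see that the $i$-th block depends only on $\mathbf{u}_i$, then read off the block-diagonal Jacobian. The paper's proof is the same bookkeeping argument, just written out with an explicit chain of equalities for the intermediate quantity (which it calls $\mathbf{w}$ rather than introducing $\mathbf{z}$); your added remarks on the $\mathbf{v}$-derivative and the induction to finitely many factors match the paper's own comments immediately following the proof.
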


\begin{proof}
Define $\mathbf{w}=\left(\left(\mathbf{x}\boxplus_{\mathcal{S}}\mathbf{u}\right)\oplus_{\mathcal{S}}\mathbf{v}\right)\boxminus_{\mathcal{S}}\mathbf{y}$, then according to the composition of operation $\boxplus$, $\boxminus$ and $\oplus$ in the paper, we have
\begin{equation}
\setlength{\abovedisplayskip}{0.15cm} 
\setlength{\belowdisplayskip}{0.15cm} 
\begin{aligned}
    \mathbf{w}&\!=\left(\left(\mathbf{x}\boxplus_{\mathcal{S}}\mathbf{u}\right)\oplus_{\mathcal{S}}\mathbf{v}\right)\boxminus_{\mathcal{S}}\mathbf{y}\\
    &\! = \left(\left(\begin{bmatrix}
    \mathbf{x}_1 \\ \mathbf{x}_2
    \end{bmatrix} \boxplus_{\mathcal{S}} \begin{bmatrix}
    \mathbf{u}_1 \\ \mathbf{u}_2
    \end{bmatrix} \right)\oplus_{\mathcal{S}}\mathbf{v}\right)\boxminus_{\mathcal{S}}\mathbf{y}\\
    &\!=\left(\begin{bmatrix}\mathbf{x}_1\boxplus_{\mathcal{S}_1}\mathbf{u}_1\\
    \mathbf{x}_2\boxplus_{\mathcal{S}_2}\mathbf{u}_2\end{bmatrix}\oplus_{\mathcal{S}}\begin{bmatrix}
    \mathbf{v}_1 \\ \mathbf{v}_2
    \end{bmatrix}\right)\boxminus_{\mathcal{S}}\mathbf{y}\\
    &\!=\begin{bmatrix}\left(\mathbf{x}_1\boxplus_{\mathcal{S}_1}\mathbf{u}_1\right)\oplus_{\mathcal{S}_1}\mathbf{v}_1\\
    \left(\mathbf{x}_2\boxplus_{\mathcal{S}_2}\mathbf{u}_2\right)\oplus_{\mathcal{S}_2}\mathbf{v}_2\end{bmatrix}\boxminus_{\mathcal{S}}\begin{bmatrix}
    \mathbf{y}_1 \\ \mathbf{y}_2
    \end{bmatrix}\\
    &\!=\begin{bmatrix}\left(\left(\mathbf{x}_1\boxplus_{\mathcal{S}_1}\mathbf{u}_1\right)\oplus_{\mathcal{S}_1}\mathbf{v}_1\right)\boxminus_{\mathcal{S}_1}\mathbf{y}_1\\
    \left(\left(\mathbf{x}_2\boxplus_{\mathcal{S}_2}\mathbf{u}_2\right)\oplus_{\mathcal{S}_2}\mathbf{v}_2\right)\boxminus_{\mathcal{S}_2}\mathbf{y}_2\end{bmatrix} \triangleq \begin{bmatrix}
    \mathbf{w}_1 \\ \mathbf{w}_2
    \end{bmatrix}
\end{aligned} \nonumber
\end{equation}

As a result, the differentiation is
\begin{equation}\notag
\setlength{\abovedisplayskip}{0.15cm} 
\setlength{\belowdisplayskip}{0.15cm} 
    \begin{aligned}
 &\frac{\partial\mathbf{w}}{\partial\mathbf{u}}=\begin{bmatrix}\frac{\partial\mathbf{w}_1}{\partial\mathbf{u}_1}&\frac{\partial\mathbf{w}_1}{\partial\mathbf{u}_2}\\
    \frac{\partial\mathbf{w}_2}{\partial\mathbf{u}_1}&\frac{\partial\mathbf{w}_2}{\partial\mathbf{u}_2}\end{bmatrix}=\begin{bmatrix}\frac{\partial\mathbf{w}_1}{\partial\mathbf{u}_1}&\mathbf{0}\\
    \mathbf{0}&\frac{\partial\mathbf{w}_2}{\partial\mathbf{u}_2}\end{bmatrix}\\
    &=\begin{bmatrix}\!\!\frac{\partial\left(\!\left(\!\left(\!\mathbf{x}_1\boxplus_{\mathcal{S}_1} \! \mathbf{u}_1 \! \right)\oplus_{\mathcal{S}_1} \! \mathbf{v}_1 \! \right)\boxminus_{\mathcal{S}_1} \mathbf{y}_1 \!\right)}{\partial\mathbf{u}_1}\!\!\!\!&\!\!\!\!\mathbf{0}\!\!\\
    \!\!\mathbf{0}\!\!\!\!&\!\!\!\!\frac{\partial\left(\!\left(\!\left(\!\mathbf{x}_2\boxplus_{\mathcal{S}_2} \! \mathbf{u}_2\!\right)\oplus_{\mathcal{S}_2} \! \mathbf{v}_2\!\right)\boxminus_{\mathcal{S}_2} \mathbf{y}_2\!\right)}{\partial\mathbf{u}_2}\!\!\end{bmatrix}
    \end{aligned}
\end{equation}
\end{proof}
The partial differentiation of $(((\mathbf{x}\boxplus_{\mathcal{S}})\oplus_{\mathcal{S}})\mathbf{v})\boxminus_{\mathcal{S}}\mathbf{y})$ with respect to the $\mathbf{v}$ on the {\it compound manifold} $\mathcal{S}$ is able to be proved in the same way as above.

\subsection{Important manifolds in practice and their derivatives}\label{app:important_manifolds}

\textbf{\textit{ 1. Euclidean space $\mathcal{S} = \mathbb{R}^n$:}}

\begin{equation}
\setlength{\abovedisplayskip}{0.15cm} 
\setlength{\belowdisplayskip}{0.15cm} 
    \begin{aligned}
    \mathbf{x}\boxplus\mathbf{u}&=\mathbf{x}+\mathbf{u}\\
    \mathbf{y}\boxminus\mathbf{x}&=\mathbf{y}-\mathbf{x}\\
    \mathbf{x}\oplus \mathbf{v}&=\mathbf{x}+\mathbf{v}\\
    \frac{\partial\left(\left(\left(\mathbf{x}\boxplus\mathbf{u}\right)\oplus \mathbf{v}\right)\boxminus\mathbf{y}\right)}{\partial \mathbf{u}}&=\mathbf{I}_{n\times n}\\
    \frac{\partial\left(\left(\left(\mathbf{x}\boxplus\mathbf{u}\right)\oplus \mathbf{v}\right)\boxminus\mathbf{y}\right)}{\partial \mathbf{v}}&=\mathbf{I}_{n\times n}
    \end{aligned}
\end{equation}

\textbf{\textit{2. Special orthogonal group $\mathcal{S}\!\!=\!SO(3)$:}} 

\begin{equation}
\setlength{\abovedisplayskip}{0.15cm} 
\setlength{\belowdisplayskip}{0.15cm}  \label{SO3:boxplus_sup}
    \mathbf{x}\boxplus\mathbf{u}=\mathbf{x}\cdot{\rm Exp}\left(\mathbf{u}\right) 
    \end{equation}
    \begin{equation} \label{SO3:boxminus_sup}
    \mathbf{y}\boxminus\mathbf{x}={\rm Log}\left(\mathbf{x}^{-1}\cdot\mathbf{y}\right)
    \end{equation}
    \begin{equation} \label{SO3:oplus_sup}
    \mathbf{x}\oplus \mathbf{v}=\mathbf{x}\cdot{\rm Exp}\left(\mathbf{v}\right) 
    \end{equation}
    \begin{equation} \label{SO3:partial1_sup}
    \frac{\partial\left(\!\left(\!\left(\!\mathbf{x}\boxplus\mathbf{u}\right)\oplus \mathbf{v}\right)\boxminus\mathbf{y}\right)}{\partial \mathbf{u}}\!\!=\!\!\mathbf{A}\!\left(\!\left(\!\left(\!\mathbf{x}\!\boxplus\!\mathbf{u}\right)\!\oplus \!\mathbf{v}\right)\!\boxminus\!\mathbf{y}\right)\!^{-T}{\rm Exp}\!\left(\!-\mathbf{v}\right)\!\mathbf{A}\!\left(\!\mathbf{u}\right)\!^T
    \end{equation}
    \begin{equation} \label{SO3:partial2_sup}
    \frac{\partial\left(\!\left(\!\left(\!\mathbf{x}\boxplus\mathbf{u}\right)\oplus \mathbf{v}\right)\boxminus\mathbf{y}\right)}{\partial \mathbf{v}}\!\!=\!\!\mathbf{A}\!\left(\!\left(\!\left(\!\mathbf{x}\!\boxplus\!\mathbf{u}\right)\!\oplus \!\mathbf{v}\right)\!\boxminus\!\mathbf{y}\right)\!^{-T}\!\mathbf{A}\!\left(\mathbf{v}\right)\!^T
    \end{equation}

where
\begin{equation}~\label{eq:A_sup}
\setlength{\abovedisplayskip}{0.15cm} 
\setlength{\belowdisplayskip}{0.15cm} 
\begin{aligned}
\rm{Exp}\left(\mathbf{u} \right)\!&=\!\rm{exp}\left(\lfloor\mathbf{u} \rfloor\right) \\
\mathbf{A}\!\left(\mathbf{u}\right)\!&=\!\mathbf{I}\!+\!\left(\frac{1\!-\!{\rm cos}\left(\Vert\mathbf{u}\Vert\right)}{\Vert\mathbf{u}\Vert}\right)\!\frac{\lfloor\mathbf{u}\rfloor}{\Vert\mathbf{u}\Vert}\!+\!\left(1\!-\!\frac{{\rm sin}\left(\Vert\mathbf{u}\Vert\right)}{\Vert\mathbf{u}\Vert}\right)\!\frac{\lfloor\mathbf{u}\rfloor^2}{\Vert\mathbf{u}\Vert^2}\\
\mathbf{A}\left(\mathbf{u}\right)^{-1}&=\mathbf{I}-\frac{1}{2}\lfloor\mathbf{u}\rfloor+\left(1-\alpha\left(\Vert\mathbf{u}\Vert\right)\right)\frac{\lfloor\mathbf{u}\rfloor^2}{\Vert\mathbf{u}\Vert^2}\\
\alpha\left(\|\mathbf{u}\|\right)&=\frac{\|\mathbf{u}\|}{2}{\rm cot}\left(\frac{\|\mathbf{u}\|}{2}\right)=\frac{\|\mathbf{u}\|}{2}\frac{{\rm cos}\left({\|\mathbf{u}\|}/2\right)}{{\rm sin}\left(\|\mathbf{u}\|/2\right)}
\end{aligned}
\end{equation}

In the above equations, $\mathbf{u}\in\mathbb{R}^3$ is an axis-angle and $\lfloor\mathbf{u}\rfloor$ denotes the skew-symmetric matrix that maps the cross product of $\mathbf{u}$. 

As explained in Sec.~\ref{boxplus_method}, $SO(3)$ is a Lie group, for which, its tangent space is a Lie algebra with $\mathfrak{m}=\mathfrak{so}(3)=\{\lfloor\mathbf{u}\rfloor|\mathbf{u}\in\mathbb{R}^3\}$ and its minimal parameterization space is $\mathbf{R}^3$. Therefore, the $\rm{exp}$ and $\mathfrak{f}$ of $SO(3)$ are $\mathfrak{so}(3)\mapsto SO(3):\rm{exp}(\lfloor\mathbf{u}\rfloor)$ and $\mathbb{R}^3\mapsto\mathfrak{so}(3):\lfloor\mathbf{u}\rfloor$ respectively, which finally results in $\rm{Exp}=\rm{exp}\circ\mathfrak{f}=\rm{exp}(\lfloor\mathbf{u}\rfloor), \mathbf{u}\in\mathbb{R}^3$. What's more, its $\oplus$ operation is defined the same as its $\boxplus$ due to the fact that the exogenous velocity typically lies in the tangent plane for a Lie group, where the $\boxplus$ operation is defined.

The partial differentiations of $\left(\left(\left(\mathbf{x}\boxplus\mathbf{u}\right)\oplus\mathbf{v}\right)\boxminus\mathbf{y}\right)$ w.r.t. $\mathbf{u}$ (\ref{SO3:partial1_sup}) and $\mathbf{v}$ (\ref{SO3:partial2_sup}) are directly calculated without using the chain rule. (\ref{SO3:partial1_sup}) is proved as follows:

Denote $\mathbf{w}\!=\!\left(\left(\mathbf{x}\!\boxplus\!\mathbf{u}\right)\!\oplus \!\mathbf{v}\right)\!\boxminus\!\mathbf{y}$, we have
\begin{equation}
\setlength{\abovedisplayskip}{0.15cm} 
\setlength{\belowdisplayskip}{0.15cm} 
    {\rm Exp}\left(\mathbf{w}\right)=\mathbf{y}^{-1}\cdot\mathbf{x}\cdot{\rm Exp}\left(\mathbf{u}\right)\cdot{\rm Exp}\left(\mathbf{v}\right) \label{eq:w_sup}
\end{equation}

Hence a small variation $\Delta \mathbf{u}$ in $\mathbf{u}$ causes a small variation $\Delta \mathbf{w}$ in $\mathbf{w}$, which is subject to
\begin{equation}
\setlength{\abovedisplayskip}{0.15cm} 
\setlength{\belowdisplayskip}{0.15cm} 
    {\rm Exp}\left(\mathbf{w}+\Delta\mathbf{w}\right)=\mathbf{y}^{-1}\cdot\mathbf{x}\cdot{\rm Exp}\left(\mathbf{u}+\Delta\mathbf{u}\right)\cdot{\rm Exp}\left(\mathbf{v}\right) \label{lemma1_aft_sup}
\end{equation}

Using the fact ${\rm Exp}\!\left(\mathbf{u}\!+\!\Delta\mathbf{u}\right)\!=\!{\rm Exp}\!\left(\mathbf{u}\right)\!\cdot\!\left(\mathbf{I}\!+\!\lfloor\mathbf{A}\!\left(\!\mathbf{u}\!\right)^T\!\Delta\mathbf{u}\rfloor\!\right)$ as shown in~\cite{bullo1995proportionalsupp}, it is derived that the left hand side of (\ref{lemma1_aft_sup}) 
\begin{equation}
\setlength{\abovedisplayskip}{0.15cm} 
\setlength{\belowdisplayskip}{0.15cm} 
\begin{aligned}
&{\rm Exp}\left(\mathbf{w}\!+\!\Delta\mathbf{w}\right)\!={\rm Exp}\!\left(\mathbf{w}\right)\!\cdot\!\left(\mathbf{I}\!+\!\lfloor\mathbf{A}\!\left(\mathbf{w}\right)^T\!\Delta\!\mathbf{w}\rfloor\!\right) \label{eq:lside_sup}
\end{aligned} 
\end{equation}
and the right hand side of (\ref{lemma1_aft_sup})
\begin{equation}
\setlength{\abovedisplayskip}{0.15cm} 
\setlength{\belowdisplayskip}{0.15cm} 
\begin{aligned}
&\mathbf{y}^{-1}\cdot\mathbf{x}\cdot{\rm Exp}\left(\mathbf{u}+\Delta\mathbf{u}\right)\cdot{\rm Exp}\left(\mathbf{v}\right)\\
&\ =\mathbf{y}^{-1}\cdot\mathbf{x}\cdot{\rm Exp}\left(\mathbf{u}\right)\cdot\left(\mathbf{I}+\lfloor\mathbf{A}\left(\mathbf{u}\right)^T\Delta\mathbf{u}\rfloor\right)\cdot{\rm Exp}\left(\mathbf{v}\right)\\
&\ ={\rm Exp}\left(\mathbf{w}\right){\rm Exp}\left(-\mathbf{v}\right)\cdot\left(\mathbf{I}+\lfloor\mathbf{A}\left(\mathbf{u}\right)^T\Delta\mathbf{u}\rfloor\right)\cdot{\rm Exp}\left(\mathbf{v}\right)
\end{aligned} \nonumber
\end{equation}

Equating the two sides of (\ref{lemma1_aft_sup}) leads to
\begin{equation}
\setlength{\abovedisplayskip}{0.15cm} 
\setlength{\belowdisplayskip}{0.15cm} 
\begin{aligned}
    \mathbf{A}\left(\mathbf{w}\right)^T\Delta\mathbf{w}={\rm Exp}\left(-\mathbf{v}\right)\cdot\mathbf{A}\left(\mathbf{u}\right)^T \Delta\mathbf{u} \nonumber
\end{aligned}
\end{equation}
and as a result,
\begin{equation}
\setlength{\abovedisplayskip}{0.15cm} 
\setlength{\belowdisplayskip}{0.15cm} 
\begin{aligned}
    \frac{\partial \left( \left(\left(\mathbf{x}\boxplus\mathbf{u}\right)\oplus \mathbf{v}\right)\boxminus\mathbf{y} \right)}{\partial \mathbf{u} }=\frac{\Delta\mathbf{w}}{\Delta\mathbf{u}}=\mathbf{A}\left(\mathbf{w}\right)^{-T}{\rm Exp}\left(-\mathbf{v}\right)\mathbf{A}\left(\mathbf{u}\right)^T \nonumber
\end{aligned}
\end{equation}

The equation (\ref{SO3:partial1_sup}) is obtained.

(\ref{SO3:partial2_sup}) is able to be calculated in the same way:

Using the same denotation of $\mathbf{w}$ and based on the equation (\ref{eq:w_sup}), a small variation $\Delta \mathbf{v}$ in $\mathbf{v}$ causes a small variation $\Delta \mathbf{w}$ in $\mathbf{w}$, which is subject to
\begin{equation}
\setlength{\abovedisplayskip}{0.15cm} 
\setlength{\belowdisplayskip}{0.15cm} 
    {\rm Exp}\left(\mathbf{w}+\Delta\mathbf{w}\right)=\mathbf{y}^{-1}\cdot\mathbf{x}\cdot{\rm Exp}\left(\mathbf{u}\right)\cdot{\rm Exp}\left(\mathbf{v}+\Delta\mathbf{v}\right) \label{prove:partial2_sup}
\end{equation}

Then the left side of (\ref{prove:partial2_sup}) is the same as shown in (\ref{eq:lside_sup}), and the right side of it is:
\begin{equation}
\setlength{\abovedisplayskip}{0.15cm} 
\setlength{\belowdisplayskip}{0.15cm} 
\begin{aligned}
&\mathbf{y}^{-1}\cdot\mathbf{x}\cdot{\rm Exp}\left(\mathbf{u}\right)\cdot{\rm Exp}\left(\mathbf{v}+\Delta\mathbf{v}\right)\\
&\ =\mathbf{y}^{-1}\cdot\mathbf{x}\cdot{\rm Exp}\left(\mathbf{u}\right)\cdot{\rm Exp}\left(\mathbf{v}\right)\cdot\left(\mathbf{I}+\lfloor\mathbf{A}\left(\mathbf{v}\right)^T\Delta\mathbf{v}\rfloor\right)\\
&\ ={\rm Exp}\left(\mathbf{w}\right)\cdot\left(\mathbf{I}+\lfloor\mathbf{A}\left(\mathbf{v}\right)^T\Delta\mathbf{v}\rfloor\right)
\end{aligned} \nonumber
\end{equation}

Equating the two sides of (\ref{prove:partial2_sup}) leads to
\begin{equation}
\setlength{\abovedisplayskip}{0.15cm} 
\setlength{\belowdisplayskip}{0.15cm} 
\mathbf{A}\left(\mathbf{w}\right)^T\Delta\mathbf{w}=\mathbf{A}\left(\mathbf{v}\right)^T\Delta\mathbf{v}
\end{equation}
and as a result:
\begin{equation}
\setlength{\abovedisplayskip}{0.15cm} 
\setlength{\belowdisplayskip}{0.15cm} 
    \frac{\partial(((\mathbf{x}\boxplus\mathbf{u})\oplus\mathbf{v})\boxminus\mathbf{y})}{\partial\mathbf{v}}=\frac{\Delta\mathbf{w}}{\Delta\mathbf{v}}=\mathbf{A}(\mathbf{w})^{-T}\mathbf{A}(\mathbf{v})
\end{equation}

The equation (\ref{SO3:partial2_sup}) is proved.

\textbf{\textit{3. 2-sphere, $\mathcal{S} = \mathbb{S}^2({r}) \triangleq \{\mathbf{x}\in\mathbb{R}^3|\Vert\mathbf{x}\Vert=  r, r > 0\}$:}} 
\begin{equation}
\setlength{\abovedisplayskip}{0.15cm} 
\setlength{\belowdisplayskip}{0.15cm} 
    \label{S2:def_sup1}
    \begin{aligned}
     \mathbf{x}\boxplus\mathbf{u}&=\mathbf{R}(\mathbf{B}(\mathbf{x})\mathbf{u})\cdot\mathbf{x}\\
     \mathbf{y}\boxminus\mathbf{x}&=\mathbf{B}\left(\mathbf{x}\right)^T \left(\theta \frac{\lfloor\mathbf{x}\rfloor\mathbf{y}}{\|\lfloor\mathbf{x}\rfloor\mathbf{y}\|}\right), \theta \!= \! \rm{atan2}\left(\|\lfloor\mathbf{x}\rfloor\mathbf{y}\|,\mathbf{x}^T\mathbf{y}\right)\\
     \mathbf{x}\oplus\mathbf{v}&=\mathbf{R}(\mathbf{v})\cdot\mathbf{x}\\
     \frac{\partial\left(\!\left(\!\left(\!\mathbf{x}\boxplus\mathbf{u}\right)\oplus \mathbf{v}\right)\boxminus\mathbf{y}\right)}{\partial\mathbf{u}}\!&=\!\mathbf{N}\! \left(\!(\mathbf{x}\! \boxplus \! \mathbf{u}) \! \oplus \! \mathbf{v},\!\mathbf{y} \! \right) \! {\mathbf R}(\mathbf{v})  \mathbf{M}\left(\mathbf{x},\!\mathbf{u}\right)\\
     \frac{\partial\left(\!\left(\!\left(\!\mathbf{x}\boxplus\mathbf{u}\right)\oplus \mathbf{v}\right)\boxminus\mathbf{y}\right)}{\partial\mathbf{v}}\!&=\!-\mathbf{N}\! \left(\!(\mathbf{x}\! \boxplus \! \mathbf{u}) \! \oplus \! \mathbf{v},\!\mathbf{y} \! \right) \! {\mathbf R}(\mathbf{v}) \lfloor\mathbf{x}\!\boxplus\!\mathbf{u}\rfloor\! \mathbf{A}\!\left(\!\mathbf{v}\!\right)^T
    \end{aligned}
\end{equation}
where $\mathbf{R}(\mathbf{w})=\rm{Exp}(\mathbf{w})\in SO(3)$ as a rotation about an axis-angle represented by the vector $\mathbf{w}\in\mathbb{R}^3$, and $\mathbf{B}(\mathbf{x})\in\mathbb{R}^{3\times 2}=\begin{bmatrix}\mathbf{b}_1&\mathbf{b}_2\end{bmatrix}$ as two orthonormal bases lying in the tangent plane of $\mathbf{x}\in\mathbb{S}^2(r)$. $\mathbf{A}(\cdot)$ is defined in equation (\ref{eq:A_sup}) and
\begin{equation}
    \label{eq:S2_sup1}
\setlength{\abovedisplayskip}{0.15cm} 
\setlength{\belowdisplayskip}{0.15cm} 
    \begin{aligned}
     \mathbf{N}\!\left(\mathbf{x},\!\mathbf{y}\right)&\!=\!\frac{\partial\left(\mathbf{x}\boxminus\mathbf{y}\right)}{\partial\mathbf{x}}\!=\!\mathbf{B}\!\left(\mathbf{y}\right)^{T}\!\!\left(\frac{\theta}{\| \! \lfloor\mathbf{y} \! \rfloor\mathbf{x}\|}\lfloor\! \mathbf{y} \! \rfloor\!+\!\lfloor \! \mathbf{y} \! \rfloor\mathbf{x}\!\cdot\!\mathbf{P}\!\left(\! \mathbf{x}, \! \mathbf{y}\right)\right)\\
     \mathbf{M}\!\left(\mathbf{x},\!\mathbf{u}\right)&\! = \! \frac{\partial\left(\mathbf{x}\boxplus\mathbf{u}\right)}{\partial\mathbf{u}} \!=\! - {\rm Exp}(\mathbf B (\! \mathbf x \! ) \mathbf u) \lfloor \! \mathbf x \! \rfloor  \mathbf A \! \left(\mathbf B( \! \mathbf x \! ) \mathbf u \right)^{T} \mathbf B( \! \mathbf x \! ) \\
\mathbf{P}\!\left(\mathbf{x},\mathbf{y}\right) &\!=\!\frac{1}{{r}^4}\left(\frac{-\mathbf{y}^T\mathbf{x}\|\lfloor\mathbf{y}\rfloor\mathbf{x}\|+{r}^4\theta}{\|\lfloor\mathbf{y}\rfloor\mathbf{x}\|^3}\mathbf x^T \lfloor \mathbf y \rfloor^2 \!-\!\mathbf{y}^T \right)
\end{aligned}
\end{equation}

As stated in Sec.~\ref{boxplus_method}, the perturbation on $\mathbf{x}\in\mathbb{S}^2(r)$ can be achieved by rotating along a vector in the tangent plane of $\mathbf{x}$, the result would still remain on $\mathbb{S}^2(r)$. Thus the tangent planes can be chosen as the homeomorphic place of $\mathbb{S}^2(r)$ to define the $\boxplus$ operation and the perturbation could be parameterized as $\mathbf{u}\in\mathbb{R}^2$ in the frame of the tangent plane. In particular, the calculation of the $\boxplus$ is divided into two steps: 1) firstly the $2$-dimension vector $\mathbf{u}$ in the frame of local tangent plane at point $\mathbf{x}$ is transferred to a $3$-dimension vector by two orthonormal bases $\mathbf{b}_1, \mathbf{b}_2\in\mathbb{R}^3$ lying in this tangent plane, 2) then $\mathbf{x}$ is rotated along this $3$-dimension vector as shown in Fig.~\ref{fig:S2_boxplus} in Sec.~\ref{boxplus_method}. And the $\boxminus$ is defined as the inverse calculation of the $\boxplus$, which obtains the rotation vector in the frame of the local tangent plane between two states in $\mathbb{S}^2(r)$. It is noticed that $\mathbf{b}_1, \mathbf{b}_2$ is $\mathbf{x}$-depending and they can be denoted as $\mathbf{B}(\mathbf{x})=\begin{bmatrix}\mathbf{b}_1&\mathbf{b}_2\end{bmatrix}$. Further denote $\mathbf{R}(\mathbf{w})=\rm{Exp}(\mathbf{w})\in SO(3)$ as a rotation about an axis-angle represented by the vector $\mathbf{w}\in\mathbb{R}^3$, we have 

\begin{equation}
    \label{S2:def_sup}
\setlength{\abovedisplayskip}{0.15cm} 
\setlength{\belowdisplayskip}{0.15cm} 
    \begin{aligned}
     &\mathbf{x}\boxplus\mathbf{u}  \triangleq \mathbf{R}(\begin{bmatrix} \mathbf{b}_1 & \mathbf{b}_2
\end{bmatrix} \mathbf{u})\cdot\mathbf{x}=\mathbf{R}(\mathbf{B}(\mathbf{x})\mathbf{u})\cdot\mathbf{x}\\
     &\mathbf{y}\boxminus\mathbf{x}=\mathbf{B}\left(\mathbf{x}\right)^T \left(\theta \frac{\lfloor\mathbf{x}\rfloor\mathbf{y}}{\|\lfloor\mathbf{x}\rfloor\mathbf{y}\|}\right), \theta \!= \! \rm{atan2}\left(\|\lfloor\mathbf{x}\rfloor\mathbf{y}\|,\mathbf{x}^T\mathbf{y}\right)
    \end{aligned}
\end{equation}

The above results do not specify the basis $\mathbf B(\mathbf x)$, which can be made arbitrary as long as it forms an orthonormal basis in the tangent plane of $\mathbf x$. As an example, we could adopt the method in \cite{Hertzberg2011Sensorfusionsupp} (see Fig.~\ref{fig:S2_basis_sup}): rotate one of the three canonical basis $\mathbf{e}_i, i = 1,2,3$ of $\mathbb{R}^3$ to $\mathbf{x}$ (along the \text{\it geodesics}) and the rest two base vectors after the same rotation would form $\mathbf{B}(\mathbf x)$. i.e.,
\begin{equation}
    \label{e:Rx_sup}
\setlength{\abovedisplayskip}{0.15cm} 
\setlength{\belowdisplayskip}{0.15cm} 
    \begin{aligned}
     &\mathbf{R}_i({\mathbf{x}})\!=\!\mathbf{R}\left(\frac{\lfloor \mathbf{e}_i \rfloor \mathbf{x}}{\| \lfloor \mathbf{e}_i \rfloor \mathbf{x} \| } \text{atan2} \left(\|\lfloor \mathbf{e}_i \rfloor \mathbf{x}\|,\mathbf{e}^T_i \mathbf{x}\right)\right), \\
     &\mathbf B(\mathbf x) \!=\! \mathbf R_i ({\mathbf x})\begin{bmatrix}
     \mathbf e_j & \mathbf e_k
     \end{bmatrix}.
    \end{aligned}
\end{equation}
where $j = i+1, k = i + 2$ but wrapped below 3.

As stated in Sec.~\ref{boxplus_method}, in practical control systems, $\mathbb{S}^2(r)$ is usually used to represent a vector of fixed length $r$, which is driven by an input vector not lying in the defined local homeomorphic space (i.e., the tangent plane) of $\mathbb{S}^2(r)$. And because the perturbation on the $\mathbf{x}\in\mathbb{S}^2(r)$ would result in a rotation of $\mathbf{x}$ within the $\mathbb{S}^2(r)$ space. An $\oplus$ operation is defined as
\begin{equation}
\setlength{\abovedisplayskip}{0.15cm} 
\setlength{\belowdisplayskip}{0.15cm} 
    \mathbf{x}\oplus\mathbf{v}=\mathbf{R}(\mathbf{v})\cdot\mathbf{x}
\end{equation}
where $\mathbf{v}\in\mathbf{R}^3$.

Then the differentiations are calculated using chain rules as:
\begin{equation}
\setlength{\abovedisplayskip}{0.15cm} 
\setlength{\belowdisplayskip}{0.15cm} 
\begin{aligned}
     &\frac{\partial\left(\!\left(\!\left(\!\mathbf{x}\boxplus\mathbf{u}\right)\oplus \mathbf{v}\right)\boxminus\mathbf{y}\right)}{\partial\mathbf{u}}\!=\!\mathbf{N}\! \left(\!(\mathbf{x}\! \boxplus \! \mathbf{u}) \! \oplus \! \mathbf{v},\!\mathbf{y} \! \right) \! {\mathbf R}(\mathbf{v})  \mathbf{M}\left(\mathbf{x},\!\mathbf{u}\right)\\
     &\frac{\partial\left(\!\left(\!\left(\!\mathbf{x}\boxplus\mathbf{u}\right)\oplus \mathbf{v}\right)\boxminus\mathbf{y}\right)}{\partial\mathbf{v}}\!=\!-\mathbf{N}\! \left(\!(\mathbf{x}\! \boxplus \! \mathbf{u}) \! \oplus \! \mathbf{v},\!\mathbf{y} \! \right) \! {\mathbf R}(\mathbf{v}) \lfloor\mathbf{x}\!\boxplus\!\mathbf{u}\rfloor\! \mathbf{A}\!\left(\!\mathbf{v}\!\right)^T
\end{aligned}
\end{equation}
where in respective step of the chain rule, $\mathbf{N}\left(\mathbf{x},\mathbf{y}\right)$ and $\mathbf{M}\left(\mathbf{x},\mathbf{u}\right)$ are calculated as:
\begin{equation}
    \label{eq:S2_sup}
\setlength{\abovedisplayskip}{0.15cm} 
\setlength{\belowdisplayskip}{0.15cm} 
    \begin{aligned}
     \mathbf{N}\!\left(\mathbf{x},\!\mathbf{y}\right)&\!=\!\frac{\partial\left(\mathbf{x}\boxminus\mathbf{y}\right)}{\partial\mathbf{x}}\!=\!\mathbf{B}\!\left(\mathbf{y}\right)^{T}\!\!\left(\frac{\theta}{\| \! \lfloor\mathbf{y} \! \rfloor\mathbf{x}\|}\lfloor\! \mathbf{y} \! \rfloor\!+\!\lfloor \! \mathbf{y} \! \rfloor\mathbf{x}\!\cdot\!\mathbf{P}\!\left(\! \mathbf{x}, \! \mathbf{y}\right)\right)\\
     \mathbf{M}\!\left(\mathbf{x},\!\mathbf{u}\right)&\! = \! \frac{\partial\left(\mathbf{x}\boxplus\mathbf{u}\right)}{\partial\mathbf{u}} \!=\! - {\rm Exp}(\mathbf B (\! \mathbf x \! ) \mathbf u) \lfloor \! \mathbf x \! \rfloor  \mathbf A \! \left(\mathbf B( \! \mathbf x \! ) \mathbf u \right)^{T} \mathbf B( \! \mathbf x \! ) \\
\mathbf{P}\!\left(\mathbf{x},\mathbf{y}\right) &\!=\!\frac{1}{{r}^4}\left(\frac{-\mathbf{y}^T\mathbf{x}\|\lfloor\mathbf{y}\rfloor\mathbf{x}\|+{r}^4\theta}{\|\lfloor\mathbf{y}\rfloor\mathbf{x}\|^3}\mathbf x^T \lfloor \mathbf y \rfloor^2 \!-\!\mathbf{y}^T \right)
\end{aligned}
\end{equation}

\begin{figure}[t]
	\centering
	\includegraphics[width=0.7\columnwidth]{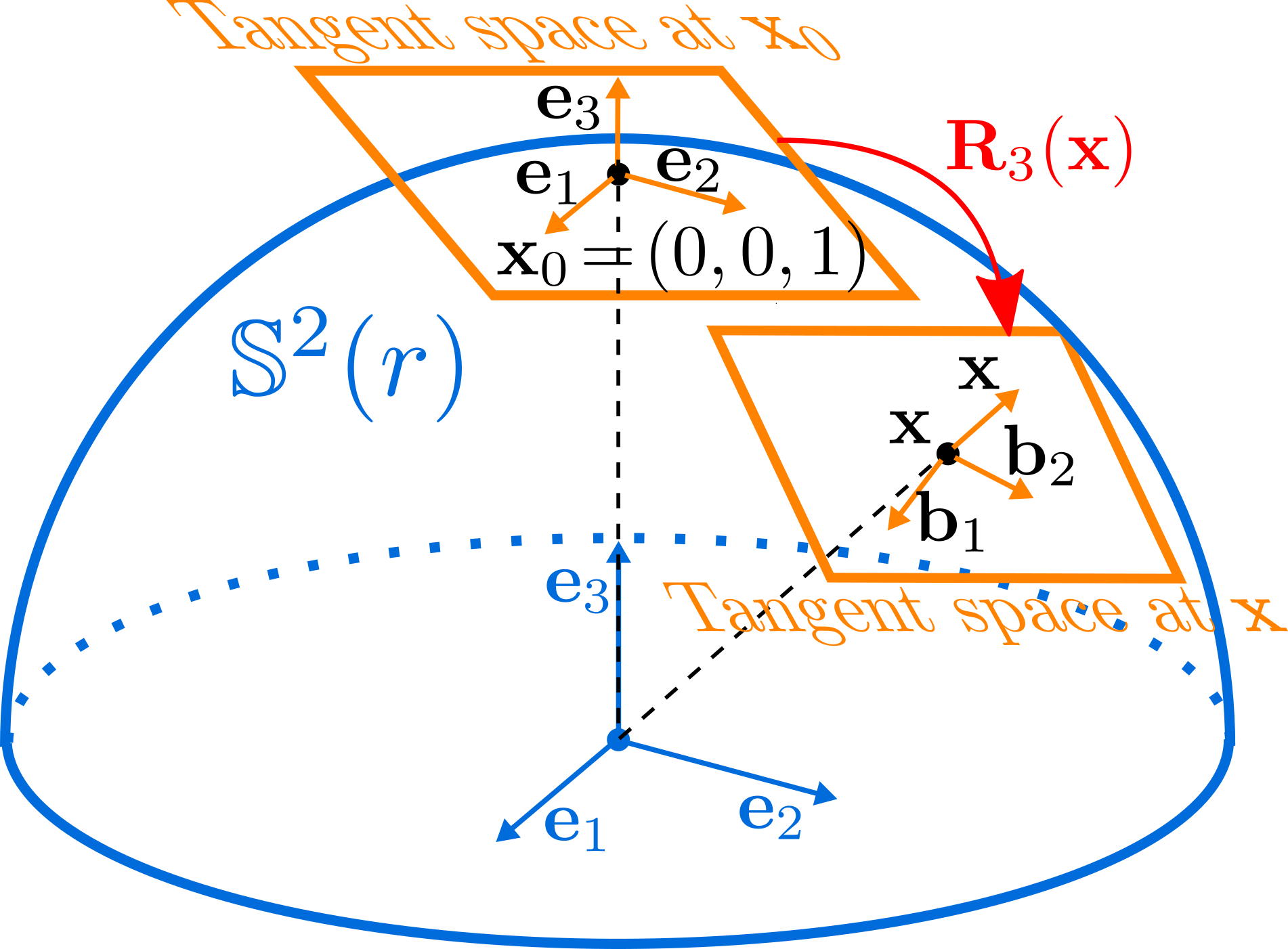}
	\caption{Method adopted in~\protect\cite{Hertzberg2011Sensorfusionsupp} to obtain the orthonormal basis ($\mathbf{b}_1$, $\mathbf{b}_2$) in the tangent plane of $\mathbf{x}\in\mathbb{S}^2(r)$. $\mathbf{R}_3(\mathbf{x})$ is defined by rotating $\mathbf{e}_3$ to $\mathbf{x}$, $\mathbf{b}_1$ and $\mathbf{b}_2$ are obtained through rotating $\mathbf{e}_1$ and $\mathbf{e}_2$ by $\mathbf{R}_3(\mathbf{x})$ respectively.
		}
		\label{fig:S2_basis_sup}
	\vspace{-0.3cm}
\end{figure}

\subsection{A partial differential on \texorpdfstring{ $SO(3)$}{Lg}}~\label{app:pdiff_on_SO3}
\begin{lemma}~\label{lemma:pd_SO3}
If $\mathbf{x}\in SO(3)$, $\mathbf{a}\in\mathbb{R}^n$, then $\left.\frac{\partial(\mathbf{x}\boxplus\mathbf{u})\cdot\mathbf{a}}{\partial\mathbf{u}}\right|_{\mathbf{u}=\mathbf{0}}=-\mathbf{x}\cdot\lfloor\mathbf{a}\rfloor$
\end{lemma}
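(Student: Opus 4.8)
The plan is to reduce the statement to a first-order Taylor expansion of the exponential map together with the anticommutativity of the cross product. First I would substitute the $SO(3)$ definition of $\boxplus$ from (\ref{SO3:boxplus_sup}), writing
\begin{equation}
(\mathbf{x}\boxplus\mathbf{u})\cdot\mathbf{a} = \mathbf{x}\cdot{\rm Exp}(\mathbf{u})\cdot\mathbf{a}, \nonumber
\end{equation}
so that only the factor ${\rm Exp}(\mathbf{u})\cdot\mathbf{a}$ depends on $\mathbf{u}$ and $\mathbf{x}$ comes out as a constant left multiplier.

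Next I would expand ${\rm Exp}(\mathbf{u}) = {\rm exp}(\lfloor\mathbf{u}\rfloor) = \mathbf{I} + \lfloor\mathbf{u}\rfloor + o(\|\mathbf{u}\|)$ about $\mathbf{u}=\mathbf{0}$, which gives
\begin{equation}
{\rm Exp}(\mathbf{u})\cdot\mathbf{a} = \mathbf{a} + \lfloor\mathbf{u}\rfloor\mathbf{a} + o(\|\mathbf{u}\|). \nonumber
\end{equation}
The key algebraic step is then to rewrite the linear term using the identity $\lfloor\mathbf{u}\rfloor\mathbf{a} = \mathbf{u}\times\mathbf{a} = -\mathbf{a}\times\mathbf{u} = -\lfloor\mathbf{a}\rfloor\mathbf{u}$, so that the dependence on $\mathbf{u}$ becomes linear with an explicit matrix $-\lfloor\mathbf{a}\rfloor$. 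Multiplying on the left by $\mathbf{x}$ yields $(\mathbf{x}\boxplus\mathbf{u})\cdot\mathbf{a} = \mathbf{x}\mathbf{a} - \mathbf{x}\lfloor\mathbf{a}\rfloor\mathbf{u} + o(\|\mathbf{u}\|)$, and differentiating with respect to $\mathbf{u}$ and evaluating at $\mathbf{u}=\mathbf{0}$ gives $-\mathbf{x}\cdot\lfloor\mathbf{a}\rfloor$, as claimed.

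There is essentially no hard part here: the only things to be careful about are citing the correct first-order expansion of ${\rm exp}$ on $\mathfrak{so}(3)$ (equivalently, that the differential of ${\rm Exp}$ at the origin is the identity, which is consistent with $\mathbf{A}(\mathbf{0}) = \mathbf{I}$ in (\ref{eq:A_sup})) and getting the sign right when converting $\lfloor\mathbf{u}\rfloor\mathbf{a}$ to $-\lfloor\mathbf{a}\rfloor\mathbf{u}$. One could alternatively phrase the argument via a directional derivative: for a fixed direction $\bm{\delta}\in\mathbb{R}^3$, compute $\frac{d}{dt}\big|_{t=0}\mathbf{x}\,{\rm Exp}(t\bm{\delta})\,\mathbf{a} = \mathbf{x}\lfloor\bm{\delta}\rfloor\mathbf{a} = -\mathbf{x}\lfloor\mathbf{a}\rfloor\bm{\delta}$, and read off the Jacobian as $-\mathbf{x}\lfloor\mathbf{a}\rfloor$ since $\bm{\delta}$ was arbitrary; this avoids writing down the remainder term explicitly.
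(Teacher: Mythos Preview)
Your proposal is correct and follows essentially the same approach as the paper: substitute the $SO(3)$ definition of $\boxplus$, use the first-order expansion ${\rm Exp}(\mathbf{u}) \approx \mathbf{I} + \lfloor\mathbf{u}\rfloor$, and apply the identity $\lfloor\mathbf{u}\rfloor\mathbf{a} = -\lfloor\mathbf{a}\rfloor\mathbf{u}$ to read off the Jacobian. The paper phrases this as a difference-quotient limit rather than a Taylor expansion with remainder, but the content is identical.
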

\begin{proof}
For $\mathbf{x}\in SO(3)$ and a vector $\mathbf{a}\in\mathbb{R}^3$:
\begin{equation}
\setlength{\abovedisplayskip}{0.15cm} 
\setlength{\belowdisplayskip}{0.15cm} 
    \begin{aligned}
       \left.\frac{\partial\left((\mathbf{x}\boxplus\mathbf{u})\cdot\mathbf{a}\right)}{\partial\mathbf{u}}\right|_{\mathbf{u}=\mathbf{0}}&= \lim\limits_{\mathbf{u}\to 0} \frac{(\mathbf{x}\boxplus\mathbf{u})\cdot\mathbf{a}-\mathbf{x}\cdot\mathbf{a}}{\mathbf{u}}\\
       &=\lim\limits_{\mathbf{u}\to 0}\frac{\mathbf{x}\cdot\rm{Exp}(\mathbf{u})\cdot\mathbf{a}-\mathbf{x}\cdot\mathbf{a}}{\mathbf{u}}\\
       &=\lim\limits_{\mathbf{u}\to 0}\frac{\mathbf{x}\cdot(\mathbf{I}+\lfloor\mathbf{u}\rfloor)\cdot\mathbf{a}-\mathbf{x}\cdot\mathbf{a}}{\mathbf{u}}\\
       &=\lim\limits_{\mathbf{u}\to 0}\frac{\mathbf{x}\cdot\lfloor\mathbf{u}\rfloor\cdot\mathbf{a}}{\mathbf{u}}=\lim\limits_{\mathbf{u}\to 0}\frac{-\mathbf{x}\cdot\lfloor\mathbf{a}\rfloor\cdot\mathbf{u}}{\mathbf{u}}\\
       &=-\mathbf{x}\cdot\lfloor\mathbf{a}\rfloor
    \end{aligned}
\end{equation}
\end{proof}

\bibliography{paper} 

\begin{thebibliography}{10}
\providecommand{\url}[1]{#1}
\csname url@rmstyle\endcsname
\providecommand{\newblock}{\relax}
\providecommand{\bibinfo}[2]{#2}
\providecommand\BIBentrySTDinterwordspacing{\spaceskip=0pt\relax}
\providecommand\BIBentryALTinterwordstretchfactor{4}
\providecommand\BIBentryALTinterwordspacing{\spaceskip=\fontdimen2\font plus
\BIBentryALTinterwordstretchfactor\fontdimen3\font minus
  \fontdimen4\font\relax}
\providecommand\BIBforeignlanguage[2]{{%
\expandafter\ifx\csname l@#1\endcsname\relax
\typeout{** WARNING: IEEEtran.bst: No hyphenation pattern has been}%
\typeout{** loaded for the language `#1'. Using the pattern for}%
\typeout{** the default language instead.}%
\else
\language=\csname l@#1\endcsname
\fi
#2}}

\bibitem{markley2003attitude}
F.~L. Markley, ``Attitude error representations for kalman filtering,''
  \emph{J. Guid. Control Dyn.}, vol.~26, no.~2, pp. 311--317, 2003.

\bibitem{trawny2005indirect}
N.~Trawny and S.~I. Roumeliotis, ``Indirect kalman filter for 3d attitude
  estimation,'' \emph{University of Minnesota, Dept. of Comp. Sci. \& Eng.,
  Tech. Rep}, vol.~2, p. 2005, 2005.

\bibitem{markley2014fundamentals}
F.~L. Markley and J.~L. Crassidis, \emph{Fundamentals of spacecraft attitude
  determination and control}.\hskip 1em plus 0.5em minus 0.4em\relax Springer,
  2014, vol.~33.

\bibitem{Mirzaei2008A}
F.~M. Mirzaei and S.~I. Roumeliotis, ``A kalman filter-based algorithm for
  imu-camera calibration: Observability analysis and performance evaluation,''
  \emph{IEEE Trans. Robot.}, vol.~24, no.~5, pp. p.1143--1156, 2008.

\bibitem{kelly2011visual}
J.~Kelly and G.~S. Sukhatme, ``Visual-inertial sensor fusion: Localization,
  mapping and sensor-to-sensor self-calibration,'' \emph{Int J Rob Res},
  vol.~30, no.~1, pp. 56--79, 2011.

\bibitem{sola2017quaternion}
J.~Sola, ``Quaternion kinematics for the error-state kalman filter,''
  \emph{arXiv preprint arXiv:1711.02508}, 2017.

\bibitem{Kleinert2010Errorstate}
M.~{Kleinert} and S.~{Schleith}, ``Inertial aided monocular slam for gps-denied
  navigation,'' in \emph{2010 IEEE Conf. Multisensor Fusion and Integration},
  2010, pp. 20--25.

\bibitem{Mourikis2007ICRA}
A.~I. {Mourikis} and S.~I. {Roumeliotis}, ``A multi-state constraint kalman
  filter for vision-aided inertial navigation,'' in \emph{Proc. 2007 IEEE Int.
  Conf. Robotics and Automation}, 2007, pp. 3565--3572.

\bibitem{Li2013EKFbasedVIO}
M.~Li and A.~Mourikis, ``High-precision, consistent ekf-based visual–inertial
  odometry,'' \emph{Int J Rob Res}, vol.~32, pp. 690--711, 05 2013.

\bibitem{SLynen2013sensorfusion}
S.~{Lynen}, M.~W. {Achtelik}, S.~{Weiss}, M.~{Chli}, and R.~{Siegwart}, ``A
  robust and modular multi-sensor fusion approach applied to mav navigation,''
  in \emph{2013 IEEE Int. Conf. Intelligent Rob. and Syst.}, 2013, pp.
  3923--3929.

\bibitem{bloesch2017iterated}
M.~Bloesch, M.~Burri, S.~Omari, M.~Hutter, and R.~Siegwart, ``Iterated extended
  kalman filter based visual-inertial odometry using direct photometric
  feedback,'' \emph{Int J Rob Res}, vol.~36, no.~10, pp. 1053--1072, 2017.

\bibitem{huai2018robocentric}
Z.~Huai and G.~Huang, ``Robocentric visual-inertial odometry,'' in \emph{2018
  IEEE/RSJ Int. Conf. Intelligent Robots and Systems (IROS)}.\hskip 1em plus
  0.5em minus 0.4em\relax IEEE, 2018, pp. 6319--6326.

\bibitem{hesch2010laser}
J.~A. Hesch, F.~M. Mirzaei, G.~L. Mariottini, and S.~I. Roumeliotis, ``A
  laser-aided inertial navigation system (l-ins) for human localization in
  unknown indoor environments,'' in \emph{2010 IEEE Int. Conf. Robotics and
  Automation}.\hskip 1em plus 0.5em minus 0.4em\relax IEEE, 2010, pp.
  5376--5382.

\bibitem{qin2020lins}
C.~Qin, H.~Ye, C.~E. Pranata, J.~Han, S.~Zhang, and M.~Liu, ``Lins: A
  lidar-inertial state estimator for robust and efficient navigation,'' in
  \emph{Int. Conf. Robotics and Automation (ICRA)}.\hskip 1em plus 0.5em minus
  0.4em\relax IEEE, 2020, pp. 8899--8906.

\bibitem{xu2020fast}
W.~Xu and F.~Zhang, ``Fast-lio: A fast, robust lidar-inertial odometry package
  by tightly-coupled iterated kalman filter,'' \emph{arXiv preprint
  arXiv:2010.08196}, 2020.

\bibitem{Lu2019IMU}
G.~Lu and F.~Zhang, ``Imu-based attitude estimation in the presence of
  narrow-band noise,'' \emph{IEEE ASME Trans Mechatron}, vol.~24, no.~2, pp.
  841--852, 2019.

\bibitem{lin2020decentralized}
J.~Lin, X.~Liu, and F.~Zhang, ``A decentralized framework for simultaneous
  calibration, localization and mapping with multiple lidars,'' 2020.

\bibitem{huang2013quadratic}
G.~P. Huang, A.~I. Mourikis, and S.~I. Roumeliotis, ``A quadratic-complexity
  observability-constrained unscented kalman filter for slam,'' \emph{IEEE
  Trans. Robot.}, vol.~29, no.~5, pp. 1226--1243, 2013.

\bibitem{huang2007convergence}
S.~Huang and G.~Dissanayake, ``Convergence and consistency analysis for
  extended kalman filter based slam,'' \emph{IEEE Transactions on robotics},
  vol.~23, no.~5, pp. 1036--1049, 2007.

\bibitem{grewal2007global}
M.~S. Grewal, L.~R. Weill, and A.~P. Andrews, \emph{Global positioning systems,
  inertial navigation, and integration}.\hskip 1em plus 0.5em minus 0.4em\relax
  John Wiley \& Sons, 2007.

\bibitem{lynch2017modern}
K.~M. Lynch and F.~C. Park, \emph{Modern Robotics}.\hskip 1em plus 0.5em minus
  0.4em\relax Cambridge University Press, 2017.

\bibitem{KLUMPP1974133}
A.~R. Klumpp, ``Apollo lunar descent guidance,'' \emph{Automatica}, vol.~10,
  no.~2, pp. 133 -- 146, 1974.

\bibitem{Davison2003Real}
A.~J. Davison, ``Real-time simultaneous localisation and mapping with a single
  camera,'' in \emph{null}.\hskip 1em plus 0.5em minus 0.4em\relax IEEE, 2003,
  p. 1403.

\bibitem{hertzberg2013integrating}
C.~Hertzberg, R.~Wagner, U.~Frese, and L.~Schr{\"o}der, ``Integrating generic
  sensor fusion algorithms with sound state representations through
  encapsulation of manifolds,'' \emph{Inf Fusion}, vol.~14, no.~1, pp. 57--77,
  2013.

\bibitem{Hertzberg08aframework}
C.~Hertzberg, ``A framework for sparse, non-linear least squares problems on
  manifolds,'' 2008.

\bibitem{wuest2019online}
V.~W{\"u}est, V.~Kumar, and G.~Loianno, ``Online estimation of geometric and
  inertia parameters for multirotor aerial vehicles,'' in \emph{2019 Int, Conf.
  Robotics and Automation (ICRA)}.\hskip 1em plus 0.5em minus 0.4em\relax IEEE,
  2019, pp. 1884--1890.

\bibitem{kummerle2011g}
R.~K{\"u}mmerle, G.~Grisetti, H.~Strasdat, K.~Konolige, and W.~Burgard, ``g 2
  o: A general framework for graph optimization,'' in \emph{2011 Int. Conf.
  Robotics and Automation}.\hskip 1em plus 0.5em minus 0.4em\relax IEEE, 2011,
  pp. 3607--3613.

\bibitem{ceres-solver}
S.~Agarwal and K.~Mierle, ``Ceres solver,'' \url{http://ceres-solver.org}.

\bibitem{wagner2011rapid}
R.~Wagner, O.~Birbach, and U.~Frese, ``Rapid development of manifold-based
  graph optimization systems for multi-sensor calibration and slam,'' in
  \emph{2011 IEEE/RSJ Int. Conf. Intelligent Robots and Systems}.\hskip 1em
  plus 0.5em minus 0.4em\relax IEEE, 2011, pp. 3305--3312.

\bibitem{Grisetti2010ITSM}
G.~{Grisetti}, R.~{Kümmerle}, C.~{Stachniss}, and W.~{Burgard}, ``A tutorial
  on graph-based slam,'' \emph{IEEE Intell. Transp. Syst. Mag.}, vol.~2, no.~4,
  pp. 31--43, 2010.

\bibitem{Michael2018parameter}
M.~Burri, M.~Bloesch, Z.~Taylor, R.~Siegwart, and J.~Nieto, ``A framework for
  maximum likelihood parameter identification applied on mavs,'' \emph{J. Field
  Robot.}, vol.~35, no.~1, pp. 5--22, 2018.

\bibitem{carmo1992riemannian}
M.~P.~d. Carmo, \emph{Riemannian geometry}.\hskip 1em plus 0.5em minus
  0.4em\relax Birkh{\"a}user, 1992.

\bibitem{lee2013smooth}
J.~M. Lee, ``Smooth manifolds,'' in \emph{Introduction to Smooth
  Manifolds}.\hskip 1em plus 0.5em minus 0.4em\relax Springer, 2013, pp. 1--31.

\bibitem{murray1994mathematical}
R.~M. Murray, Z.~Li, S.~S. Sastry, and S.~S. Sastry, \emph{A mathematical
  introduction to robotic manipulation}.\hskip 1em plus 0.5em minus 0.4em\relax
  CRC press, 1994.

\bibitem{bloesch2017iteratedsupp}
M.~Bloesch, M.~Burri, S.~Omari, M.~Hutter, and R.~Siegwart, ``Iterated extended
  kalman filter based visual-inertial odometry using direct photometric
  feedback,'' \emph{Int J Rob Res}, vol.~36, no.~10, pp. 1053--1072, 2017.

\bibitem{bell1993iterated}
B.~M. Bell and F.~W. Cathey, ``The iterated kalman filter update as a
  gauss-newton method,'' \emph{IEEE Trans. Automat. Contr.}, vol.~38, no.~2,
  pp. 294--297, 1993.

\bibitem{lin2020loam}
J.~Lin and F.~Zhang, ``Loam livox: A fast, robust, high-precision lidar
  odometry and mapping package for lidars of small fov,'' in \emph{2020 IEEE
  Int. Conf. Robotics and Automation (ICRA)}.\hskip 1em plus 0.5em minus
  0.4em\relax IEEE, 2020, pp. 3126--3131.

\bibitem{bullo1995proportionalsupp}
F.~Bullo and R.~M. Murray, ``Proportional derivative (pd) control on the
  euclidean group,'' in \emph{European control conference}, vol.~2, 1995, pp.
  1091--1097.

\bibitem{Hertzberg2011Sensorfusionsupp}
\BIBentryALTinterwordspacing
C.~Hertzberg, R.~Wagner, U.~Frese, and L.~Schr{\"{o}}der, ``Integrating generic
  sensor fusion algorithms with sound state representations through
  encapsulation of manifolds,'' \emph{CoRR}, vol. abs/1107.1119, 2011.
  [Online]. Available: \url{http://arxiv.org/abs/1107.1119}
\BIBentrySTDinterwordspacing

\end{thebibliography}

\end{document}